\documentclass{article}

 \PassOptionsToPackage{authoryear,round,compress}{natbib}
\usepackage[preprint]{neurips_2024}

\usepackage[utf8]{inputenc} 
\usepackage[T1]{fontenc}    
\usepackage[colorlinks=true,linkcolor=blue,citecolor=blue]{hyperref}       
\usepackage{url}            
\usepackage{booktabs}       
\usepackage{amsfonts}       
\usepackage{nicefrac}       
\usepackage{microtype}      
\usepackage{graphicx}
\usepackage{helvet}
\usepackage{courier}
\usepackage{amsmath}
\usepackage{amssymb}
\usepackage{tabularx}
\usepackage{mathtools}
\usepackage{mathabx}
\usepackage{varwidth}
\usepackage{graphicx}
\usepackage{bbm}
\usepackage{bm}
\usepackage{dsfont}
\usepackage{colortbl}
\usepackage[clock]{ifsym}
\usepackage{enumitem}
\usepackage{multicol} 
\usepackage[utf8]{inputenc}
\usepackage{booktabs}  
\usepackage{algorithm}
\usepackage{algorithmic}
\usepackage{wrapfig}
\usepackage{amsthm}
\usepackage{multirow}
\usepackage{subcaption}
\usepackage{setspace}
\usepackage{pifont}
\usepackage{array}
\usepackage[capitalise]{cleveref}
\usepackage{comment}
\usepackage{thm-restate}
\usepackage[dvipsnames]{xcolor}
\usepackage{enumitem}

\definecolor{sexybrown}{HTML}{8B4513}

\definecolor{pierCite}{HTML}{00008B}

\usepackage[toc,page,header]{appendix}
\usepackage{minitoc}

\newcolumntype{C}[1]{>{\centering\arraybackslash}m{#1}}
\newcolumntype{R}[1]{>{\raggedright\arraybackslash}m{#1}}

\usepackage[font=small,skip=10pt]{caption}
\usepackage{setspace}
\DeclareCaptionFormat{myformat}{#1#2#3\hrulefill}

\Crefname{equation}{Eq.}{Eqs.}
\Crefname{figure}{Fig.}{Figs.}
\Crefname{tabular}{Tab.}{Tabs.}
\Crefname{theorem}{Thm.}{Thms.}
\Crefname{lemma}{Lem.}{Lems.}
\Crefname{proposition}{Prop.}{Props.}
\Crefname{definition}{Def.}{Defs.}
\Crefname{algorithm}{Alg.}{Algs.}
\Crefname{corollary}{Corol.}{Corol.}
\Crefname{section}{Sec.}{Sec.}
\Crefname{condition}{Cond.}{Conds.}

\newtheorem{lemma}{Lemma}
\newtheorem{theorem}{Theorem}
\newtheorem{corollary}{Corollary}
\newtheorem{proposition}{Proposition}

\newtheorem{remark}{Remark}

\theoremstyle{definition}
\newtheorem{definition}{Definition}
\newtheorem{example}{Example}

\newcommand{\I}{\mathbbm{1}}
\newcommand{\G}{\mathcal{G}}
\newcommand{\Dc}{\mathcal{D}} 
\newcommand{\M}{\mathcal{M}}

\def\*#1{\mathbf{#1}}
\def\1#1{\mathcal{#1}}
\def\2#1{\mathscr{#1}}
\def\3#1{\mathbb{#1}}

\DeclareBoldMathCommand{\u}{u}
\DeclareBoldMathCommand{\U}{U}
\DeclareBoldMathCommand{\v}{v}
\DeclareBoldMathCommand{\V}{V}
\DeclareBoldMathCommand{\x}{x}
\DeclareBoldMathCommand{\X}{X}
\DeclareBoldMathCommand{\y}{y}
\DeclareBoldMathCommand{\Y}{Y}
\DeclareBoldMathCommand{\z}{z}
\DeclareBoldMathCommand{\Z}{Z}
\DeclareBoldMathCommand{\c}{c}
\DeclareBoldMathCommand{\C}{C}
\DeclareBoldMathCommand{\r}{r}
\DeclareBoldMathCommand{\R}{R}
\DeclareBoldMathCommand{\s}{s}
\DeclareBoldMathCommand{\S}{S}
\DeclareBoldMathCommand{\w}{w}
\DeclareBoldMathCommand{\W}{W}
\DeclareBoldMathCommand{\t}{t}
\DeclareBoldMathCommand{\T}{T}
\DeclareBoldMathCommand{\A}{A}
\DeclareBoldMathCommand{\a}{a}
\DeclareBoldMathCommand{\B}{B}
\DeclareBoldMathCommand{\b}{b}
\DeclareBoldMathCommand{\D}{D}
\DeclareBoldMathCommand{\d}{d}

\newcommand{\PA}{\mathbf{Pa}}

\usepackage{tikz}
\usetikzlibrary{arrows.meta}
\usetikzlibrary{shapes}
\usetikzlibrary{decorations.markings}
\tikzstyle{SCM}=[>={Stealth},
	every node/.style={inner sep=0.25em, transform shape},
	conf-path/.style={<->,dashed,
    	every edge/.append style={in=100,out=80}
    },
    circle-path/.style={draw \circle ->},
    removed/.style={opacity=0.2},
    cut/.style={color=Red!50},
    sel-node/.style={rectangle,inner sep=0.15em, draw, color=cyan, fill=white},
    sig-node/.style={rectangle,inner sep=0.3em, draw, color=Green}
]

\title{Two Layers of Instability in Causal Estimation}

\author{%
  Alexis Bellot$^*$\\
}

\begin{document}
\hypersetup{
    colorlinks,
   linkcolor={pierCite},
    citecolor={pierCite},
    urlcolor={pierCite}
}

\doparttoc 
\faketableofcontents 

\part{} 

\maketitle
\begin{abstract}
There is a precise sense in which drawing causal inferences from observational data is \textit{hard}, even when identifiability is assumed. In particular, \cite{robins1997toward,robins2003uniform} showed that causal effects can be \textit{discontinuous} as a function of the data distribution: two arbitrarily close data distributions might correspond to different causal effects. This is a fact independent of the choice of estimator; however, not all estimators are equally unstable. Our contribution is to surface a second layer of instability that depends on the choice of estimator. We show that many standard point estimates can be read as point summaries of multimodal distributions over the space of structural causal models. As such, estimators can jump discontinuously in the data distribution. This defines a taxonomy of estimators that admits a decision-theoretic reading: stability depends on whether the implicit loss function an estimator optimizes is aligned with the causal effect itself. Specifically, inverse propensity weighted estimators and regression estimators are examples of discontinuous summaries, while explicit posterior means and medians are shown to be continuous.
\end{abstract}

\section{Introduction}
For many practical purposes we want to know what would happen were we to \textit{act} upon a given system. The answers to such questions depend not just on the data distribution, but also on deeper features of underlying data-generating models or mechanisms \citep{pearl2009causality}.

This statement surfaces a structural concern: the causal effect may not be reducible to a functional of the data distribution, in general, unless some additional knowledge such as the (partial) dependency structure encoded in a causal diagram or equivalence class is assumed. In the literature, precise conditions have been derived to determine when such a reduction is possible \citep{tian2002general,bareinboim2022pearl,shpitser2006identification,shpitser2012validity,pearl2022external,huang2008completeness}. On the other hand, there is a statistical concern that asks how to estimate an identified causal effect from finite samples of the data distribution when available, see e.g.\ \citep{jung2021estimating,kennedy2023towards}. A causal effect being identifiable, however, does not necessarily entail well-posed estimation as in general two arbitrarily close distributions $P$ and $Q$ may correspond to different causal effects even when the causal effect is identifiable. This property makes causal estimation \emph{discontinuous} in the data distribution and has important practical implications as with a finite sample of data, we can only know the data distribution up to a tolerance.

This problem has parallels in the nonparametric regression literature \citep{stone1980optimal} but can be significantly more severe in the context of causal estimation. Versions of this result were first shown by \citet{robins1997toward,robins2003uniform} and have been revisited more recently by \citet{schulman2016stability,gordon2021condition,maclaren2019can}. In this context, \citet{aronow2021nonparametric} highlight the advantage that experimental data can confer even when queries are identifiable from observational data (as it leads to a more stable estimation problem).

The contribution we make in this paper is to show that the \textbf{choice of estimator} can add a \emph{second layer} of instability that is independent of the identification map $P \mapsto \Psi(P) := \3E_P[Y_x]$ itself.
We argue that many standard point estimates can be read as $\hat\Psi(P) = T(\Pi_P)$, where $\Pi_P$ is a distribution over SCMs compatible with $P$ (explicit under Bayes, or implicit when plug-in methods select nuisances and complete to an SCM) and $T$ collapses that distribution to a single number.
The stability of the reported effect is then determined by whether $T$ is continuous as a functional on the space of probability measures over SCMs.
This is important in the context of causal inference because with sufficiently expressive model classes, $\Pi_P$ is generically multimodal over observationally compatible SCMs (and thus prone to discontinuities). 
We show that standard plug-in estimators---including propensity-weighted, regression, and doubly robust methods---behave like ``model-selective'' summaries. A canonical instance is $T = \arg\max$ where $T$ selects the model that maximizes a goodness-of-fit criterion. Specifically, they can make the reported effect jump as small changes in $P$ flip the selected model to one associated with a very different causal effect; an estimator can be semiparametrically efficient at the true $P$ while still defining a discontinuous map $P \mapsto T(\Pi_P)$. In contrast, we define a class of ``effect-aligned'' summaries---including the posterior mean, median, and quantiles---that aggregate over $\Pi_P$ and are continuous in $P$.

The contrast between continuous and discontinuous estimators admits a clean decision-theoretic reading.
Summarization instability can be interpreted as a loss-alignment problem: selection losses depend on model identity and do not penalize the magnitude of error in the causal effect, whereas effect-aligned losses do.
For example, $T = \Psi(\arg\max_M \Pi_P(M))$ is Bayes-optimal under indicator loss on the model with $\Psi$ interpreted as the mapping between an SCM and the causal effect functional, while the posterior mean $T = \3E_{\Pi_P}[\Psi]$ is Bayes-optimal under squared-error loss on the effect itself.
It is this alignment that governs continuity in $P$; effect-aligned summarization is a principled, stable response to the ill-conditioning that makes model-selective plug-in reporting hard.

\paragraph{Paper Outline} After introducing the basic framework of our analysis in \Cref{sec:preliminaries}, we start by revisiting the sensitivity of the identification map $P \mapsto \Psi(P)$ in \Cref{sec:extrapolation_gap} to give a self-contained account describing why we expect causal inference to be unstable (Layer 1 of instability). We then introduce a distribution over SCMs that can only be known up to an equivalence set with finite samples of data. With this fact we then introduce a second layer of instability and a taxonomy of estimators depending on their choice of reporting rule in \Cref{sec:bayesian} and \Cref{sec:mode}. We conclude with a decision-theoretic interpretation and discussion of the implications of our results in \Cref{sec:discussion} and \Cref{sec:conclusion}.

\subsection{Preliminaries}
\label{sec:preliminaries}
The basic framework of our analysis rests on \textit{structural causal models} (SCMs) \cite[Def. 7.1.1]{pearl2009causality}, which formalize the notion of a data-generating process. In addition to specifying the distributions of data, these models also specify the generative mechanisms that produce them. For the purpose of causal inference and learning, SCMs provide a broad, fine-grained hypothesis space.

An SCM $M$ is a tuple $\langle \V, \U, \1F, P \rangle$ where $\V$ is a set of endogenous variables and $\U$ is a set of exogenous variables. $\1F$ is a set of functions where each $f_V \in \1F$ decides values of an endogenous variable $V \in \V$ taking as argument a combination of other variables in the system. That is, $V \leftarrow f_{V}(\*\PA_V, \U_V)$ where observed parents $\*\PA_V \subseteq \V$ and unobserved parents $\U_V \subseteq \U$. Drawing values of exogenous variables $\U$ following $P(\U)$ induces the \emph{observational density} over endogenous variables $\V$,
\begin{align}
\label{eq:margin}
    P (\v) = \int_{\Omega_{\U}} \prod_{V \in \V} \I\{f_V(\boldsymbol{pa}_V,\u_V)= v\} dP(\u).
\end{align}
In turn, an intervention $do(x)$ replaces the causal mechanism $f_{X}$ with the assignment $X \leftarrow x$ corresponding to a sub-model $\M_x$. We will use the counterfactual notation $\Psi:=\3E_P[Y_x]$ to denote the average value of the outcome $Y$ in $\M_x$\footnote{We sometimes abuse the $\Psi$ notation to denote the causal functional $M \to \Psi(M)$ as well $P \to \Psi(P)$ when the context is clear.}.

Each SCM $M$ is associated with a \textit{causal diagram} $\G$, where nodes represent endogenous variables $\V$, directed edges represent the arguments $\*\PA_V$ of each function $f_{V}$, and bi-directed edges represent the presence of a common latent variable in the arguments of two functions. Examples are given in \Cref{fig:examples}. We will leverage a special type of clustering of nodes in the graph $\G$ called the \emph{confounded-component} (or $c$-component for short) from \cite{tian2002general}. For a causal graph $\G$, a subset $\C \subseteq \V$ is a $c$-component if any pair $V_i, V_j \in \C$ is connected by a bi-directed path (sequence of bi-directed edges) in $\G$. For example, in the graph in \Cref{fig:examples:a} the (implicit) exogenous variables $\U_Z$, $\U_{XY}$ correspond to $c$-components $\C(\U_Z) = \{Z\}$ and $\C(\U_{XY}) = \{X, Y\}$, respectively. We will use standard graph-theoretic family abbreviations to represent graphical relationships, such as the set of parent nodes of $X$ in $\G$, denoted by $Pa_{\G}(X)$, or the ancestors of $X$ in $\G$, denoted by $An_{\G}(X)$ -- both excluding $X$. For a more detailed survey on SCMs, we refer to \cite{pearl2009causality,bareinboim2022pearl}. All proofs are given in the Appendix.
\begin{figure*}[t]
    \centering
    \hfill\null
    \begin{subfigure}[t]{0.30\linewidth}\centering
      \begin{tikzpicture}[SCM,scale=1]
            \node (X) at (0,0) {$X$};
            \node (Y) at (2,0) {$Y$};
            \node (Z) at (1,1) {$Z$};
    
            \path [->] (X) edge (Y);
            \path [->] (Z) edge (Y);
            \path [->] (Z) edge (X);
      \end{tikzpicture}
    \caption{$\G_1$}
    \label{fig:examples:a}
    \end{subfigure}\hfill
    \begin{subfigure}[t]{0.30\linewidth}\centering
      \begin{tikzpicture}[SCM,scale=1]
            \node (W) at (-1,0) {$W$};
            \node (X) at (0,0) {$X$};
            \node (Y) at (2,0) {$Y$};
            \node (Z) at (1,1) {$Z$};
    
            \path [conf-path] (W) edge[out=70, in=160] (Z);
            \path [->] (W) edge (X);
            \path [->] (X) edge (Y);
            \path [->] (Z) edge (Y);
            \path [conf-path] (X) edge[out=90, in=180] (Z);
        \end{tikzpicture}
    \caption{$\G_2$}
    \label{fig:examples:b}
    \end{subfigure}\hfill
    \begin{subfigure}[t]{0.30\linewidth}\centering
      \begin{tikzpicture}[SCM,scale=1]
        \node (X) at (0,0) {$X$};
        \node (Z) at (1,0) {$Z$};
        \node (Y) at (2,0) {$Y$};

        \path [->] (X) edge (Z);
        \path [->] (Z) edge (Y);
        \path [conf-path] (X) edge[out=45, in=135] (Y);
        \end{tikzpicture}
    \caption{$\G_3$}
    \label{fig:examples:c}
    \end{subfigure}
    \hfill\null
      \caption{Examples of causal diagrams.}
      \label{fig:examples}
\end{figure*}

\section{Why is observational causal inference unstable?}
\label{sec:extrapolation_gap}

We say that an inference problem is \textit{unstable} if a tiny perturbation in the data, such as that caused by finite-sample noise, can shift the target (i.e., the implied causal effect) from one value to a radically different one. 

It is well known that certain causal queries are unstable \citep{robins1997toward}. A common example is the average causal effect $\3E_P[Y_x]$ given an observational distribution $P(x, y, z)$ in which $Z$ acts as a confounder for the association between $X$ and $Y$, as in \Cref{fig:examples:a}. In this example,
\begin{align}
    \label{eq:backdoor_bound}
    \sup_{Q:d_{TV}(P,Q) \leq \delta} \Big|\ \3E_P[Y_x] - \3E_Q[Y_x] \ \Big| = \Omega\left( K \cdot \delta \cdot\sup_{z}\left( \frac{1}{P(x \mid z)}\right) \right),
\end{align}
where $K$ is an upper bound on $Y$.

This result shows that the difference in causal effects scales as $\Omega(\cdot)$ of the expression in brackets as $\delta \to 0$. Because the ratio in the lower bound involves an inverse of conditional densities that can evaluate to arbitrarily large values under imperfect overlap (also known as violations of the positivity assumption, see e.g., \citet{hwang2024positivity}), small perturbations in $P$ can cause large changes in the causal effect.

The following theorem shows that this kind of ``instability'' is a property of identifiable causal effects in any causal diagram where $X$ is not connected to any of its children via a bi-directed path\footnote{A lower bound can be derived also for the general case but the resulting expression is less transparent as it depends on the structure of the causal diagram and not only on the topological ordering and $c$-component structure. We defer this result to \Cref{thm:general_ci_stability_bound} in the Appendix.}.

\begin{theorem}
    \label{thm:ci_stability_bound}
    Assume that $X$ is not connected to any of its children via a bi-directed path in $\G_{An(Y)}$ and let $V_1 \prec V_2 \prec \cdots \prec V_k$ be a topological ordering of the variables in $\G_{An(Y)}$ where $V^{(i)} = \{V_1, \dots, V_i\}$. Let $\C$ be the $c$-component in $\G_{An(Y)}$ that contains $X$. Then,
    \begin{align}
        \label{eq:ci_stability_bound}
        \sup_{Q:d_{TV}(P,Q) \leq \delta} \Big|\ \3E_P[Y_x] - \3E_Q[Y_x] \ \Big| = \Omega\left( K \cdot \delta \cdot \sup_{\v\setminus x}\left( \frac{\sum_x \prod_{V_i \in \C} P(v_i \mid v^{(i-1)})}{\prod_{V_i \in \C} P(v_i \mid v^{(i-1)})}\right) \right).
    \end{align}
\end{theorem}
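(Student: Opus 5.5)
Since $X$ is not connected to any of its children by a bi-directed path in $\G_{An(Y)}$, the c-component $\C$ containing $X$ also contains none of $X$'s children. I will first use Tian's c-component factorization \citep{tian2002general} to write the causal effect as an explicit functional of $P$:
\begin{align*}
\3E_P[Y_x] = \sum_{\v\setminus x} y \, \frac{P(\v)}{\prod_{V_i\in\C} P(v_i\mid v^{(i-1)})}\, \sum_{x'} \prod_{V_i \in \C} P(v_i \mid v^{(i-1)})\Big|_{x'} ,
\end{align*}
with $X=x$ fixed in all factors outside $\C$. This is the generalized backdoor / inverse-weighting form. Dividing the joint by the $\C$-factor $Q[\C]=\prod_{V_i\in\C}P(v_i\mid v^{(i-1)})$ removes it; the marginalization over $x$ inside $Q[\C]$ then yields $Q[\C\setminus\{X\}]$ evaluated under the intervention. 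The assumption that $X$ has no confounded children is exactly what allows this simple marginalization of $Q[\C]$ over $x$. Writing $w(\v\setminus x) := \sum_{x'} Q[\C](x') / Q[\C](x)$, the effect is $\sum_{\v\setminus x} y\, w(\v\setminus x) P(x,\v\setminus x)$. In the special case of \Cref{fig:examples:a}, $w$ reduces to $1/P(x\mid z)$, which recovers \Cref{eq:backdoor_bound}.

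For the lower bound I will construct an explicit perturbation. Fix $\v^*\setminus x$ attaining (or nearly attaining) the supremum of $w$. Let $y^*$ be its outcome coordinate, and by symmetry/centering assume $|y^*|\approx K$; otherwise shift the mass to a $y$-value with $|y|=K$. Define
\begin{align*}
Q = (1-\delta)P + \delta\, \I\{(x,\v^*\setminus x)\},
\end{align*}
or, more carefully, move mass $\delta$ onto the single cell $(x,\v^*\setminus x)$ while keeping the marginals of the non-descendants of the perturbed coordinates as unchanged as possible. Then $d_{TV}(P,Q)\le\delta$. I will then compute the first-order change of the functional. The term $y^* w(\v^*\setminus x)\,\delta$ appears directly. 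The perturbation also changes the conditionals entering $w$, but these changes are of the same order and can be controlled. The cleanest way is to compute the Gateaux derivative (influence function) of $P\mapsto\3E_P[Y_x]$ in the direction of the point mass. This derivative contains the weighted term $y\,w(\v\setminus x)$ minus bounded centering terms, where each centering term is a conditional expectation bounded by $K$ times a weight evaluated at $\v^*$. Choosing the sign of $y^*$, or alternatively perturbing toward $y=\pm K$, gives a derivative of magnitude $\gtrsim K\, w(\v^*\setminus x)$. Taking $\delta\to0$ then gives the $\Omega(\cdot)$ statement.

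The main obstacle is ensuring that the derivative does not cancel: the centering terms of the efficient influence function could offset the $y\,w$ term. To avoid this, I plan to perturb only the conditional of $Y$ given its past at the cell $\v^*\setminus\{x,y\}$, moving mass between $y=K$ and $y=-K$ (or $0$). This leaves every factor of $w$ unchanged, because $Y\notin\C$ ($Y$ is a descendant and is not a confounded child of $X$, or else it is handled by the ordering). The change in $\3E[Y_x]$ is then exactly $2K\cdot\delta'\cdot w(\v^*\setminus x)\cdot P(\text{cell})/P(\text{cell})$ after the reweighting. I will need to verify that shifting conditional mass $\delta'$ at a cell of probability $p$ costs TV $p\delta'$ while the effect moves by $K\delta' p\, w$. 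This is not yet the desired bound, so I will instead exploit a cell of small $p$ in which $Q[\C](x)$ is small. I expect that balancing these two quantities, via the supremum over $\v\setminus x$, is the delicate step.
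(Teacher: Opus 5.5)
Your final construction is the paper's own. You perturb only the conditional law of $Y$ given its past on a region where $w$ is nearly maximal. Then every conditional $P(v_i\mid v^{(i-1)})$ with $V_i\in\C$ is untouched, $w_Q\equiv w_P$, and $\3E_P[Y_x]-\3E_Q[Y_x]=\int w_P\,\I\{X=x\}\,y\,(p-q)$ is linear in $P-Q$. You are right to abandon the point-mass mixture: it moves the $\C$-conditionals, and that is where the centering terms come from.

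The proposal goes wrong at the last step, where you say this ``is not yet the desired bound'' and look for a balancing argument over small-$p$ cells. Your own bookkeeping already finishes the proof. Moving conditional mass $\delta'$ at a cell of probability $p$ costs $d_{TV}(P,Q)=p\delta'$ and shifts the effect by $2K\,p\,\delta'\,w$. The earlier expression with $P(\text{cell})/P(\text{cell})$ is wrong, because the factor $p$ is not divided out. Setting $\delta:=p\delta'$, the shift is $2K\,w\,\delta$, which is $\Omega(K\delta\sup w)$ once the cell nearly attains the supremum. The factor $p$ appears in both the cost and the gain and cancels, so there is nothing to balance. You only need $p>0$ and $\delta\le p\cdot(\text{movable conditional mass})$, which holds as $\delta\to 0$ with $P$ fixed. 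This is exactly the paper's computation with $\alpha=\delta/\beta$: $d_{TV}(P,Q)=\alpha\beta=\delta$, and the shift is at least $2c_Y(1-2\eta)\,\alpha\beta\,\sup w_P$.

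The second gap is your claim that $Y\notin\C$ follows from the hypotheses. The hypothesis only forbids bi-directed paths from $X$ to its \emph{children}. In the front-door graph $\G_3$, to which the paper itself applies the theorem, $Y$ is not a child of $X$ and $\C=\{X,Y\}$. There, perturbing $P(Y\mid X,Z)$ changes $w$, so your linear representation no longer holds. The paper does not derive $Y\notin\C$: it assumes it for the main argument and treats $Y\in\C$ separately in a remark. You need the same explicit case split.

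Two smaller points the paper handles explicitly. First, for continuous variables a single cell has probability zero, so you need a positive-probability neighborhood $U^*$ on which $w_P\ge(1-2\eta)\sup w_P$; the paper gets this from continuity of $w_P$. Second, moving mass ``between $y=K$ and $y=-K$'' presumes $P(Y\approx -K\mid\text{cell})\ge\delta'$, which is not given; this is why the paper imposes its oscillation assumption ($S^\pm$, $\alpha_0$, $c_Y$ a constant fraction of $K$). Your push-toward-$\pm K$ idea can actually replace that assumption if $Y$ ranges over $[-K,K]$ and $Q$ may place mass at $\pm K$. On all of $U^*$, mix a $\delta'$-fraction of the conditional law of $Y$ into a single endpoint $e\in\{-K,K\}$, and write $m=\3E_P[Y\mid v^{(k-1)}]$. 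Since $\int_{U^*}p\,w\,(K-m)+\int_{U^*}p\,w\,(K+m)=2K\int_{U^*}p\,w$, one choice of $e$ shifts the effect by at least $K\delta'\int_{U^*}p\,w$.
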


The following example illustrates \Cref{thm:ci_stability_bound}.

\begin{example}
    Given $\G_2$ in \Cref{fig:examples:b} we have that $\C=\{W,X,Z\}$ and $Z \prec W \prec X \prec Y$. Similarly, given $\G_3$ in \Cref{fig:examples:c} we have that $\C=\{X,Y\}$ and $X \prec Z \prec Y$. By \Cref{thm:ci_stability_bound}, the difference in causal effects (as $\delta \to 0$) computed from data generated from these two models is at least of the order of,
    \begin{align}
        \G_2: \ \Omega\left( K \cdot \delta \cdot \sup_{w, z}\left( \frac{1}{P(x \mid w, z)}\right)\right), \qquad \G_3: \ \Omega\left( K \cdot \delta \cdot \sup_{y,z}\left( \frac{\sum_{x'} P(y \mid x', z)P(x')}{P(y \mid x, z)P(x)}\right)\right).
    \end{align}
    We can also verify that for $\G_1$ given in \Cref{fig:examples:a} we recover \Cref{eq:backdoor_bound}.\qed
\end{example}

By inspecting the size of $c$-components in different classes of causal diagrams, we can deduce a \textit{stability hierarchy}. We can show, for instance, that causal inference is more unstable than the corresponding statistical inference problem (especially in the presence of unobserved confounding).

\begin{corollary}[Stability Hierarchy]
    \label{cor:stability_hierarchy}
    Using the conditions of \Cref{thm:ci_stability_bound}, causal effects $\3E_P[Y_x]$ in semi-Markov models are more unstable than causal effects in Markov models, which are more unstable than the corresponding conditional expectation $\3E_P[Y \mid x]$.
\end{corollary}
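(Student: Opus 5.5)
The plan is to read the corollary as a comparison of the lower bounds in \Cref{thm:ci_stability_bound}, organized by the size of the $c$-component $\C$ containing $X$. Write the instability factor as
\begin{align*}
R_{\C}(\v) := \frac{\sum_{x'} \prod_{V_i \in \C} P(v_i \mid v^{(i-1)})\big|_{x'}}{\prod_{V_i \in \C} P(v_i \mid v^{(i-1)})},
\end{align*}
where the numerator sums over the value of $X$ with $\v \setminus x$ held fixed. I would establish three cases in turn.

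\textbf{Conditional expectation.} For $\3E_P[Y \mid x]$ I would compute the statistical analogue of the bound directly. Take $Q = (1-\delta)P + \delta R$, where $R$ is concentrated on $\{X=x\}$ and puts the value $K$ on $Y$. The change in $\3E_Q[Y\mid x]$ is of order $K\delta / P(x)$. A matching upper bound of the same order comes from the fact that $Q(x) \geq P(x) - \delta$. The stability factor of the statistical problem is therefore $1/P(x)$.

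\textbf{Markov models.} Here every $c$-component is a singleton, so $\C = \{X\}$. The factor then reduces to $\sup_{\v\setminus x} 1/P(x \mid v^{(i-1)})$, where $v^{(i-1)}$ ranges over the predecessors of $X$. Averaging over those predecessors gives $P(x) = \sum P(x \mid v^{(i-1)})\,P(v^{(i-1)})$, so $\sup 1/P(x\mid \cdot) \geq 1/P(x)$. Equality holds only when $X$ is independent of its predecessors, which is the case where the causal and associational quantities coincide. Hence the Markov lower bound dominates the conditional-expectation rate.

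\textbf{Semi-Markov models.} Now $\C \supseteq \{X\}$ may contain further variables. I would show monotonicity: adding variables to $\C$ can only increase $\sup R_{\C}$. The key identity is
\begin{align*}
\sum_{x'} a_{x'} b_{x'} \Big/ (a_x b_x),
\end{align*}
where $a$ collects the $X$-factor and $b$ collects the factors of the other members of $\C$ (descendants of $X$ in the ordering, so they depend on $x'$). Pick the configuration of the remaining variables that makes $b_x$ small relative to $\max_{x'} b_{x'}$. Taking the supremum over these extra coordinates, the ratio is then at least $\sup \sum_{x'} a_{x'}/a_x = \sup 1/P(x\mid\cdot)$, the Markov factor. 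The argument is one averaging step: because $b$ is a conditional density in $v_i$, summing $b_{x'}$ over $v_i$ gives 1 for every $x'$. So for some $v_i$ we have $b_{x'} \geq b_x$ for the maximizing $x'$, which keeps the ratio at least the Markov ratio.

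\textbf{Main obstacle.} The last step is the one I expect to be delicate. It requires care with which $x'$ dominates, and with interpreting ``more unstable'' as a comparison of the guaranteed lower-bound rates (weakly larger) rather than a strict inequality for every $P$. I would state the corollary's ordering in that sense, and note that strictness holds whenever the extra factors in $\C$ genuinely depend on $x$.
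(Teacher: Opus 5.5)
Your three-tier plan matches the paper's chain of inequalities. The fact you isolate for the semi-Markov tier is also the right one: each $b_{x'}$ sums to one over the members of $\C$ that follow $X$. But the conclusion you draw from it does not follow. Because $\sum_{x'} a_{x'}=1$, one has $R_{\C}\ge 1/a_x$ at a configuration (with $a_x b_x>0$) exactly when $b_x\le \sum_{x'} a_{x'} b_{x'}$ there. Comparing $b_x$ with $\max_{x'} b_{x'}$ is vacuous, since $x$ is itself a candidate. Knowing $b_{x'}\ge b_x$ for one particular $x'$ does not control the $a$-weighted average once $X$ has more than two values. For example, take $X\in\{0,1,2\}$, target $x=0$, a single extra binary coordinate $c$, and $b_0=(\tfrac12,\tfrac12)$, $b_1=(1,0)$, $b_2=(0,1)$. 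At $c=0$ the maximizing $x'$ is $1$ and $b_1(0)\ge b_0(0)$. Yet the ratio there is $1+2a_1/a_0$, strictly below the Markov value $1/a_0=1+(a_1+a_2)/a_0$ whenever $a_1<a_2$.

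The repair is to apply your sum-to-one observation to the mixture $\bar b:=\sum_{x'} a_{x'} b_{x'}$. Hold all predecessors of $X$ and all variables outside $\C$ fixed. Then both $\bar b$ and $b_x$ sum to one over the later members of $\C$ (sum them out in reverse topological order). So either $\bar b\ge b_x>0$ at some configuration, or $\bar b>0=b_x$ somewhere, which makes the ratio infinite. In both cases $\sup R_{\C}\ge 1/a_x$. This is the same mechanism as the paper, which lower-bounds $\3E_A[R]$ under the numerator distribution $A$ by Titu's lemma, i.e.\ $\sum \bar b^2/b_x\ge(\sum\bar b)^2/\sum b_x=1$. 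Your pigeonhole form is more elementary. Fixing the predecessors of $X$ also avoids the paper's step of treating $P(x\mid v^{(i-1)})$ as a constant inside the sum over $\C\setminus X$, which is valid only when no member of $\C$ precedes $X$.

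That case also affects your bookkeeping. Members of $\C$ may precede $X$: in $\G_2$, $\C=\{W,X,Z\}$ with $Z\prec W\prec X$. So $b$ should collect only the successors of $X$ in $\C$; the earlier factors do not depend on $x'$ and cancel (they are not descendants). The repaired argument then gives $\sup_{v^{(j-1)}}1/P(x\mid v^{(j-1)})$ over \emph{all} predecessors of $X$. In a semi-Markov model this is generally not the Markov factor $\sup_{pa_X}1/P(x\mid pa_X)$: in $\G_2$ it is $\sup_{w,z}1/P(x\mid w,z)$, whereas $pa_X=\{W\}$. To reach the middle term of the paper's chain you need one more averaging step, $P(x\mid pa_X)=\3E[P(x\mid V^{(j-1)})\mid pa_X]$, so the infimum over the finer conditioning set can only be smaller. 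This is exactly the argument you already use for the Markov-versus-statistical tier, and it is the paper's Jensen step. Your explicit derivation of the $1/P(x)$ rate for $\3E_P[Y\mid x]$ goes beyond the paper, which takes that factor as given, and is fine.
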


To see this we can use Cauchy--Schwarz and Jensen's inequalities to derive an ordering over the instability factors of the three types of models,

\begin{align}
    \underbrace{\sup_{\v\setminus x}\left( \frac{\sum_x \prod_{V_i \in \C} P(v_i \mid v^{(i-1)})}{\prod_{V_i \in \C} P(v_i \mid v^{(i-1)})}\right)}_{\text{Causal query in semi-Markov model}} 
    \ \geq \
    \underbrace{\sup_{pa_x}\left(\frac{1}{P(x \mid pa_X)} \right)}_{\text{Causal query in Markov model}} 
    \ \geq 
    \underbrace{\frac{1}{P(x)}}_{\text{Statistical query}}.
    \label{eq:stability_hierarchy_ordering}
\end{align}

\begin{wrapfigure}{r}{0.5\textwidth}
    \centering
    \includegraphics[scale=0.4 ]{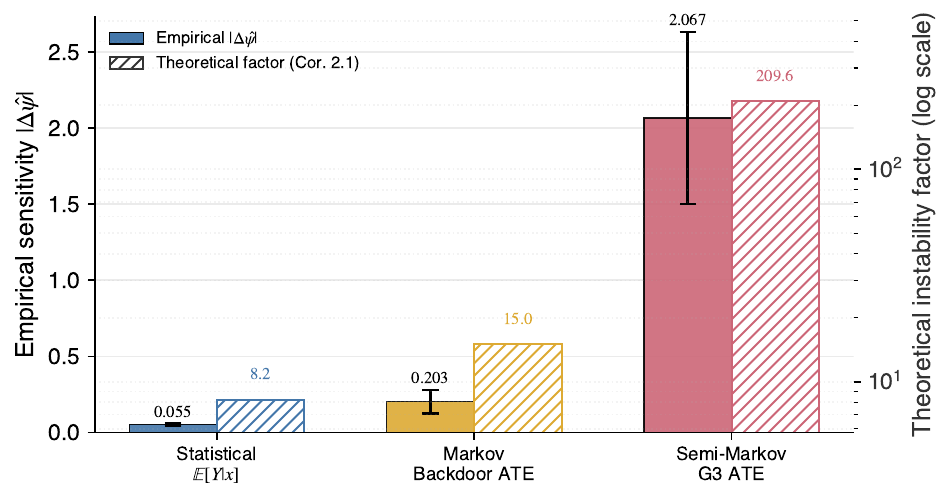}
    {\captionsetup{font=footnotesize}%
    \caption{Stability Hierarchy: \Cref{cor:stability_hierarchy,ex:stability_hierarchy_numerical}.}%
    \label{fig:stability_hierarchy}}
\end{wrapfigure}
The first term on the l.h.s.\ is the general instability factor of \Cref{thm:ci_stability_bound}, the middle term is the instability factor we would obtain by adjusting for the parents of $X$ in a Markov model (i.e., with no unobserved confounding), and the last term is the instability factor we obtain for the corresponding statistical query.


\begin{example}[Stability Hierarchy]
    \label{ex:stability_hierarchy_numerical}
    \Cref{fig:stability_hierarchy} summarizes a numerical simulation aligned with the three tiers above. We consider binary $X$, discrete $Z$, and a binned outcome $Y$, and evaluate three plug-in targets on population tabular distributions: the statistical contrast $\3E_P[Y \mid x=1] - \3E_P[Y \mid x=0]$; the Markov ATE on $\G_1$ (\Cref{fig:examples:a}); and the semi-Markov ATE on $\G_3$ (\Cref{fig:examples:c}). For each of $35$ settings with varying overlap, we apply a fixed total-variation shift of size $\delta = 0.025$: a perturbation of $P(X)$ for the statistical query, and approximate worst-case directional shifts of the joint table for the causal queries. \Cref{fig:stability_hierarchy} reports the resulting mean plug-in sensitivities $|\Delta \hat{\psi}|$ (solid bars) alongside the corresponding mean instability factors from \Cref{cor:stability_hierarchy} (hatched bars, log scale). Both increase from the statistical to the Markov to the semi-Markov tier.\qed
\end{example}

These results concern Layer~1---the map $P \mapsto \Psi(P)$---before any reporting rule is chosen.
\Cref{sec:bayesian} introduces the posterior over SCMs $\Pi_P$ and shows that a \emph{second}, summarization-layer instability can arise from the choice of $T$, even when $\Psi(P)$ is identified.
\section{Uncertainty over SCMs and summarization}
\label{sec:bayesian}

This section introduces Layer~2.
Reporting a single number requires summarizing uncertainty over compatible SCMs via an action functional $T$, so that $\hat\Psi(P) = T(\Pi_P)$.
The stability of the \emph{reported} effect depends on whether $T$ is continuous as a functional on $\1P(\3M)$\footnote{The notation $\1P(\3M)$ denotes the set of Borel probability measures on $\3M$.}---a property that can fail independently of Layer~1.


In practice, the population data distribution $P$ that identifies the causal effect $\Psi$ is known up to finite sample approximation. The set of models that might have generated a finite sample of data is wider,
\begin{align}
    \label{eq:robust_set}
    \1R := \{M\in\3M: d_{\1W}(P_M, P_n) \leq \varepsilon_n\}
\end{align}
where $P_n$ is the empirical distribution over the data domain from $n$ i.i.d. samples from $P$, for some tolerance parameter $\varepsilon_n > 0$.

On this set, the causal effect can be highly uncertain as \Cref{thm:ci_stability_bound} shows.

\subsection{Posterior over SCMs}

A principled way to represent this structural uncertainty is via a posterior on a space of candidate SCMs. For a fixed set $\V$, the space of SCMs can be defined to be a product space of continuous functions $\{f_V:V\in\V\}$ between variables and of joint probability measures over the domain of latent variables $\U$. We define the topology of $\3M$ as the product of the strong topology (uniform convergence) on the space of structural mechanisms and the weak topology on the space of exogenous measures. We define the prior $\Pi_0$ as a Borel probability measure on $(\3M,\1B(\3M))$. For a given probability distribution $P$ over the domain of $\V$, the posterior measure $\Pi_P$ is given by:
\begin{align}
    d\Pi_P = \frac{\1L_P(M)}{Z_P}d\Pi_0, \quad Z_P = \int \1L_P(M) d\Pi_0,
\end{align}
where $Z_P$ is the marginal likelihood. 

A canonical choice for the likelihood is the Gibbs posterior $\1L_P(M) = \exp\{-n\cdot \ell_n(M;P)\}$ where $\ell_n(M;P)$ is a loss functional for the fit of SCM $M$ to data $P$. For example, for quantifying uncertainty over observational distributions implied by SCMs, $\ell_n(M;P) = d_{\1W}(P, P_M)$ where $d_{\1W}$ is the Wasserstein distance. $n$ (often interpreted as the ``effective sample size'') serves as a precision parameter governing the posterior concentration rate $\delta_n$. $\delta_n$ is defined as the smallest sequence of numbers such that the posterior probability of a ``ball'' around the manifold of compatible models $\{M\in\3M: P_M=P\}$ goes to 1 as $n \to \infty$, i.e.,
\begin{align}
    \Pi_{P} \big( M : d_{\1W}(P_M, P) > C \delta_n \big) \to 0.
\end{align}

Under standard regularity, this construction makes $\3M$ a Polish space and the observational map $M \mapsto P_M$ continuous (Appendix~\ref{sec:posterior_stability_app}).
Two SCMs that are close in $\3M$ therefore generate similar data and, by \Cref{prop:continuity_causal_functional}, similar causal effects.

\begin{proposition}[Continuity of the Causal Functional]
    \label{prop:continuity_causal_functional}
    Let $\3M$ be endowed with the product topology as defined above. Then the causal functional $\Psi: \3M \to \3R$ defined by $\3E_{P_M}[Y_x]$ is continuous on $\3M$.
\end{proposition}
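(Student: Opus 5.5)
The plan is to write $\Psi(M)$ as an integral of a bounded continuous function against the exogenous measure, and then combine uniform convergence of the mechanisms with weak convergence of $P(\U)$. Fix $x$. Since the model is recursive (acyclic), the submodel $\M_x$ determines each endogenous variable as a composition of mechanisms evaluated at $\u$. Concretely, follow a topological order $V_1\prec\cdots\prec V_k$ and replace $f_X$ by the constant $x$. This yields a map $g^M_x:\Omega_{\U}\to\Omega_{\V}$ with
\begin{align*}
\Psi(M)=\int_{\Omega_{\U}} \pi_Y\big(g^M_x(\u)\big)\,dP^M(\u),
\end{align*}
where $\pi_Y$ is the projection onto $Y$. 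As in \Cref{thm:ci_stability_bound}, I will assume $Y$ is bounded by $K$, and more specifically that outcomes are truncated by a bounded continuous $h$ if necessary. The integrand is then $h\circ\pi_Y\circ g^M_x$, which is continuous because each $f_V$ is continuous.

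Take a sequence $M_n\to M$ in $\3M$. Because $\3M$ is Polish (Appendix~\ref{sec:posterior_stability_app}), sequential continuity suffices. The convergence means $f^n_V\to f_V$ uniformly for every $V$ and $P^{n}(\U)\Rightarrow P(\U)$ weakly. Split the difference as
\begin{align*}
|\Psi(M_n)-\Psi(M)|\le \Big|\int (h\circ g^{M_n}_x - h\circ g^{M}_x)\,dP^{n}\Big| + \Big|\int h\circ g^{M}_x\,dP^{n} - \int h\circ g^M_x\,dP\Big|.
\end{align*}
The second term vanishes by the definition of weak convergence, since $h\circ g^M_x$ is bounded and continuous. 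For the first term it suffices to show that $g^{M_n}_x\to g^M_x$ uniformly, at least on compact sets. I will prove this by induction along the topological order. Write
\[
f^n_V(\mathbf{pa}^n,\u_V)-f_V(\mathbf{pa},\u_V)=\big[f^n_V-f_V\big](\mathbf{pa}^n,\u_V)+\big[f_V(\mathbf{pa}^n,\u_V)-f_V(\mathbf{pa},\u_V)\big].
\]
The first bracket is controlled by uniform convergence. The second is controlled by continuity of $f_V$ together with the induction hypothesis on the parents.

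The main obstacle is that second bracket. Continuity of $f_V$ is only uniform on compact sets, so the induction closes uniformly only on compact subsets of $\Omega_{\U}$. I will handle this with tightness. By Prokhorov's theorem, the weakly convergent family $\{P^n\}\cup\{P\}$ is tight, so for each $\eta>0$ there is a compact $K_\eta\subseteq\Omega_{\U}$ with $P^n(K_\eta^c)<\eta$ for all $n$. The images $g^{M}_x(K_\eta)$ are compact, and for large $n$ the perturbed parent values $\mathbf{pa}^n$ lie in a compact neighbourhood of them. On that neighbourhood $f_V$ is uniformly continuous, so the induction yields $\sup_{K_\eta}|g^{M_n}_x-g^M_x|\to0$. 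The first term is then bounded by $o(1)+2\sup|h|\,\eta$. Letting $n\to\infty$ and then $\eta\to0$ gives $\Psi(M_n)\to\Psi(M)$.

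If the domains are compact, which is the standard regularity referenced in the appendix, tightness is automatic and this step reduces to uniform continuity. The plan also relies on the boundedness $|Y|\le K$; without it one would need uniform integrability, which I would take as part of the regularity assumptions.
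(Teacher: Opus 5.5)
Your proposal is correct and follows essentially the same route as the paper: the same two-term triangle-inequality split, with the first term handled by uniform convergence of the composed interventional map $\mathcal{G}_{F_k}(x,\cdot)\to\mathcal{G}_F(x,\cdot)$ and the second by weak convergence against the fixed bounded continuous integrand. The only difference is in rigor and scope. The paper assumes a compact domain and simply asserts that the composition converges uniformly. Your explicit induction along the topological order, together with the Prokhorov tightness step, justifies that assertion and extends it to non-compact exogenous domains.
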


The next theorem establishes that the data-to-posterior map $P \mapsto \Pi_P$ is continuous in $d_{TV}$, with a sensitivity controlled by the posterior concentration rate.

\begin{theorem}[Posterior Stability]
    \label{thm:posterior_stability}
    Let $P, Q$ be two distributions over the data such that $d_{TV}(P, Q) \leq \epsilon$, with $n\epsilon \to 0$ as $\epsilon \to 0$.
    Then, under some regularity conditions and for a well-defined prior $\Pi_0$,
    \begin{align}
        d_{TV}(\Pi_P, \Pi_Q) \leq C\cdot n \cdot \delta_n \cdot \epsilon,
    \end{align}
    where $n$ is the effective sample size in the Gibbs likelihood, $\delta_n$ is the posterior contraction rate, and $C$ is an absolute constant absorbing $K$.
\end{theorem}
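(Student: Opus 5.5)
The plan is to compare the two Gibbs posteriors directly through their densities with respect to the common prior $\Pi_0$. Write $\ell_P(M) := d_{\1W}(P,P_M)$ and $\ell_Q(M) := d_{\1W}(Q,P_M)$, so that $d\Pi_P = e^{-n\ell_P}\,d\Pi_0/Z_P$ and similarly for $Q$. Then
\begin{align*}
    2\, d_{TV}(\Pi_P,\Pi_Q) = \int \Big| \frac{e^{-n\ell_P(M)}}{Z_P} - \frac{e^{-n\ell_Q(M)}}{Z_Q}\Big|\, d\Pi_0(M).
\end{align*}

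\textbf{Step 1: pointwise control of the losses.} By the triangle inequality, $|\ell_P(M)-\ell_Q(M)| \le d_{\1W}(P,Q)$ uniformly in $M$. On a bounded data domain (this is where $K$ enters) the Wasserstein distance is dominated by total variation, $d_{\1W}(P,Q)\le K\, d_{TV}(P,Q)\le K\epsilon$. This step is routine and uses only the regularity assumptions.

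\textbf{Step 2: convert this to a density ratio bound.} From Step 1 we get $e^{-nK\epsilon}\le e^{-n\ell_Q}/e^{-n\ell_P}\le e^{nK\epsilon}$, and the same bounds hold for $Z_Q/Z_P$. Since $n\epsilon\to 0$, the normalized densities therefore satisfy $|d\Pi_Q/d\Pi_P - 1| \le e^{2nK\epsilon}-1 = O(nK\epsilon)$. This gives the crude bound $d_{TV}\le C n\epsilon$.

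\textbf{Step 3: recover the extra factor $\delta_n$.} This is the main obstacle. I would use a finer first-order expansion. Write $h(M) := n(\ell_Q(M)-\ell_P(M))$. Then
\begin{align*}
    \frac{d\Pi_Q}{d\Pi_P} = \frac{e^{-h}}{\3E_{\Pi_P}[e^{-h}]},
\end{align*}
so that $d_{TV}(\Pi_P,\Pi_Q)\approx \tfrac12\,\3E_{\Pi_P}\big|h-\3E_{\Pi_P}h\big|$, up to $O((n\epsilon)^2)$. This is a centered quantity, so the problem reduces to showing that $h$ varies by at most $O(n\delta_n\epsilon)$ across the bulk of $\Pi_P$.

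\textbf{Step 3a: the bulk.} On the contraction ball $B_n=\{M: d_{\1W}(P_M,P)\le C\delta_n\}$, every $P_M$ lies within $C\delta_n$ of $P$. I would invoke a regularity condition stating that the map $R\mapsto d_{\1W}(P,R)-d_{\1W}(Q,R)$ is Lipschitz in $R$ with constant $O(d_{TV}(P,Q))$ near $P$. Such a condition holds, for instance, for a smoothed Wasserstein or an MMD-type loss via the dual (Kantorovich) representation. Under it, the oscillation of $h$ over $B_n$ is $O(nK\epsilon\delta_n)$.

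\textbf{Step 3b: the tail.} Off $B_n$ we have $|h|\le nK\epsilon$ and $\Pi_P(B_n^c)\to 0$. I would assume that the contraction is fast enough that $\Pi_P(B_n^c)=O(\delta_n)$, so the tail contributes $O(n\epsilon\delta_n)$.

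Combining Steps 3a and 3b gives $d_{TV}(\Pi_P,\Pi_Q)\le C n\delta_n\epsilon$. The unspecified ``regularity conditions'' in the theorem are exactly the local Lipschitz condition of Step 3a and the tail-mass condition of Step 3b. If Step 3a proves too delicate, my fallback is the crude bound $Cn\epsilon$ from Step 2. That bound still implies the stated inequality whenever $\delta_n$ is bounded below, and I would check which regime the paper intends.
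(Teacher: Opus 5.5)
\textbf{Verdict: there is a genuine gap, in Step 3a.}

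Steps 1--2 and the reduction at the start of Step 3 are sound. Your $h$ is exactly the score $S=\ln L_P-\ln L_Q$. The reduction can even be made exact: $d_{TV}(\Pi_P,\Pi_Q)\le e^{2B}\inf_c \3E_{\Pi_P}|h-c|$, where $B$ is the uniform bound on $|h|$ from Step 1. Everything therefore hinges on the centered fluctuation of $h$ under $\Pi_P$ being $O(n\delta_n\epsilon)$, and Step 3a does not deliver this.

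\textbf{Why the Lipschitz condition fails.} The Gibbs loss in the theorem is $d_{\1W}(P,P_M)$, a distance rather than a squared distance, and for it your condition is false. Take the mixture path $R_t=(1-t)P+tQ$. Kantorovich duality gives $d_{\1W}(P,R_t)=t\,d_{\1W}(P,Q)$ and $d_{\1W}(Q,R_t)=(1-t)\,d_{\1W}(P,Q)$. Hence $g(R):=d_{\1W}(P,R)-d_{\1W}(Q,R)$ satisfies $g(R_t)=(2t-1)\,d_{\1W}(P,Q)$. Its Lipschitz constant is therefore $2$, not $O(\epsilon)$. This segment lies within $O(\epsilon)$ of $P$, so it sits inside the contraction ball in the theorem's regime ($\epsilon\to 0$ at fixed $n,\delta_n$).

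\textbf{Why no property of the loss alone can help.} Linearize as $P_M=P+\delta_n\eta_M$ and $Q=P+\epsilon'\Delta$, with unit $\eta_M,\Delta$ and $\epsilon'=d_{\1W}(P,Q)$. Then $h(M)=n(\|\delta_n\eta_M-\epsilon'\Delta\|-\delta_n)\approx -n\epsilon'\langle\Delta,\eta_M\rangle$. This depends on the \emph{direction} of $P_M-P$ but not on the radius $\delta_n$. Under $\Pi_P$ its fluctuation is of order $n\epsilon$ times the angular spread of $\eta_M$, so no smoothness property of the loss can produce the factor $\delta_n$.

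Your MMD example works only for \emph{squared} MMD, where $g(R)=2\langle R-P,Q-P\rangle-\|Q-P\|^2$ is affine with slope $2\|Q-P\|$. Unsquared MMD has the same kink at $P$. The fallback bound $Cn\epsilon$ does not rescue the statement either, since the intended regime has $\delta_n\to 0$.

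\textbf{What the paper does instead.} The missing ingredient is an assumption on the posterior's geometry rather than on the loss, and this is what the paper imposes. Its argument runs as follows:
\begin{enumerate}
\item It writes $d_{KL}(\Pi_P,\Pi_Q)$ as the centered cumulant generating function of $S$ evaluated at $-1$.
\item It bounds this by $\tfrac12\mathrm{Var}_{\Pi_P}(S)(1+O(nD\epsilon))$ via Bennett's inequality, using $|S|\le nD\epsilon$.
\item It passes to total variation with Pinsker.
\item It uses the tangent-space expansion above to get $\mathrm{Var}_{\Pi_P}(S)\approx n^2D^2\epsilon^2\sigma_n^2$, with $\sigma_n^2=\mathrm{Var}_{\Pi_P}\langle\Delta,\eta_M\rangle$.
\end{enumerate}
The regularity condition is then the explicit hypothesis $\sigma_n=O(\delta_n)$ (angular concentration), together with posterior support in a $\delta_n$-neighbourhood of $\{M:P_M=P\}$. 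The paper itself flags that this condition need not hold in general.

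\textbf{How to repair your argument.} Your $L^1$ route is compatible with the paper's and somewhat more direct. Replace Step 3a by the assumption $\inf_c\3E_{\Pi_P}|\langle\Delta,\eta_M\rangle-c|=O(\delta_n)$, which follows from $\sigma_n=O(\delta_n)$ by Cauchy--Schwarz. Your argument then goes through without the Bennett--Pinsker chain. Alternatively, keep Step 3a as stated but switch the Gibbs loss to a squared Hilbertian discrepancy such as squared MMD. That gives a fully rigorous $O(n\delta_n\epsilon)$ bound, but for a different posterior from the one in the theorem.
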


In particular, the ``shape'' of the causal uncertainty is protected against small shifts in the data distribution. The posterior $\Pi_P$ together with the causal functional $\Psi$ induces a probability measure over the causal effect itself, which is the central object analyzed in the rest of the paper.

\subsection{Point estimators as summaries}
\label{sec:summaries}

This section introduces the central object of analysis in the rest of the paper: the causal posterior measure and the point estimators that summarize it.

\begin{definition}[Causal Posterior Measure]
    \label{def:causal_posterior}
    Let $\Psi: \3M \to \3R$ be defined by $\Psi(M) = \3E_{P_M}[Y_x]$. For any Borel set $B\in\1B(\3R)$, the pushforward $\Psi_\#\Pi_P$ is the posterior measure over the causal effect given observational distribution $P$:
    \begin{align}
        \Psi_\# \Pi_P(B) := \Pi_P\left(\{M\in\3M: \Psi(M)\in B\}\right).
    \end{align}
\end{definition}

 
\begin{definition}[Point Estimator]
    \label{def:point_estimator}
    A point estimator of the causal effect is defined as $\hat\Psi(P) = T\big(\Pi_P\big)$, where $T$ is an action functional that collapses a probability measure over SCMs to a real number.
\end{definition}

We next formalize two classes of summaries that arise in practice.

\begin{definition}[Effect-aligned summary]
    \label{def:effect_aligned}
    A point estimator is effect-aligned if $T(\Pi_P) \;=\; E(\Psi_\#\Pi_P)$ where $E:\1P(\3R)\to\3R$ is a functional that is continuous in the weak topology on $\1P(\3R)$.
\end{definition}

\begin{definition}[Model-selective summary]
    \label{def:model_selective}
    A point estimator is model-selective if $T=\Psi\circ S$ where $S:\1P(\3M)\to\3M$ with $S(\Pi)\in\arg\max_{M\in\3M}\Pi(M)$ is a selection functional.
\end{definition}

Examples of effect-aligned summaries include the posterior mean, medians, and quantiles. Estimators in practice need not compute the full Gibbs posterior $\Pi_P$ over nonparametric $\3M$. Examples of model-selective summaries include the posterior mode, plug-in estimators, and more generally any rule that reports the causal effect under a single selected SCM (\Cref{sec:mode}).

Continuity of the data-to-estimator map $P \mapsto \hat\Psi(P) = T(\Pi_P)$\footnote{We will often write $\hat\Psi(P)$ and $T(\Pi_P)$ interchangeably when denoting estimators of causal effects and the context is clear.} factors through the posterior. When $T$ is effect-aligned (\Cref{def:effect_aligned}), we have $T(\Pi) = E(\Psi_\#\Pi)$ with $E$ weakly continuous on $\1P(\3R)$; continuity then follows from \Cref{thm:posterior_stability}, the pushforward map $\Pi \mapsto \Psi_\#\Pi$ (a direct consequence of \Cref{prop:continuity_causal_functional}), and weak continuity of $E$. 

The next two sections study the continuity of effect-aligned and model-selective summaries $T$.

\begin{figure}[t]
    \centering
    \includegraphics[width=\linewidth]{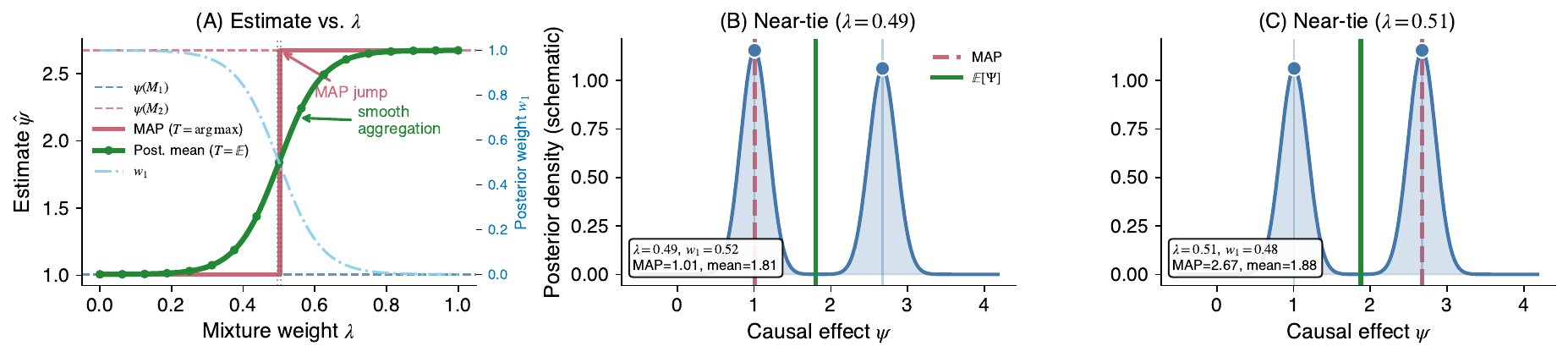}
    \caption{Numerical illustration of \Cref{ex:map_instability,ex:map_instability_continued} with a pair of SCMs with near-equal fit to the data.
    We follow the setup of \Cref{ex:map_instability} with $A=5$, $k=25$, $n=55$.
    Panel~(A) shows the model-selective summary (posterior mode) estimate jumping near $\lambda=\tfrac12$ while the posterior mean varies smoothly in $\lambda$.
    Panel~(B) shows a schematic pushforward $\Psi_\#\Pi_P$ at the near-tied mixture $\lambda=0.49$.
    Panel~(C) shows the posterior mean shifting only slightly at $\lambda=0.51$ when a small tilt past the tie flips the posterior mode to $\psi(M_2)$.}
    \label{fig:ex41_combined}
\end{figure}

\section{Model-selective summaries}
\label{sec:mode}

This section shows that many common estimators of causal effects can be interpreted as model-selective summaries over restricted hypothesis classes. Such estimators define discontinuous maps $P \mapsto \hat\Psi(P)$ even when the causal effect is point-identified.

For any $M_1, M_2 \in \1R$ (\Cref{eq:robust_set}), assuming a Gibbs likelihood, we have that $\bigl|\1L_{P_n}(M_1) - \1L_{P_n}(M_2)\bigr| \le 1 - e^{-n\varepsilon_n}$. In particular, when $n\varepsilon_n$ is small the Gibbs likelihood is \emph{approximately constant} on $\1R$. This implies also that for a sufficiently uniform prior over $\1R$, the posterior is \emph{approximately constant} on $\1R$ given the data,
\begin{align}
    \Pi_{P_n}(M_1) \approx \Pi_{P_n}(M_2), \forall M_1, M_2 \in \1R,
\end{align}
while the causal effect implied by these SCMs might be different.

Model-selective summaries are unstable because many SCMs can have near-maximal posterior mass yet imply different causal effects, so that even small changes in the input data can switch the selected model and jump the reported effect.
This is the model-selective counterpart to \Cref{prop:stable_summaries}.

\begin{theorem}[Discontinuity of model-selective summaries]
\label{thm:model_selective_discontinuity}
    Let $T$ be model-selective in the sense of \Cref{def:model_selective}.
    Let $\3M$ be a nonparametric hypothesis class of SCMs over continuous variables with $X\in An_{\G}(Y)$.
    Under imperfect overlap, there exist observational distributions $P$ and sequences $Q_n \to P$ in $d_{TV}$ such that
    \[
        \limsup_{n\to\infty} \ \bigl| \ T(\Pi_{Q_n}) - T(\Pi_P) \ \bigr| \geq \Delta
    \]
    for some $\Delta > 0$.
    In particular, the map $P \mapsto T(\Pi_P)$ is not continuous at such $P$.
\end{theorem}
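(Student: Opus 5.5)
The plan is to build an explicit two-model construction in which the posterior mode flips under an arbitrarily small perturbation of $P$. By Layer~1 (\Cref{thm:ci_stability_bound}), imperfect overlap lets us find two distributions that are $\delta$-close in $d_{TV}$ yet have causal effects differing by a fixed amount. We then place these as two competing modes of $\Pi$.

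\emph{Step 1: two candidate SCMs.} Fix $\Delta>0$. Using the $\G_1$-type construction behind \Cref{eq:backdoor_bound}, I will build two SCMs $M_1,M_2\in\3M$ with $X\in An_{\G}(Y)$, both with continuous mechanisms. They differ only in the outcome mechanism on a region $\{z: P(x\mid z)\leq \eta\}$ of vanishing propensity. Choose the perturbation so that:
\begin{itemize}
\item $d_{TV}(P_{M_1},P_{M_2})\leq \eta\cdot K$;
\item $|\Psi(M_1)-\Psi(M_2)|\geq \Delta$, since $\3E[Y_x]$ weights that region by $1/P(x\mid z)$.
\end{itemize}
Imperfect overlap is exactly what allows $\eta\to 0$ with $\Delta$ held fixed.

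\emph{Step 2: ties.} Set $P:=P_{M_1}$, or more usefully the mixture $P_\lambda=\lambda P_{M_1}+(1-\lambda)P_{M_2}$ as in \Cref{ex:map_instability}. Choose the prior $\Pi_0$ to be a mixture of two narrow components around $M_1$ and $M_2$; with $\3M$ nonparametric, a slight modification near $M_1$ or $M_2$ gives the required atoms or local maxima of density. Under the Gibbs likelihood, the posterior mass or density at $M_i$ is proportional to $\exp\{-n\,d_{\1W}(P_{M_i},P_\lambda)\}$. This quantity is strictly monotone in $\lambda$ and tied at some $\lambda^*$; set $P:=P_{\lambda^*}$. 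Take $Q_n:=P_{\lambda^*+1/n}$. Then $d_{TV}(Q_n,P)\leq (1/n)\,d_{TV}(P_{M_1},P_{M_2})\to 0$, yet for every $n$ the maximizer of $\Pi_{Q_n}$ is the model near $M_1$. Using the freedom in $S$ at the tie, or perturbing to $P_{\lambda^*-1/n'}$ if $S(\Pi_P)$ picks $M_1$, we get $S(\Pi_{Q_n})$ and $S(\Pi_P)$ on opposite modes. Hence $|T(\Pi_{Q_n})-T(\Pi_P)|\geq \Delta-o(1)$, using \Cref{prop:continuity_causal_functional} to control $\Psi$ on the narrow neighborhoods around each $M_i$.

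\emph{Main obstacle.} The delicate step is ensuring that the $\arg\max$ is attained near $M_1$ or $M_2$ and nowhere else, since $\3M$ also contains interpolating models. I would handle this by choosing $\Pi_0$ supported on the two small neighborhoods. That choice is permitted because the theorem only asserts existence of some $P$, and the prior is part of the setup. Alternatively, one can argue that the compatible set $\{M:P_M=P\}$ contains SCMs realizing both effect values, so any selection rule faces the same tie.

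I would also check that \Cref{thm:posterior_stability} is not contradicted. It is not: $\Pi_{Q_n}$ does converge to $\Pi_P$ in $d_{TV}$, and the discontinuity lies entirely in $S$, because $\arg\max$ is not weakly continuous.
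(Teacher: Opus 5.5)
Your argument is correct once you commit to the two-atom prior, but it takes a different route from the paper's. Like the paper, you are constraining a prior that the statement leaves open. The paper fixes $P=P_{M_1}$ and lets $Q_n=P_{M_2}$ run through the $k$-dependent bump models $f+A\,e^{-k\,p(x,z)}$ of \Cref{lem:fragility}, so that $d_{TV}(P,Q_n)\to 0$ while $|\Psi(M_1)-\Psi(M_2)|\geq\Delta$ stays fixed. It then asserts that the model selected under each law is that law's own generating model. You instead fix a single pair, put atoms on it, and place the discontinuity at the tie law $P_{\lambda^*}$ on the mixture segment. That is the mechanism of the paper's numerical illustration (\Cref{fig:ex41_combined}), not of its formal proof.

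\textbf{What your route buys.} It is rigorous at every step. With atoms, $\arg\max_M\Pi(M)$ is literally defined. For $W_1$, Kantorovich--Rubinstein duality gives $d_{\1W}(P_{M_1},P_\lambda)=(1-\lambda)\,d_{\1W}(P_{M_1},P_{M_2})$. Hence the posterior log-odds of $M_1$ versus $M_2$ is $n(2\lambda-1)\,d_{\1W}(P_{M_1},P_{M_2})+\log\bigl(\Pi_0(M_1)/\Pi_0(M_2)\bigr)$, which is affine and strictly increasing in $\lambda$. Approaching $\lambda^*$ from the side opposite the tie-break then yields a jump of exactly $|\Psi(M_1)-\Psi(M_2)|$ for every term of the sequence. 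The paper's assertion is harder to justify when read literally, with prior masses and fixed Gibbs precision. It then needs $\Pi_0(M_2)\geq e^{-n\,d_{\1W}(P_{M_1},P_{M_2})}\,\Pi_0(M_1)$ for infinitely many distinct $M_2$ with $d_{\1W}(P_{M_1},P_{M_2})\to 0$. No probability measure with $\Pi_0(M_1)>0$ can satisfy this.

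\textbf{What the paper's route aims for.} It makes imperfect overlap the cause of the jump: the competing models become observationally indistinguishable while their effects stay $\Delta$ apart. In your proof, overlap does no logical work. You never send $\eta$ to $0$, and any pair with $P_{M_1}\neq P_{M_2}$ and $\Psi(M_1)\neq\Psi(M_2)$ would do. What you actually prove is that $\arg\max$ summaries are discontinuous at posterior ties; overlap failure only makes the two competing laws close.

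A few loose ends:
\begin{itemize}
\item Drop the non-atomic ``narrow components'' variant. On a nonparametric $\3M$ there is no reference measure, so a ``local maximum of density'' is undefined; the atoms are what make $S$ meaningful.
\item Drop the alternative claim that $\{M:P_M=P\}$ contains SCMs realizing both effect values. Under point identification, $\Psi$ is constant on that set; that is exactly what identification means.
\item The quantitative bounds in Step 1 are loose; for instance, a total-variation bound carries no factor $K$. Nothing downstream uses them: you only need a pair with $P_{M_1}\neq P_{M_2}$ and $\Psi(M_1)\neq\Psi(M_2)$, which \Cref{lem:fragility} supplies.
\item Keep the Gibbs precision fixed and index the sequence separately, e.g.\ $Q_m=P_{\lambda^*+1/m}$, since you currently use $n$ for both.
\end{itemize}
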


The following example illustrates \Cref{thm:model_selective_discontinuity}.

\begin{example}[Instability of model-selective summaries]
\label{ex:map_instability}
    Let $P(z, x, y)$ be a given joint observational density. We will consider two SCMs $M_1$ and $M_2$ that differ in the mechanism of $Y$. In $M_1$ it is given by $f_Y(x, z, u)$ and in $M_2$ it is given by $f_Y(x, z, u) + A \cdot \exp\left( -k \cdot p(x, z) \right)$. For large values of $k$ the observational densities under these two models can be made arbitrarily close since $\exp\left( -k \cdot p(x, z) \right) \to 0$ as $k\to\infty$ whenever $p(x, z) \gg 0$. However, 
    \begin{align}
        \Psi(M_2) = \Psi(M_1) + A \cdot \underbrace{\int \exp\left( -k \cdot p(x, z) \right) p(z) dz}_{\text{Fragility Weight } \alpha}.
    \end{align}
    The causal effects induced by these two models differ by a constant term. Unless $x$ has perfect overlap, i.e., $p(x, z) > \delta$ for all $z$ in the population, there will be a region of $Z$ where $p(x, z)$ is small. In those regions, $\exp(-k \cdot p(x, z)) \approx 1$. If that ``lack of overlap'' region has non-zero mass $\alpha$ under the marginal $p(z)$ then,
    $$|\Psi(M_2) - \Psi(M_1)| \approx A \cdot \alpha$$
    Since $A$ can be arbitrarily large in a non-parametric class, the causal effects can be made radically different. With small perturbations as $P$ changes from $P_{M_1}$ to $P_{M_2}$ the posterior-mode summary transports from $\Psi(M_1)$ to $\Psi(M_2)$. A numerical illustration of this phenomenon is shown in \Cref{fig:ex41_combined}.\qed
\end{example}

The following examples highlight the selection rule governing inverse propensity weighting and regression estimators.

\begin{example}[IPW as model-selective summary]
\label{ex:ipw-map}
Assuming a data-generating process compatible with the causal diagram in \Cref{fig:examples:a}, the IPW estimator can be interpreted as selecting an SCM by maximum likelihood on the propensity score,
$$\Psi_{\text{IPW}} = \3E[Y; M_x], \quad \text{ where } \quad M: P_M(x\mid z) = \hat \omega(x \mid z) \in  \arg\max_{\omega} \ \text{Criterion}(\omega).$$
$M$ is selected by fitting the propensity score to the data (e.g. by maximum likelihood) and selecting $M$ such that the induced probability of treatment $P_M(x\mid z)$ is given by the estimated propensity score $\hat \omega(x \mid z)$. This process defines an implicit posterior with loss $\ell_n(M)=-\frac{1}{n}\sum_i \log P_M(X_i\mid Z_i)$ and a flat prior and reports its mode.
\qed
\end{example}

\begin{example}[Outcome regression as model-selective summary]
Let $\1F_\mu$ be a parametric class of outcome models $\mu_\theta(x,z)$. Under the same setting as \Cref{ex:ipw-map}, the outcome regression estimator can be interpreted as selecting an SCM by maximum likelihood on the outcome model,
$$\Psi_{\text{REG}} = \3E[Y; M_x], \quad \text{ where } \quad M: \3E_{P_M}[Y|x,z] = \hat{\mu}(x,z) \in  \arg\max_{\mu \in \1F_\mu} \ \text{Criterion}(\mu).$$
Similarly, $M$ is selected by fitting the outcome model to the data (e.g. by minimizing mean squared error) and selecting $M$ such that the conditional expectation of the outcome is given by the estimated outcome model $\hat{\mu}(x,z)$ for all $x,z$.\qed
\end{example}

In these examples, the selection of the SCM is implicit. Plug-in estimators select nuisances by maximum likelihood on a restricted class and complete to an SCM whose observational distribution matches the data.
This yields an implicit Gibbs posterior $d\Pi_P \propto \exp\{-n \cdot\ell_n(M, P)\}d\Pi_0$ over a restricted subfamily. Moreover, both estimators satisfy the discontinuity conclusion of \Cref{thm:model_selective_discontinuity}.

\section{Effect-aligned summaries: $T = \3E$}
\label{sec:mean}

By \Cref{def:effect_aligned}, an effect-aligned summary has the form $T(\Pi_P) = E(\Psi_\#\Pi_P)$ with $E$ weakly continuous on $\1P(\3R)$.

\begin{proposition}[Stable summaries]
    \label{prop:stable_summaries}
    Let $T$ be effect-aligned in the sense of \Cref{def:effect_aligned}, and assume $|\Psi(M)| \leq K$ for all $M \in \3M$.
    Under the conditions of \Cref{thm:posterior_stability}, if $d_{TV}(P, Q) \leq \epsilon$ with $nD\epsilon \to 0$, then
    \begin{align}
        \bigl|T(\Pi_P) - T(\Pi_Q)\bigr| \to 0 \quad \text{as } \epsilon \to 0.
    \end{align}
    In particular, $P \mapsto T(\Pi_P)$ is continuous in $P$ at fixed $n$, $\delta_n$, and prior hyperparameters.
\end{proposition}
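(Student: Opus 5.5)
The plan is to factor the map $P \mapsto T(\Pi_P)$ into three links and check that each link is continuous:
\begin{align*}
P \;\longmapsto\; \Pi_P \;\longmapsto\; \Psi_\#\Pi_P \;\longmapsto\; E(\Psi_\#\Pi_P) = T(\Pi_P).
\end{align*}
Continuity of the composition then gives the claim.

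\textbf{Step 1 (posterior link).} By \Cref{thm:posterior_stability}, $d_{TV}(P,Q)\le\epsilon$ gives
\begin{align*}
d_{TV}(\Pi_P,\Pi_Q)\le C\, n\,\delta_n\,\epsilon.
\end{align*}
Here I read the hypothesis $nD\epsilon\to 0$ as $D$ playing the role of $C\delta_n$, or as a diameter-type constant absorbed into $C$. Either way, the right-hand side tends to $0$ as $\epsilon\to 0$ for fixed $n$, $\delta_n$ and prior hyperparameters.

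\textbf{Step 2 (pushforward link).} The pushforward is a contraction in total variation. For any Borel $B\subseteq\3R$, the preimage $\Psi^{-1}(B)$ is Borel in $\3M$, because $\Psi$ is continuous by \Cref{prop:continuity_causal_functional}. Hence
\begin{align*}
\bigl|\Psi_\#\Pi_P(B)-\Psi_\#\Pi_Q(B)\bigr| = \bigl|\Pi_P(\Psi^{-1}(B))-\Pi_Q(\Psi^{-1}(B))\bigr| \le d_{TV}(\Pi_P,\Pi_Q).
\end{align*}
So $d_{TV}(\Psi_\#\Pi_P,\Psi_\#\Pi_Q)\le C n\delta_n\epsilon$. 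All these pushforwards are supported on $[-K,K]$, since $|\Psi|\le K$.

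\textbf{Step 3 (summary link).} Convergence in total variation implies weak convergence: for bounded continuous $g$,
\begin{align*}
\Bigl|\int g\, d\mu - \int g\, d\nu\Bigr| \le 2\|g\|_\infty\, d_{TV}(\mu,\nu).
\end{align*}
Take any sequence $Q_m$ with $d_{TV}(P,Q_m)\to0$. By Steps 1–2, $\Psi_\#\Pi_{Q_m}\Rightarrow\Psi_\#\Pi_P$ weakly. Weak continuity of $E$ (\Cref{def:effect_aligned}) then gives $E(\Psi_\#\Pi_{Q_m})\to E(\Psi_\#\Pi_P)$. This sequential statement is exactly $|T(\Pi_P)-T(\Pi_Q)|\to0$ as $\epsilon\to0$, i.e.\ continuity of $P\mapsto T(\Pi_P)$.

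For the posterior mean the bound is explicit. Since $\int \psi\, d\mu$ has integrand bounded by $K$ on the common support,
\begin{align*}
\bigl|T(\Pi_P)-T(\Pi_Q)\bigr| \le 2K\, C\, n\,\delta_n\,\epsilon.
\end{align*}
This is where the boundedness assumption $|\Psi|\le K$ is used. The identity function is not bounded on $\3R$, so the mean is weakly continuous only because the pushforwards live on $[-K,K]$. The median and quantiles are handled the same way, provided the limiting CDF is continuous and strictly increasing at the relevant level; I would add this as a standing regularity caveat.

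\textbf{Main obstacle.} The analytic content is entirely in Step 1, which I take from \Cref{thm:posterior_stability}. The remaining delicate point is matching the stated rate hypothesis $nD\epsilon\to0$ to that theorem's requirement $n\epsilon\to0$, and making sure $n$ and $\delta_n$ are held fixed, not tied to $\epsilon$. A second point is that weak continuity of $E$ on all of $\1P(\3R)$ is a strong assumption. For quantiles it fails at atoms, so in the writeup I would restrict $E$'s domain to measures supported on $[-K,K]$ and state that restriction explicitly.
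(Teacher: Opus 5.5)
Your proposal is correct and follows essentially the same route as the paper's proof: \Cref{thm:posterior_stability}, then the data-processing contraction of total variation under the pushforward by $\Psi$ (the paper's \Cref{lemma:pushforward_stability}), then total-variation convergence implying weak convergence on the bounded support $[-K,K]$, and finally weak continuity of $E$. Your extra remarks are sound caveats that the paper's proof glosses over, not gaps in your argument: the mean is weakly continuous only because the pushforwards live on $[-K,K]$, and quantiles need a continuity condition at the relevant level.
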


Any effect-aligned summary---including the posterior mean, median, and quantiles---inherits stability in $P$. 
The corollary and theorem below specialize this general principle to $T = \3E$; the variance-sensitivity bound is a sharpening specific to the mean.

\begin{corollary}
    \label{cor:bound_posterior_mean}
    Let $\Psi: \3M \to \3R$ be defined by $\Psi(M) = \3E_{P_M}[Y_x]$. Under the conditions of \Cref{thm:posterior_stability},
    \begin{align}
        \label{eq:bound_posterior_mean}
        \Big| \ \mathbb{E}_{\Pi_P}[\Psi] - \mathbb{E}_{\Pi_Q}[\Psi] \ \Big| \leq 2K \cdot C\cdot n \cdot \delta_n \cdot \epsilon,
    \end{align}
    provided that the target $Y$ is bounded by $K$.
\end{corollary}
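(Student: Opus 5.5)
The plan is to reduce the corollary to the total-variation bound of \Cref{thm:posterior_stability}, using the standard duality between total variation and bounded test functions. The argument is essentially one line once the right inequality is in place.

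First, I would check that $\Psi$ is a bounded measurable function on $\3M$. Measurability follows from \Cref{prop:continuity_causal_functional}, since $\Psi$ is continuous and hence Borel. Boundedness follows from $|Y|\le K$, which gives $|\Psi(M)| = |\3E_{P_M}[Y_x]| \le K$ for every $M\in\3M$. In particular, both posterior means $\3E_{\Pi_P}[\Psi]$ and $\3E_{\Pi_Q}[\Psi]$ exist and are finite.

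Second, I would write the difference of the means as the integral of $\Psi$ against the signed measure $\Pi_P-\Pi_Q$. Take a Hahn decomposition $\3M = A\cup A^c$, with $\Pi_P-\Pi_Q \ge 0$ on $A$ and $\le 0$ on $A^c$. This gives
\begin{align*}
\Bigl|\int \Psi\, d(\Pi_P-\Pi_Q)\Bigr| \le \sup_M|\Psi(M)|\cdot \|\Pi_P-\Pi_Q\|_{1} = K\cdot 2\, d_{TV}(\Pi_P,\Pi_Q).
\end{align*}
Here I use the convention $d_{TV}(\mu,\nu)=\sup_B|\mu(B)-\nu(B)| = \tfrac12\|\mu-\nu\|_1$.

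Finally, I would invoke \Cref{thm:posterior_stability}. Since $d_{TV}(P,Q)\le\epsilon$ and the theorem's regularity conditions are in force, it yields $d_{TV}(\Pi_P,\Pi_Q)\le C n\delta_n\epsilon$. Substituting this into the display gives \Cref{eq:bound_posterior_mean}.

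The only real point of care is the factor $2$, which depends on the normalization of $d_{TV}$. Under the sup-over-sets convention the factor $2K$ is exactly right. If one instead uses the oscillation bound $\operatorname{osc}\Psi\le 2K$ with $\tfrac12\|\cdot\|_1$, one obtains the same constant. So the stated bound holds under either convention; it is merely loose by a factor of two in the second case.
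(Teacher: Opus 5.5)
Your proposal is correct and is essentially the paper's proof: the paper combines \Cref{lemma:exp_diff_bound}, which gives $|\mathbb{E}_{\Pi_P}[\Psi]-\mathbb{E}_{\Pi_Q}[\Psi]|\le 2K\,d_{TV}(\Pi_P,\Pi_Q)$ via H\"older's inequality on the densities of $\Pi_P,\Pi_Q$ with respect to the prior $\Pi_0$ (your Hahn decomposition is an immaterial variant), with the posterior total-variation bound of \Cref{thm:posterior_stability} (stated as \Cref{lemma:pos_tv_bound} in the appendix). Your closing remark on the factor $2$ is slightly muddled: the sup-over-sets and $\tfrac12\|\cdot\|_1$ normalizations are the same convention, and $2K$ is loose by a factor of two only under the unnormalized convention $d_{TV}=\|\cdot\|_1$. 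This does not affect the argument.
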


This is the instance of \Cref{prop:stable_summaries} for $T = \3E$, with an explicit Lipschitz constant.

This bound is structurally different from the bound in \Cref{thm:ci_stability_bound} as it does not carry the overlap factor. Its sensitivity constant depends on the posterior contraction rate $\delta_n$ (a property of the model class and prior), not on local features of $P$ near singular regions.

The term $n\cdot\delta_n$ reflects a tension between the narrowing of the posterior variance and the growing likelihood sensitivity in Bayesian estimators. For small $n$, we get wide posteriors but low likelihood sensitivity in $P$ (recall $\ln L_P(M) = - n \cdot d_{\1W}(P, P_M)$). For larger $n$, we get narrow posteriors but more likelihood sensitivity. The following theorem shows that this term can be refined to approximately $n \cdot \delta_n^2$ once we account for the posterior variance of $\Psi$, which is what makes the bound genuinely small in well-contracted regimes.

\begin{theorem}[Posterior Variance-Sensitivity Bound]
    \label{thm:posterior_variance_sensitivity}
    Let $\Psi: \3M \to \3R$ be defined by $\Psi(M) = \3E_{P_M}[Y_x]$, and let $P, Q$ be two distributions over the data such that $d_{TV}(P, Q) \leq \epsilon$. Then, 
    \begin{align}
        \Big| \ \mathbb{E}_{\Pi_P}[\Psi] - \mathbb{E}_{\Pi_Q}[\Psi] \ \Big| \leq \sqrt{\text{Var}_{\Pi_P}(\Psi(M))} \cdot \sqrt{\text{Var}_{\Pi_P}(S(M))} + o(\epsilon),
    \end{align}
    where $S(M) = \ln L_P(M) - \ln L_Q(M)$ is the relative log-likelihood (score function).
\end{theorem}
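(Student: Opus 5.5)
The plan is to write $\Pi_Q$ as an exponential tilt of $\Pi_P$ and then use the covariance form of a mean difference. By the Gibbs construction, $d\Pi_Q \propto \1L_Q\, d\Pi_0 = e^{-S(M)}\1L_P\, d\Pi_0$, with $S(M) = \ln L_P(M) - \ln L_Q(M)$. Hence
\begin{align*}
\frac{d\Pi_Q}{d\Pi_P}(M) = \frac{e^{-S(M)}}{\3E_{\Pi_P}[e^{-S}]}.
\end{align*}
This gives the exact identity
\begin{align*}
\3E_{\Pi_Q}[\Psi] - \3E_{\Pi_P}[\Psi] = \frac{\mathrm{Cov}_{\Pi_P}\bigl(\Psi, e^{-S}\bigr)}{\3E_{\Pi_P}[e^{-S}]}.
\end{align*}
It uses only that the prior and the normalization cancel. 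Since $\Psi$ is continuous and bounded by $K$ (\Cref{prop:continuity_causal_functional}), all these quantities are finite.

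Next I control $S$ in terms of $\epsilon$. With $\ln L_P(M) = -n\, d_{\1W}(P,P_M)$, the triangle inequality gives $|S(M)| \le n\, d_{\1W}(P,Q)$ uniformly in $M$. On the bounded data domain implicit in the regularity conditions of \Cref{thm:posterior_stability}, this is at most $n\,\mathrm{diam}\cdot d_{TV}(P,Q) = O(n\epsilon)$. So $\|S\|_\infty = O(n\epsilon) \to 0$.

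I then Taylor expand $e^{-S} = 1 - S + R$ with $|R| \le \tfrac12 S^2 e^{|S|}$. Constants drop out of the covariance, so $\mathrm{Cov}(\Psi, e^{-S}) = -\mathrm{Cov}(\Psi,S) + \mathrm{Cov}(\Psi,R)$. The denominator is $1 + O(n\epsilon)$. The remainder satisfies $|\mathrm{Cov}(\Psi,R)| \le 2K\|R\|_\infty = O(n^2\epsilon^2)$, which is $o(\epsilon)$ at fixed $n$. Applying Cauchy--Schwarz to $\mathrm{Cov}_{\Pi_P}(\Psi,S)$ yields the leading term $\sqrt{\mathrm{Var}_{\Pi_P}(\Psi)}\sqrt{\mathrm{Var}_{\Pi_P}(S)}$. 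The factor $1/(1+O(n\epsilon))$ multiplies this leading term, which is itself $O(K n\epsilon)$, so it changes it only by $O(\epsilon^2)$ and is absorbed into $o(\epsilon)$.

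The main obstacle is making the remainder genuinely $o(\epsilon)$. This needs a uniform (or at least $\Pi_P$-exponential-moment) bound on $S$. I would get it from the Lipschitz property of the Wasserstein loss in its first argument together with the comparison $d_{\1W} \lesssim d_{TV}$ on a bounded domain. If that comparison is unavailable, my fallback is to assume the ``regularity conditions'' of \Cref{thm:posterior_stability}, namely $\3E_{\Pi_P}[S^2 e^{|S|}] = O(\epsilon^2)$. Finally, I would remark how the refinement to $n\delta_n^2$ follows: $\mathrm{Var}_{\Pi_P}(S)^{1/2} \lesssim n\epsilon\,\delta_n$ because $S$ varies only over the contraction ball, and $\mathrm{Var}(\Psi)^{1/2} \lesssim \delta_n$ by continuity of $\Psi$.
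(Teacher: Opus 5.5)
Your proof is correct. It shares the paper's skeleton: the tilt representation $d\Pi_Q/d\Pi_P = e^{-S}/\mathbb{E}_{\Pi_P}[e^{-S}]$, the uniform bound $|S|\le nD\epsilon$ from $d_{\mathcal{W}}\le D\,d_{TV}$, and a final Cauchy--Schwarz step on a covariance with the score. Where you differ is in how you extract the first-order term.

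The paper never linearizes $e^{-S}$. It interpolates along the geometric path $d\Pi_t/d\Pi_P = e^{-tS}/Z_t$, $t\in[0,1]$, computes $\frac{d}{dt}\mathbb{E}_{\Pi_t}[\Psi] = -\mathrm{Cov}_{\Pi_t}(\Psi,S)$, integrates, and applies Cauchy--Schwarz at each $t$. This gives an exact, remainder-free inequality $|\mathbb{E}_{\Pi_P}[\Psi]-\mathbb{E}_{\Pi_Q}[\Psi]|\le\int_0^1\sqrt{\mathrm{Var}_{\Pi_t}(\Psi)}\sqrt{\mathrm{Var}_{\Pi_t}(S)}\,dt$, valid for every $\epsilon$. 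The $o(\epsilon)$ enters only when the path variances are replaced by their values under $\Pi_P$. The paper asserts that step rather briefly; it follows from $e^{-tS}\in[e^{-nD\epsilon},e^{nD\epsilon}]$, which makes each variance along the path equal to its $\Pi_P$ value times $1+O(n\epsilon)$.

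Your one-shot identity $\mathrm{Cov}_{\Pi_P}(\Psi,e^{-S})/\mathbb{E}_{\Pi_P}[e^{-S}]$, combined with a Lagrange remainder, stays at $\Pi_P$ throughout and yields an explicit error of order $Kn^2D^2\epsilon^2$. So your justification of the $o(\epsilon)$ term is the more transparent of the two. The paper's route instead buys a non-asymptotic path bound that does not depend on $nD\epsilon$ being small. In both arguments the remainder is $o(\epsilon)$ only at fixed $n$.

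Two small corrections, neither of which affects the theorem itself.
\begin{itemize}
\item The bound $|\Psi|\le K$ comes from the bounded-support standing assumption of the appendix, not from \Cref{prop:continuity_causal_functional}, which gives only continuity.
\item Your closing heuristic for the $n\delta_n^2$ refinement does not follow from ``$S$ varies only over the contraction ball.'' The $2n$-Lipschitz dependence of $S$ on $P_M$ (in $d_{\mathcal{W}}$), together with $|S|\le nD\epsilon$, gives at best $\mathrm{Var}_{\Pi_P}(S)^{1/2}\lesssim n\min(D\epsilon,\delta_n)$. The reason is that to first order $S(M)\approx -nD\epsilon\langle\Delta,\eta_M\rangle$ depends on the direction of $P_M-P$, not on its size. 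The paper obtains the product $n\epsilon\delta_n$ only under the extra assumption $\sigma_n=O(\delta_n)$ (\Cref{rem:score_var_regimes}), which it flags as possibly failing. Likewise, $\mathrm{Var}_{\Pi_P}(\Psi)^{1/2}\lesssim\delta_n$ needs Lipschitz continuity of $\Psi$, not mere continuity.
\end{itemize}
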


The bounds in \Cref{cor:bound_posterior_mean,thm:posterior_variance_sensitivity} can be related as we increase the sample size $n$ (under some regularity conditions, see \Cref{lemma:variance_score}). For a perturbation of size $\epsilon$ in $P$ we can show that,
\begin{align}
    \sqrt{\text{Var}_{\Pi_P}(\Psi(M))} \approx \delta_n, \qquad \sqrt{\text{Var}_{\Pi_P}(S(M))} \approx n \cdot \delta_n \cdot \epsilon.
\end{align}

\Cref{cor:bound_posterior_mean} is therefore a global worst-case bound that assumes the posterior does not contract, i.e., the standard deviation equals the maximum possible value $K$. In general we can expect $\delta_n$ to shrink with increasing sample size $n$ at a rate that depends on the complexity of $\3M$. 




The next example continues the discussion of the instability of the model-selective estimator in \Cref{ex:map_instability}, showing how the posterior mean changes as we transition between two nearby data distributions.

\begin{example}[\Cref{ex:map_instability} continued]
\label{ex:map_instability_continued}
    Using the same two $M_1$ and $M_2$ as in \Cref{ex:map_instability}, as $k \to \infty$ both models are approximately equally likely under the posterior as both give an equally good fit to the input data (for a reasonable prior). We might imagine that for input data $P=P_{M_1}$ the posterior will assign mass $(0.5 + \eta)$ to $M_1$ and $(0.5 - \eta)$ to $M_2$. And similarly, given input data $P=P_{M_2}$ the posterior will assign mass $(0.5 + \eta)$ to $M_2$ and $(0.5 - \eta)$ to $M_1$. The posterior mean averages over the causal effects induced by the two likely SCMs and the maximum deviation as we transition from $P_{M_1}$ to $P_{M_2}$ is roughly,
    $$\Big| \ \mathbb{E}_{\Pi_{P_{M_1}}}[\Psi] - \mathbb{E}_{\Pi_{P_{M_2}}}[\Psi] \ \Big| \approx 2 \eta \to 0 \quad \text{as} \quad k \to \infty.$$
    The posterior mean is stable as $k \to \infty$, in contrast to the posterior mode. A numerical illustration of this phenomenon is shown in \Cref{fig:ex41_combined}.\qed
\end{example}


\section{Summarization taxonomy and caveats}
\label{sec:discussion}

We now make the underlying loss-alignment mechanism explicit and collect the taxonomy.

Each summary can be cast as the Bayes action under a posterior expected loss.
Model-selective estimators minimize the indicator loss on the \emph{identity} of the model,
\begin{align}
    \widehat{M} \in \arg\min_{M \in \3M} \3E_{\Pi_P}\bigl[\I\{M \neq M^*\}\bigr],
\end{align}
where $M^*$ is the true SCM\footnote{Equivalently, $\arg\max_M \Pi_P(M)$, which by Bayes' rule equals $\arg\max_M \1L_P(M)\Pi_0(M)$.}.
The posterior mean $\3E_{\Pi_P}[\Psi]$ minimizes squared-error loss on the \emph{effect itself},
\begin{align}
    \arg\min_{\hat\Psi \in \3R} \3E_{\Pi_P}\bigl[(\hat\Psi - \Psi(M))^2\bigr].
\end{align}
The first loss does not depend on the value of $\Psi(M)$ at all: a small data change that flips the mode from $M_1$ to $M_2$ may shift the estimate by an arbitrarily large amount $|\Psi(M_1) - \Psi(M_2)|$, while the model-space loss only records that we picked a ``wrong'' model.
The second loss is convex and continuous in the action and respects $|\Psi(M_1) - \Psi(M_2)|$ by construction.
Stability, in this view, is a consequence of the fact that the loss is aligned with the quantity one is estimating.

This reading tells us that the dichotomy is not specific to $\arg\max$ versus $\3E$.
\emph{Any} effect-aligned functional $T$ inherits the stability of \Cref{prop:stable_summaries} via $\hat\Psi(P) = T(\Pi_P)$.
Conversely, any decision rule whose loss is invariant under arbitrary changes in $\Psi(M)$---mode on model space, MLE, $\arg\max$ over a parametric submodel---inherits the discontinuity of \Cref{thm:model_selective_discontinuity}.
The rift is between losses that live on the effect and losses that live on the model.

\subsection{Is effect-aligned summarization more stable?}

The claim is justified in the sense that effect-aligned summaries such as $T = \3E$ are Lipschitz in $\epsilon$ for fixed $n$, $\delta_n$, and prior hyperparameters (\Cref{cor:bound_posterior_mean,prop:stable_summaries}), while model-selective summaries are not (\Cref{thm:model_selective_discontinuity}).

However, several caveats are worth surfacing. 

Firstly, model-selective and effect-aligned summaries typically target different estimands rather than being different estimators for the same estimand.
As such they strike different bias-variance tradeoffs and our discussion can be equally framed within the classical theory of regularization of ill-posed inverse problems \citep{evans2002inverse,maclaren2019can}.
Whether one estimand is preferred over the other depends on our relative tolerance for bias and variance.
For example, $T = \3E$ trades off potentially infinite variance in non-parametric model spaces for non-zero bias, while some instances of model-selective reporting are unbiased but involve larger variance.

Secondly, in many practical settings, the difference between selection strategies collapses.
For example, in a strongly-convex log-likelihood, identifiable, parametric regime, the posterior is asymptotically Gaussian and unimodal, and mode and mean concentrate on the same value at rate $n^{-1/2}$.
The setting that is problematic is the nonparametric, multimodal regime that arises under imperfect overlap and rich SCM classes.
The taxonomy this paper analyzes is thus informative in classes broad enough that one cannot rule out multimodality \emph{a priori}. 

Thirdly, our results study the question of stability as robustness to noise, not consistency.
That is, we are interested in bounding how much the estimate moves under small perturbations of $P$; but do not bound how far the estimate is from the true effect (which connects to the bias-variance tradeoff).
A misspecified prior or restricted hypothesis class produces a stably wrong estimator.
We restrict to point-identified targets throughout; the same $T$-taxonomy applies whenever one reports a point summary of SCM uncertainty.

Finally, given that we quantify uncertainty over the causal effect through a posterior distribution over SCMs, uncertainty from instability and uncertainty from under-identification of the causal effect can both be expressed with the same framework. An explicitly Bayesian approach (if a reasonable prior can be defined) has the potential to honestly quantify a wide range of inherent difficulties in the process of drawing causal inferences from observational data.  



\section{Conclusion}
\label{sec:conclusion}
Causal inference from observational data can be unstable even when a causal effect is point-identified.
A first layer of difficulty is functional ill-conditioning: two distributions that are arbitrarily close may imply different effects \citep{robins1997toward,robins2003uniform,maclaren2019can}.
This is a property of the identification map itself, before any reporting rule or estimator is chosen. We consider a second layer of difficulty, defined as ``summarization instability'', which is a consequence of collapsing a multimodal distribution over the space of structural causal models to a single number. Many standard point estimates can be read as such summaries. Among them we define a class of ``model-selective'' summaries that can jump discontinuously in $P$ when compatible SCMs tie, independently of the discontinuity in the identification map. These include the posterior mode, plug-in estimators, and more generally any rule that reports the causal effect under a single selected SCM. On the other hand, there is a class of ``effect-aligned'' summaries that includes the posterior mean, median, and quantiles, shown to be continuous in $P$ under regularity conditions. This taxonomy can be explained through a decision-theoretic reading that shows that stability depends on whether the implicit loss of the estimator is aligned with model identity or with the causal effect.

\bibliography{bibliography}
\bibliographystyle{plainnat}

\newpage

\appendix

\hypersetup{
    colorlinks=true,
    linkcolor=sexybrown,
    filecolor=sexybrown,
    urlcolor=sexybrown,
}
\addcontentsline{toc}{section}{Appendix} 
\part{Appendix} 
\parttoc 
\hypersetup{
    colorlinks,
   linkcolor={pierCite},
    citecolor={pierCite},
    urlcolor={pierCite}
}

\newpage

\section{Related Work}
\label{sec:app_related_work}

A large body of work studies when causal quantities can be expressed as functionals of the observational (or experimental) data distribution, given graphical or algebraic assumptions about the data-generating process \citep{tian2002general,shpitser2006identification,shpitser2012validity,huang2008completeness,bareinboim2022pearl,pearl2022external}. Complementary to these structural results is a statistical literature on how to estimate identified effects from finite samples, including semiparametric efficiency theory, double machine learning, and doubly robust estimation \citep{robins1997toward,jung2021estimating,kennedy2023towards}. Our focus is somewhat orthogonal: we take identifiability as given and ask when estimation from the data distribution is well-posed in the sense of continuity in $P$. As the introduction notes, identifiable causal functionals need not be continuous in $P$; closeness of distributions does not imply closeness of effects. That gap between identification and stable estimation is the starting point for the present work.

The discontinuity of causal estimation has been documented in several settings. \citet{robins2003uniform} show that in sufficiently rich DAG model classes there need not exist uniformly consistent estimators or valid confidence intervals for causal effects, even when effects are identifiable. \citet{robins1997toward} and \citet{stone1980optimal} connect related difficulties to curse-of-dimensionality phenomena and optimal nonparametric rates. \cite{schulman2016stability,gordon2021condition} more recently make the instability explicit as a sensitivity of the identification map and study this in detail in the context of the ID algorithm. In light of this and other results, \citet{aronow2021nonparametric} argue for the use of randomized data and show that this leads to stable estimators as you avoid extrapolation in regions that are not covered by the data. The tension between uniqueness (identifiability) and continuity (stable estimability) is also well known in classical inverse problems and robust statistics. Hadamard's notion of a well-posed problem requires existence, uniqueness, and stability of the solution; identifiability corresponds to injectivity of a forward map, which does not guarantee a continuous inverse. \citet{maclaren2019can} develop this perspective for causal inference explicitly, distinguishing ``what can be identified'' from ``what can be estimated'' and relating intrinsic barriers to estimability to ill-posedness. \citet{evans2002inverse} and \citet{horowitz2014ill} review parallel ideas in statistics and econometrics, where irregular identification and ill-posed inverse problems arise routinely. Regularization---penalizing complexity of solutions or aggregating over a set of compatible models---is the standard remedy for ill-posedness \citep{tikhonov1977solutions}. We connect to that tradition in \Cref{sec:discussion}: effect-aligned summaries can be read as regularized targets (bias--variance tradeoff) relative to model-selective plug-in rules.

Related work on partial identification \citep{manski1990nonparametric,balke1994bounds,zhang2022partial,bellot2023towards,bellot2024estimating} studies bounds and posterior uncertainty when point identification fails.
Our focus is complementary: we take point identification as given and ask how different point summaries $T(\Pi_P)$ behave under perturbations of $P$, but the underlying ambiguity is the same: a finite sample only determines the data distribution up to tolerance, leaving many SCMs compatible with the evidence that imply different causal effects. An explicitly Bayesian approach, propagating the uncertainty in the SCM generating the data quantifies both under-identifiability and instability in identifiable settings.

Bayesian approaches to causal inference already maintain a distribution over compatible models, mechanisms, or identified sets.
Partial-identification methods represent ambiguity via posterior mass over effect ranges \citep{zhang2022partial,jalaldoust2024partial}; flexible outcome models integrate over uncertain response surfaces \citep{hill2011bayesian,hahn2020bayesian}; and structure-learning pipelines maintain posteriors over graphs or equivalence classes \citep{claassen2012bayesian}.
Our focus is on the stability profiles of point summaries of those posteriors under perturbations of the observational distribution $P$.
Many Bayesian workflows are only partially aggregative at the reporting stage.
A pipeline that is Bayesian over graphs or mechanisms but reports the MAP structure and a plug-in effect remains model-selective and inherits the discontinuities of \Cref{thm:model_selective_discontinuity}, even when $\Pi_P$ itself is computed exactly.
Conversely, effect-aligned summaries---posterior means, medians, quantiles, and other weakly continuous functionals on $\1P(\3R)$---inherit continuity from \Cref{prop:stable_summaries}.
Flexible Bayesian estimators such as BART are stable insofar as they report expectations over the causal functional; instability re-enters when implementation collapses to selecting one fitted mechanism or hyperparameter configuration \citep{hill2011bayesian,hahn2020bayesian}.

\newpage
\section{Proofs}

This section develops and proves the theoretical results presented in the main body of this paper.

\subsection{Proofs of statements in \Cref{sec:extrapolation_gap}}
\label{app:proof_lower_bound}

\textbf{\Cref{thm:ci_stability_bound} restated.}
Assume that $X$ is not connected to any of its children via a bi-directed path in $\G_{An(Y)}$ and $V_1 \prec V_2 \prec \cdots \prec V_k$ be a topological sort of the variables in $\G_{An(Y)}$ where $V^{(i)} = \{V_1, \dots, V_i\}$. Let $\C$ be the $c$-component in $\G_{An(Y)}$ that contains $X$. Then,
\begin{align}
    \sup_{Q:d_{TV}(P,Q) \leq \delta} \Big|\ \3E_P[Y_x] - \3E_Q[Y_x] \ \Big| = \Omega\left( K \cdot \delta \cdot \sup_{\v\setminus x}\left( \frac{\sum_x \prod_{V_i \in \C} P(v_i \mid v^{(i-1)})}{\prod_{V_i \in \C} P(v_i \mid v^{(i-1)})}\right) \right).
\end{align}

\begin{proof}
    We give a self-contained proof of the lower bound in \Cref{thm:ci_stability_bound} that constructs a single admissible perturbation $Q$, of total-variation distance exactly $\delta$ from $P$, achieving the claimed order. 

    \textbf{Setup.} We will leverage a special type of clustering of nodes in the graph $\G$ called the \emph{confounded-component} (or $c$-component for short) from \cite{tian2002general}. Recall that for a causal graph $\G$, a subset $\C \subseteq \V$ is a $c$-component if any pair $V_i, V_j \in \C$ is connected by a bi-directed path (sequence of bi-directed edges) in $\G$. Let $\3C(\G)$ denote the set of $c$-components in $\G$. Let $\3Q[\C] = P(\c \mid do(\v \setminus \c))$.

    By marginalizing over the observed variables,
    \begin{align}
        \ P(y_x)= \sum_{y} \ y \ \sum_{\v \setminus y} \3Q[\V \setminus X].
    \end{align}
    Let $\C$ be the $c$-component in $\G_{An(Y)}$ that contains $X$. By using the standard decomposition of $C$-factors, we have that,
    \begin{align}
        \3Q[\V \setminus X] = \3Q[\C \setminus X] \prod_{\Dc \in \3C(\G) \setminus \C} \3Q[\Dc], \qquad \3Q[\V] = \3Q[\C] \prod_{\Dc \in \3C(\G) \setminus \C} \3Q[\Dc]
    \end{align}
    
    $\C\setminus X$ is ancestral in $\G_{\C}$, meaning that $\C\setminus X$ includes all of its ancestors in $\G_{\C}$ since we have assumed that $X$ is not connected to any of its children via a bi-directed path. By Lemma~4 in \citet{tian2002general},
    $$\3Q[\C\setminus X] = \sum_{x} \ \3Q[\C].$$
    
    Further, for $V_1 \prec V_2 \prec \cdots \prec V_k$ a topological ordering of the variables in $\G_{An(Y)}$ where $V^{(i)} = \{V_1, \dots, V_i\}$, Lemma~4 in \citet{tian2002general} gives
    \begin{align}
        \3Q[\C] = \prod_{V_i\in \C} P(v_i \mid v^{(i-1)}).
    \end{align}

    Our causal effect of interest can then be written as,
    \begin{align}
        \3E_P[Y_x] &= \sum_{v} \ y \ \frac{\3Q[\C \setminus X]}{\3Q[\C]} P(\v)\\
        &= \sum_{v} \ y \ \frac{\sum_x \prod_{V_i\in \C} P(v_i \mid v^{(i-1)})}{\prod_{V_i\in \C} P(v_i \mid v^{(i-1)})} P(\v)\\
        &= \3E_P[w(\V)\cdot \I\{X=x\} \cdot Y],
    \end{align}
    where $w(\v) = \frac{\sum_x \prod_{V_i\in \C} P(v_i \mid v^{(i-1)})}{\prod_{V_i\in \C} P(v_i \mid v^{(i-1)})}$.

    For any distribution $R$ over $\V$ Markov to $\G$ and under which $\3E_R[Y_x]$ is identifiable,
    \begin{align}
        \3E_R[Y_x] \;=\; \3E_R\bigl[\,w_R(\V)\,\I\{X=x\}\,Y\,\bigr], \qquad w_R(\v) \;=\; \frac{\sum_x \prod_{V_i\in\C} R(v_i \mid v^{(i-1)})}{\prod_{V_i\in\C} R(v_i \mid v^{(i-1)})}.
    \end{align}
    We will construct $Q$ such that
    \begin{align}
        \label{eq:clean_lb:preserve}
        Q(v_i \mid v^{(i-1)}) \;=\; P(v_i \mid v^{(i-1)}) \qquad \text{for every } V_i \in \C \text{ and every } v^{(i-1)} \text{ in the joint support,}
    \end{align}
    which is equivalent to $\3Q_Q[\C] = \3Q_P[\C]$ and hence to $w_Q \equiv w_P$. Then,
    \begin{align}
        \label{eq:clean_lb:linear_form}
        \3E_P[Y_x] - \3E_Q[Y_x] 
        \;=\; \int_{\Omega_\V} w_P(\v)\,\I\{X=x\}\,Y(\v)\,\bigl(p(\v) - q(\v)\bigr)\,d\v.
    \end{align}
    Because the construction below modifies a single conditional in the topological factorization of $P$ and leaves all others untouched, $Q$ is automatically a probability measure Markov to $\G$, and the identification hypotheses of \Cref{thm:ci_stability_bound} (which depend on $\G$ alone) transfer to $Q$.

    \textbf{The function $w_P$ and the perturbation point.} Set $k^* := \max\{i : V_i \in \C\}$. Inspection of $w_P$ shows that it depends on $\v$ only through $v^{(k^*)}$, so we regard $w_P$ as a function on $\Omega_{V^{(k^*)}}$. Fix $\eta \in (0, 1/2)$ and choose a point $v^{(k^*)*}$ with $X^* = x$ at which
    \begin{align}
        w_P(v^{(k^*)*}) \;\geq\; (1-\eta)\,\sup_{\v\setminus x}\, w_P(\v).
    \end{align}
    By continuity of $w_P$ (or, in the discrete case, restriction to a single atom of positive mass), choose a measurable neighborhood $U^*$ of $v^{(k^*)*}$ in $\Omega_{V^{(k^*)}}$ on which $X = x$ identically and
    \begin{align}
        w_P(v^{(k^*)}) \;\geq\; (1-2\eta)\,\sup_{\v\setminus x}\, w_P(\v), \qquad \beta \;:=\; P(V^{(k^*)} \in U^*) \;>\; 0.
    \end{align}

    \textbf{Outcome variable and oscillation assumption.} We assume $Y \notin \C$; the case $Y \in \C$ is discussed in \Cref{rem:clean_lb:Y_in_C} below. Then $Y$ appears after every variable in $\C$ in the topological order; without loss of generality, $Y = V_k$. Write $V^{(k-1)}$ for the remaining variables.

    We further assume that there exist two measurable sets $S^+, S^- \subseteq \Omega_Y$ and constants $c_Y \in (0, K]$, $\alpha_0 \in (0, 1/2]$, such that
    \begin{align}
        \inf_{y \in S^+}\, y \;-\; \sup_{y \in S^-}\, y \;\geq\; 2c_Y,
    \end{align}
    and $P(Y \in S^{\pm} \mid v^{(k-1)}) \geq \alpha_0$ for every $v^{(k-1)}$ with $v^{(k^*)} \in U^*$. This is a mild positive-mass/spread assumption on the conditional law of $Y$ near the maximizer of $w_P$; the resulting lower bound is $\Omega(c_Y\,\delta\,\sup w_P)$, equivalently $\Omega(K\,\delta\,\sup w_P)$ when $c_Y$ is a constant fraction of $K$.

    \textbf{Construction of $Q$.} Fix $\delta \in (0,\, \beta\,\alpha_0)$ and set $\alpha := \delta/\beta \in (0, \alpha_0)$. Define $Q$ by leaving every topological factor of $P$ unchanged except $P(Y \mid V^{(k-1)})$, which is modified on $U^*$ by moving conditional mass $\alpha$ from $S^-$ to $S^+$:
    \begin{align}
        q(Y \mid v^{(k-1)}) \;=\;
        \begin{cases}
            p(Y\mid v^{(k-1)})\cdot\bigl(1 + \alpha/P(S^+ \mid v^{(k-1)})\bigr), & v^{(k^*)} \in U^*,\; Y \in S^+,\\[2pt]
            p(Y\mid v^{(k-1)})\cdot\bigl(1 - \alpha/P(S^- \mid v^{(k-1)})\bigr), & v^{(k^*)} \in U^*,\; Y \in S^-,\\[2pt]
            p(Y\mid v^{(k-1)}), & \text{otherwise}.
        \end{cases}
    \end{align}
    The conditional $q(\cdot \mid v^{(k-1)})$ is a valid probability density (non-negativity uses $\alpha \leq \alpha_0 \leq P(S^- \mid v^{(k-1)})$, and the total mass is unchanged since $\alpha = +\alpha$ added on $S^+$ cancels $-\alpha$ removed from $S^-$). Condition \eqref{eq:clean_lb:preserve} holds since no conditional of any $V_i \in \C$ is altered. Hence $Q$ is Markov to $\G$ and $w_Q \equiv w_P$.

    \textbf{Total-variation distance.} The marginal of $V^{(k-1)}$ is unchanged, so $p(\v) - q(\v) = P(v^{(k-1)})\,\bigl(p(Y\mid v^{(k-1)}) - q(Y\mid v^{(k-1)})\bigr)$, and
    \begin{align}
        d_{TV}(P, Q)
        &= \tfrac{1}{2}\int P(v^{(k-1)}) \int_{\Omega_Y}\bigl| p(Y\mid v^{(k-1)}) - q(Y\mid v^{(k-1)})\bigr|\,dY\,dv^{(k-1)}\\
        &= \int_{\{v^{(k-1)}\,:\,v^{(k^*)} \in U^*\}} P(v^{(k-1)})\cdot \alpha\;dv^{(k-1)}
        \;=\; \alpha\beta \;=\; \delta,
    \end{align}
    where we used $d_{TV}(q(\cdot\mid v^{(k-1)}),\, p(\cdot\mid v^{(k-1)})) = \alpha$ on $U^*$ (mass $\alpha$ moved between the disjoint sets $S^+, S^-$) and $0$ off $U^*$.

    \textbf{Mean shift of $Y$ on $U^*$.} For $v^{(k-1)}$ with $v^{(k^*)} \in U^*$,
    \begin{align}
        \3E_Q[Y \mid v^{(k-1)}] - \3E_P[Y \mid v^{(k-1)}]
        &= \alpha\,\bigl(\3E_P[Y \mid Y \in S^+, v^{(k-1)}] - \3E_P[Y \mid Y \in S^-, v^{(k-1)}]\bigr)\\
        &\geq \alpha\,\bigl(\inf_{S^+} y - \sup_{S^-} y\bigr) \;\geq\; 2\alpha\, c_Y.
    \end{align}

    \textbf{Lower bound on the causal effect difference.} Since $w_P(\v)\,\I\{X=x\}$ depends only on $v^{(k^*)} \subseteq v^{(k-1)}$, it factors out of the conditional integral over $Y$ in \eqref{eq:clean_lb:linear_form}:
    \begin{align}
        \3E_Q[Y_x] - \3E_P[Y_x]
        &= \int P(v^{(k-1)})\, w_P(v^{(k^*)})\, \I\{X=x\}\,\Bigl(\3E_Q[Y\mid v^{(k-1)}] - \3E_P[Y\mid v^{(k-1)}]\Bigr)\,dv^{(k-1)}\\
        &\geq \int_{\{v^{(k-1)}\,:\,v^{(k^*)} \in U^*\}} P(v^{(k-1)})\cdot(1-2\eta)\sup_{\v\setminus x} w_P \cdot 2\alpha\, c_Y\;dv^{(k-1)}\\
        &= 2\,c_Y\,(1-2\eta)\,\alpha\,\beta\,\sup_{\v\setminus x} w_P
        \;=\; 2\,c_Y\,(1-2\eta)\,\delta\,\sup_{\v\setminus x} w_P.
    \end{align}
    Absorbing $2\,c_Y\,(1-2\eta)$ into the constant of $\Omega$, this gives
    \begin{align}
        \sup_{Q:\, d_{TV}(P,Q) \leq \delta} \bigl|\3E_P[Y_x] - \3E_Q[Y_x]\bigr|
        \;=\; \Omega\!\left( K\cdot\delta\cdot\sup_{\v\setminus x}\frac{\sum_x \prod_{V_i\in\C} P(v_i\mid v^{(i-1)})}{\prod_{V_i\in\C} P(v_i\mid v^{(i-1)})}\right),
    \end{align}
    under the standing oscillation assumption that $c_Y$ is a constant fraction of $K$. This is the same lower bound asserted in \Cref{thm:ci_stability_bound}.


    \begin{remark}[The case $Y \in \C$]
    \label{rem:clean_lb:Y_in_C}
    If $Y \in \C$, the conditional $P(Y \mid V^{(k-1)})$ is constrained by \eqref{eq:clean_lb:preserve}, so the construction above does not apply directly. Two adjustments recover the bound. \emph{(i)} If there exists any variable $V_j \notin \C$ with $j > k^*$ whose value materially influences $\3E_P[Y \mid V^{(j-1)}, V_j]$, then moving conditional mass in $P(V_j \mid V^{(j-1)})$ instead of in $P(Y \mid V^{(k-1)})$ produces the same mean shift in $Y$ up to a constant, and the proof goes through unchanged. \emph{(ii)} If $\V = \C$ (no variables outside the $c$-component containing $X$), then every $Q$ satisfying \eqref{eq:clean_lb:preserve} agrees with $P$ on the full joint, and no $w_Q \equiv w_P$ perturbation produces any change in the effect. In that degenerate case the lower bound must be recovered via a non-trivial $w_Q \neq w_P$ construction; this can be made rigorous by a first-order (Fr\'echet) expansion of $R \mapsto \3E_R[Y_x]$ at $P$, whose influence function is precisely $w_P\,\I\{X=x\}\,Y$ (up to centering), giving the same $\Omega(K\,\delta\,\sup w_P)$ order to leading order in $\delta$.
    \end{remark}
\end{proof}

\textbf{\Cref{cor:stability_hierarchy} restated.} Using the conditions of \Cref{thm:ci_stability_bound}, the instability factors appearing in the lower bounds satisfy
    \begin{align}
        \sup_{\v\setminus x}\left( \frac{\sum_x \prod_{V_i \in \C} P(v_i \mid v^{(i-1)})}{\prod_{V_i \in \C} P(v_i \mid v^{(i-1)})}\right)
        \ \geq \
        \sup_{pa_X}\left(\frac{1}{P(x \mid pa_X)} \right)
        \ \geq \
        \frac{1}{P(x)}.
    \end{align}
    In particular, the lower-bound prefactor for a semi-Markov causal query is at least as large as that for the corresponding Markov backdoor adjustment, which is in turn at least as large as that for the statistical query $\3E_P[Y \mid x]$.

\begin{proof}
    To show this we use Cauchy--Schwarz and Jensen's inequalities to derive an ordering over the instability factors of the three models. Our goal is to show that,
    \[
        \underbrace{\sup_{\v\setminus x}\left( \frac{\sum_x \prod_{V_i \in \C} P(v_i \mid v^{(i-1)})}{\prod_{V_i \in \C} P(v_i \mid v^{(i-1)})}\right)}_{\text{Causal query in semi-Markov model}} 
        \ \geq \
        \underbrace{\sup_{pa_x}\left(\frac{1}{P(x \mid pa_X)} \right)}_{\text{Causal query in Markov model}} 
        \ \geq 
        \underbrace{\frac{1}{P(x)}}_{\text{Statistical query}}.
    \]
    The first term on the l.h.s.\ is the general instability factor of \Cref{thm:ci_stability_bound}, the middle term is the instability factor we would obtain by adjusting for the parents of $X$ in a Markov model, and the last term is the instability factor we obtain for the corresponding statistical query.

    Consider the lower bound of \Cref{thm:ci_stability_bound},
    \begin{align}
        \sup_{Q:d_{TV}(P,Q) \leq \delta} \Big|\ \3E_P[Y_x] - \3E_Q[Y_x] \ \Big| = \Omega\left( K \cdot \delta \cdot \sup_{\v\setminus x}\left( \frac{\sum_x \prod_{V_i \in \C} P(v_i \mid v^{(i-1)})}{\prod_{V_i \in \C} P(v_i \mid v^{(i-1)})}\right) \right).
    \end{align}
    Let,
    \begin{align}
        R = \frac{\sum_x \prod_{V_i \in \C} P(v_i \mid v^{(i-1)})}{\prod_{V_i \in \C} P(v_i \mid v^{(i-1)})}, \quad A = \sum_x \prod_{V_i \in \C} P(v_i \mid v^{(i-1)}),
    \end{align}
    where $A=A(\c\setminus x)$ is the numerator of the ratio $R$ and is interpreted as a function of $\C\setminus X$ with all other variables fixed. A
    is a valid probability distribution over $\C\setminus X$ since,
    \begin{align}
        \sum_{\c\setminus x} A(\c\setminus x) = \sum_{\c\setminus x}\sum_x \prod_{V_i \in \C} P(v_i \mid v^{(i-1)}) = 1.
    \end{align}
    We then consider calculating the expectation of the ratio $R$ over the distribution $A$,
    \begin{align}
        \3E_A[R] = \sum_{\c\setminus x} A(\c\setminus x) \cdot R = \frac{1}{P(x \mid v^{(i-1)})}\sum_{\c\setminus x} A(\c\setminus x) \frac{A(\c\setminus x)}{\prod_{V_i \in \C\setminus X} P(v_i \mid v^{(i-1)})}.
    \end{align}
    Note that the denominator of the ratio is evaluated for a fixed $x$, the intervention value. By the Cauchy--Schwarz inequality (specifically Titu's lemma, $\sum \frac{a_i^2}{b_i} \geq \frac{(\sum a_i)^2}{\sum b_i}$), we have that,
    \begin{align}
        \3E_A[R] &= \frac{1}{P(x \mid v^{(i-1)})}\sum_{\c\setminus x} A(\c\setminus x) \frac{A(\c\setminus x)}{\prod_{V_i \in \C\setminus X} P(v_i \mid v^{(i-1)})}\\
        & \geq \frac{1}{P(x \mid v^{(i-1)})} \frac{\left(\sum_{\c\setminus x} A(\c\setminus x)\right)^2}{\sum_{\c\setminus x} \prod_{V_i \in \C\setminus X} P(v_i \mid v^{(i-1)})}\\
        & = \frac{1}{P(x \mid v^{(i-1)})}.
    \end{align}
    We can then conclude that,
    \begin{align}
        \sup_{\v\setminus x}\left( \frac{\sum_x \prod_{V_i \in \C} P(v_i \mid v^{(i-1)})}{\prod_{V_i \in \C} P(v_i \mid v^{(i-1)})}\right) \geq \3E_A[R] \geq \sup_{v^{(i-1)}}\frac{1}{P(x \mid v^{(i-1)})}.
    \end{align}
    We can then arrive at the result by using Jensen's inequality and properties of suprema. Let $W \in V^{(i-1)}$ be in the conditioning set of the density above. Then,
    \begin{align}
        \3E\left[\frac{1}{P(x \mid v^{(i-1)} \setminus w, w)}\right] \geq \frac{1}{\3E\left[P(x \mid v^{(i-1)} \setminus w, w)\right]} = \frac{1}{ P(x \mid v^{(i-1)} \setminus w) },
    \end{align}
    where the expectation is taken over the distribution $P(w \mid v^{(i-1)} \setminus w)$. This holds for any instantiation of the conditioning set $W$. Therefore,
    \begin{align}
        \sup_{v^{(i-1)}}\frac{1}{P(x \mid v^{(i-1)})} \geq \sup_{v^{(i-1)} \setminus w}\frac{1}{ P(x \mid v^{(i-1)} \setminus w) }.
    \end{align}
    By generalizing this relationship we then conclude that,
    \[
        \sup_{\v\setminus x}\left( \frac{\sum_x \prod_{V_i \in \C} P(v_i \mid v^{(i-1)})}{\prod_{V_i \in \C} P(v_i \mid v^{(i-1)})}\right)
        \ \geq \
        \sup_{pa_x}\left(\frac{1}{P(x \mid pa_X)} \right)
        \ \geq 
        \frac{1}{P(x)},
    \]
    showing the claim.

\end{proof}

\begin{theorem}
    \label{thm:general_ci_stability_bound}
    Let $\G$ be a causal diagram with $X\in An_{\G}(Y)$, $P$ be an observational distribution over $\V$, and $\3E_P[Y_x]$ an identifiable causal effect. Let $\Dc = An(Y)_{\G_{\overline X}}$ and $V_1 \prec V_2 \prec \cdots \prec V_k$ be a topological sort of the variables in $\G$ where $V^{(i)} = \{V_1, \dots, V_i\}$. Then,
    \begin{align}
        \label{eq:general_ci_stability_bound}
        \sup_{Q:d_{TV}(P,Q) \leq \delta} \Big|\ \3E_P[Y_x] - \3E_Q[Y_x] \ \Big| = \Omega\left( K \cdot \delta \cdot \sup_{\v\setminus x}\left( \frac{P(\d \setminus x \mid do(x))}{\prod_{V_i \in \Dc} P(v_i \mid v^{(i-1)})}\right) \right).
    \end{align}
\end{theorem}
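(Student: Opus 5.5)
We will follow the template of the proof of \Cref{thm:ci_stability_bound}: first write $\3E_P[Y_x]$ as a weighted linear functional of $P$, then build one admissible perturbation $Q$ at total-variation distance $\delta$ that leaves the weight unchanged and moves conditional mass of $Y$ near the maximizer of the weight.

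\textbf{Step 1: the weighting representation.} Since $\3E_P[Y_x]$ is identifiable and $Y$ depends on the intervened model only through $\Dc = An(Y)_{\G_{\overline X}}$, we have
\begin{align*}
P(y_x) = \sum_{\d\setminus\{x,y\}} P(\d\setminus x \mid do(x)).
\end{align*}
Using the chain-rule factorization $P(\d)=\prod_{V_i\in\Dc}P(v_i\mid v^{(i-1)})$ along the fixed topological order (after marginalizing non-ancestors), define
\begin{align*}
w_P(\v) = \frac{P(\d\setminus x\mid do(x))}{\prod_{V_i\in\Dc}P(v_i\mid v^{(i-1)})} .
\end{align*}
Multiplying and dividing by the observational factorization gives $\3E_P[Y_x]=\3E_P[w_P(\V)\I\{X=x\}Y]$. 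This is the same importance-weighting identity as in the $c$-component case, now with the general identified interventional distribution in the numerator.

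\textbf{Step 2: a $w$-preserving perturbation.} The numerator is produced by the ID algorithm as a rational function of conditionals $P(v_i\mid v^{(i-1)})$, and the denominator also involves only such conditionals. Let $J$ be the set of indices $i$ whose conditionals actually enter $w_P$. We perturb only the conditional of $Y$, or of the last variable $V_j$ outside $J$ that influences $\3E[Y\mid\cdot]$, exactly as in \Cref{thm:ci_stability_bound}. Then $w_Q\equiv w_P$, and
\begin{align*}
\3E_P[Y_x]-\3E_Q[Y_x]=\int w_P\,\I\{X=x\}\,Y\,(p-q).
\end{align*}
As before, we pick a near-maximizer $v^*$ of $w_P$ with $X=x$, a neighborhood $U^*$ of mass $\beta>0$ on which $w_P\geq(1-2\eta)\sup w_P$, and oscillation sets $S^\pm$ for $Y$. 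Setting $\alpha=\delta/\beta$ and moving conditional mass $\alpha$ from $S^-$ to $S^+$ on $U^*$ gives $d_{TV}(P,Q)=\delta$ and a mean shift of $2\alpha c_Y$. Integrating over $U^*$ yields
\begin{align*}
\bigl|\3E_P[Y_x]-\3E_Q[Y_x]\bigr|\geq 2c_Y(1-2\eta)\,\delta\,\sup w_P ,
\end{align*}
which is the claimed order. Because $Q$ changes only one conditional in the topological factorization, it remains Markov to $\G$, so identifiability transfers to $Q$.

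\textbf{Main obstacle.} The difficulty is Step 2 when the conditional of $Y$ itself enters $w_P$, which happens when $Y$ lies in a $c$-component with $X$ or when the ID expression conditions on $Y$. In that case no $w$-preserving perturbation may exist, as in case (ii) of \Cref{rem:clean_lb:Y_in_C}. Our fallback is a first-order (Gateaux) expansion of $R\mapsto\3E_R[Y_x]$ along the direction $Q-P$. Its leading term is $\int w_P\I\{X=x\}Y\,d(Q-P)$. The remaining terms come from variation of $w$, and we would show they are $o(\delta)$ by using boundedness of $Y$ and of $w$ on $U^*$. Taking $\delta\to0$ then recovers the same $\Omega(K\delta\sup w_P)$ lower bound.
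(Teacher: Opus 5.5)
Your route is the paper's route: its entire proof is ``rerun the proof of \Cref{thm:ci_stability_bound} with $\3Q[\Dc\setminus X]=P(\d\setminus x\mid do(x))$ in the numerator of the weight,'' which is your Steps 1--2. There is one small slip in Step 1. $\prod_{V_i\in\Dc}P(v_i\mid v^{(i-1)})$ is not $P(\d)$ in general, because the conditioning sets contain variables outside $\Dc$. The identity $\3E_P[Y_x]=\3E_P[w_P\I\{X=x\}Y]$ still holds: after cancellation, the factors of $\V\setminus\Dc$ sum to one in reverse topological order.

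The genuine gap is the case you call the main obstacle, and for this theorem it is the main case, not a corner case. Since $Y\in\Dc$, the conditional of $Y$ given its topological predecessors always appears in the denominator of $w_P$. It drops out only if the identifying expression in the numerator contains the same factor, as under back-door adjustment. It does not drop out in the configurations this theorem is meant to add (the napkin graph, say), nor already in the front-door graph $\G_3$. There Step 2 has nothing valid to perturb. Changing the $Y$-conditional changes $w$. Changing an earlier $V_j\notin J$ does not let $w_P$ factor out of the remaining integral, because \Cref{rem:clean_lb:Y_in_C}(i) needs $V_j$ to come after every variable $w_P$ depends on. You dropped that condition, and no such $V_j$ exists once $y$ is among those variables. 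So, as written, Steps 1--2 cover only the situation where the $Y$-conditional cancels from $w_P$, essentially the $Y\notin\C$ case of \Cref{thm:ci_stability_bound}.

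Your fallback fails at the claim that the variation of $w$ contributes $o(\delta)$. Along $R_t=P+t(Q-P)$, the derivative of $\int w_{R_t}\I\{X=x\}Y\,dR_t$ contains $\int \dot w\,\I\{X=x\}Y\,dP$. This term is linear in $Q-P$, hence of order $\delta$; boundedness of $Y$ and $w$ gives $O(\delta)$ and nothing better, and the term can cancel the one you keep.

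In $\G_3$ it does cancel. There $\Psi(P)=\sum_z P(z\mid x)\sum_{x'}P(x')\,\3E_P[Y\mid x',z]$. Moving conditional mass inside $P(y\mid x,z_0)$ at total-variation cost $\delta$ changes $\Psi$ by exactly $P(x,z_0)\,\Delta\3E[Y\mid x,z_0]$, which is at most $2K\delta$ however small $P(y\mid x,z_0)$ is. For a perturbation placing mass at such a $y$, the leading term you keep is larger by the factor $w_P\propto 1/P(y\mid x,z_0)$.

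The bound itself then fails. Take binary $X,Z$ with $P(x',z)=1/4$ for every cell, $Y\in\{0,1\}$, $P(Y=1\mid x,z)=\eta$, and $P(Y=1\mid x',z)=1/2$ for $x'\neq x$. The model of $\G_3$ is saturated, so every $Q$ is admissible. Every $Q$ with $d_{TV}(P,Q)\le\delta\le 1/8$ moves $Q(x')$, $Q(z\mid x)$ and $\3E_Q[Y\mid x',z]$ by $O(\delta)$ with absolute constants. Hence $|\Psi(P)-\Psi(Q)|\le C\delta$ with $C$ independent of $\eta$, while the supremum in the statement equals $1+1/(2\eta)$. With a constant in $\Omega$ that does not depend on $P$ (the only reading under which that factor carries information), no repair of your fallback can work.

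The paper's one-line proof has exactly the same hole. The construction it defers to needs the $Y$-conditional to be absent from $w_P$. \Cref{rem:clean_lb:Y_in_C}(ii) rests on the same false first-order claim, that the influence function is $w_P\I\{X=x\}Y$.
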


\begin{proof}
    This statement generalizes \Cref{thm:ci_stability_bound} to the case where $X$ is possibly connected to any of its children via a bi-directed path in $\G_{An(Y)}$. \cite{pearl2009causality} showed that the causal effect of $X$ on $Y$ may be identified in such cases; consider for instance the Napkin graph example.

    The proof follows exactly from the proof of \Cref{thm:ci_stability_bound} by using the fact that $\3Q[\Dc\setminus X] = P(\d \setminus x \mid do(x))$ instead of the reduction $\3Q[\Dc\setminus X] =\sum_x \prod_{V_i\in \Dc} P(v_i \mid v^{(i-1)})$ used in the proof of \Cref{thm:ci_stability_bound}. Here, $P(\d \setminus x \mid do(x))$ is identifiable by the assumption that $\3E_P[Y_x]$ is identifiable but its closed form expression in terms of $P$ will depend on the causal structure of the graph.
\end{proof}

\subsection{Proofs of statements in \Cref{sec:bayesian}}

\begin{lemma}[Continuity of the Observational Map]
    The mapping $\Gamma: \mathcal{M} \to \mathcal{P}(\mathcal{V})$ defined by $P_M = F_{\#} P$ (the push-forward of the latent measure $P(\u)$ through the structural equations) is continuous.
\end{lemma}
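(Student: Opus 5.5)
The plan is to show sequential continuity, which suffices because $\3M$ is metrizable (a product of a uniform-convergence function space and the weakly topologized space of exogenous measures, both metrizable under the standing regularity assumptions). Take $M_k = (\1F^k, P_k(\U)) \to M = (\1F, P(\U))$, meaning $f^k_V \to f_V$ uniformly for every $V \in \V$ and $P_k(\U) \Rightarrow P(\U)$ weakly. Write $F:\Omega_\U \to \Omega_\V$ for the reduced-form map obtained by recursively substituting the structural equations along a topological order of $\G$, so that $P_M = F_\# P(\U)$. The goal is $F^k_\# P_k(\U) \Rightarrow F_\# P(\U)$. Equivalently, for every bounded continuous $g$ on $\Omega_\V$, we need $\int g\circ F^k \, dP_k(\U) \to \int g \circ F\, dP(\U)$.

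\textbf{Step 1: uniform convergence of the reduced form.} We induct along the topological order $V_1 \prec \cdots \prec V_m$. For a root node, $F^k_{V_1}(\u) = f^k_{V_1}(\u_{V_1})$, and this converges uniformly to $F_{V_1}$. For $V_i$ we write
\[
F^k_{V_i}(\u) - F_{V_i}(\u) = \bigl[f^k_{V_i}(F^k_{\PA_i}(\u),\u_{V_i}) - f_{V_i}(F^k_{\PA_i}(\u),\u_{V_i})\bigr] + \bigl[f_{V_i}(F^k_{\PA_i}(\u),\u_{V_i}) - f_{V_i}(F_{\PA_i}(\u),\u_{V_i})\bigr].
\]
The first bracket is at most $\|f^k_{V_i}-f_{V_i}\|_\infty$. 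The second bracket tends to zero uniformly provided $f_{V_i}$ is uniformly continuous on the relevant range. This holds if the domains are compact, or more generally after restricting to compacts, which is the step below. Each $F$ is also continuous, since it is a composition of continuous maps.

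\textbf{Step 2: joint convergence.} We split
\[
\Bigl|\int g\circ F^k dP_k - \int g\circ F dP\Bigr| \le \int |g\circ F^k - g\circ F|\, dP_k + \Bigl|\int g\circ F\, dP_k - \int g\circ F\, dP\Bigr|.
\]
The second term vanishes because $g\circ F$ is bounded and continuous and $P_k \Rightarrow P$. For the first term we use that weakly convergent sequences on a Polish space are tight (Prokhorov). Fix $\eta>0$ and choose a compact $K_\eta\subseteq\Omega_\U$ with $P_k(K_\eta^c)<\eta$ for all $k$. On $K_\eta$, the image $F(K_\eta)$ is compact. By Step 1, the images $F^k(K_\eta)$ eventually lie in a compact neighborhood of $F(K_\eta)$, on which $g$ is uniformly continuous. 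Hence $\sup_{K_\eta}|g\circ F^k - g\circ F|\to 0$, and the first term is bounded by this supremum plus $2\|g\|_\infty\eta$. Letting $k\to\infty$ and then $\eta\to0$ gives the result.

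\textbf{Main obstacle.} I expect the hardest part to be the localized uniform continuity in Step 1 when the domains are not compact: the substitution argument needs $f_{V_i}$ to be uniformly continuous near the range of $F_{\PA_i}$ on $K_\eta$. I would first handle this by running the induction only on $K_\eta$. There $F_{\PA_i}(K_\eta)$ is compact, and by the inductive hypothesis $F^k_{\PA_i}(K_\eta)$ eventually lies in a compact $\rho$-neighborhood of it. Continuous functions are uniformly continuous on compacts, so this closes the induction. If this fails, for instance in infinite-dimensional latent spaces, I would fall back on the ``standard regularity'' the paper invokes and assume locally compact or bounded domains. A secondary point is that the problem is Borel-measurable-friendly, since each $F^k$ is continuous, so the pushforwards are well defined.
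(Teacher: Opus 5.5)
Your argument is correct and uses the same two-term split as the paper's proof. You add and subtract $\int g\circ F\,dP_k$, dispose of the second term by weak convergence (since $g\circ F$ is bounded and continuous), and dispose of the first by uniform convergence of $g\circ F^k$ to $g\circ F$.

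Where you differ is in justifying that first term. The paper takes ``$F_k\to F$ uniformly'' as given and assumes a compact domain so that the test function is uniformly continuous. You do two things instead:
\begin{itemize}
\item You derive uniform convergence of the reduced-form map from uniform convergence of the individual mechanisms, by induction along the topological order. The paper only asserts this composition step, in the proof of \Cref{prop:continuity_causal_functional}.
\item You drop compactness altogether by using tightness of the weakly convergent sequence $(P_k)$. You localize to a compact $K_\eta$ and pay $2\|g\|_\infty\eta$ outside it.
\end{itemize}
The paper's route is shorter and suffices under the bounded-support assumption it adopts later in the appendix. Yours also covers non-compact exogenous spaces, which matters because the paper's own construction in \Cref{lem:fragility} uses Gaussian noise.

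One simplification: you never need a compact $\rho$-neighborhood of $F(K_\eta)$, or of the image of $K_\eta$ under the parents' reduced form. So the fallback for infinite-dimensional latent spaces is unnecessary. A continuous function $f$ on any metric space is uniformly continuous at a compact set $C$: for every $\varepsilon>0$ there is $\rho>0$ with $|f(y)-f(x)|<\varepsilon$ whenever $x\in C$ and $d(x,y)<\rho$. This follows from a finite-subcover argument, and it is exactly what both of your steps use.
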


\begin{proof}
    Let $(F_k, P_k) \to (F, P)$. We show that for any bounded continuous test function $h \in C_b(\mathcal{V})$, the integral $\int h(v) dP_{M_k}(v) \to \int h(v) dP_M(v)$.
    Using the change of variables formula:
    $$\left| \int h(F_k(u)) dP_k(u) - \int h(F(u)) dP(u) \right|$$
    Adding and subtracting $\int h(F(u)) dP_k(u)$:
    $$\leq \underbrace{\int |h(F_k(u)) - h(F(u))| dP_k(u)}_{(1)} + \underbrace{\left| \int h(F(u)) dP_k(u) - \int h(F(u)) dP(u) \right|}_{(2)}$$
    
    For $(1)$: Since $F_k \to F$ uniformly and $h$ is continuous on a compact set (hence uniformly continuous), $h \circ F_k \to h \circ F$ uniformly. Thus, the integral vanishes as $k \to \infty$.
    
    For $(2)$: Since $h \circ F$ is a continuous bounded function on $\mathcal{U}$, and $P_k \to P$ weakly, this integral vanishes by the definition of weak convergence.
    
    Therefore $\Gamma$ is continuous, which implies the Likelihood $L_P(M) = \exp(-n \ d_{\1W}(P, P_M))$ is continuous on $\mathcal{M}$.
\end{proof}

\textbf{\Cref{prop:continuity_causal_functional} restated}. Let $\3M$ be endowed with the product topology as defined in the main body of this paper. Then the causal functional $\Psi: \3M \to \3R$ defined by $\3E_{P_M}[Y_x]$ is continuous on $\3M$.

\begin{proof}
    To prove that $\Psi(M) = \mathbb{E}_M[Y | do(x)]$ is a continuous functional on the joint space $\mathcal{M} = \mathcal{F} \times \mathcal{P}(\mathcal{U})$, we show that for any sequence of models $M_k = (F_k, P_k)$ converging to $M = (F, P)$ in the product topology, the causal effects converge: $\Psi(M_k) \to \Psi(M)$.

    Under the intervention $do(X=x)$, the value of $Y$ is determined by the composition of the structural functions along the causal paths from $X$ to $Y$, using the latent noise $u$. Let us denote this composed function as $\mathcal{G}_F(x, u)$.
    $$\Psi(F, P) = \int_{\mathcal{U}} \mathcal{G}_F(x, u) dP(u)$$
        
    Let $(F_k, P_k) \to (F, P)$ in $\mathcal{M}$. This implies:
    1.  $F_k \to F$ uniformly (in the supremum norm).
    2.  $P_k \to P$ weakly (in the Wasserstein sense).
    
    We want to show that $|\Psi(F_k, P_k) - \Psi(F, P)| \to 0$. We use the standard triangle inequality decomposition:
    \begin{align}
    &\left| \ \int \mathcal{G}_{F_k} dP_k - \int \mathcal{G}_{F} dP \ \right| \\ 
    &\leq \underbrace{\int \left| \mathcal{G}_{F_k}(x, u) - \mathcal{G}_{F}(x, u) \right| dP_k(u)}_{\text{Term 1}} + \underbrace{\left| \int \mathcal{G}_{F}(x, u) dP_k(u) - \int \mathcal{G}_{F}(x, u) dP(u) \right|}_{\text{Term 2}}
    \end{align}
    
    \textbf{Term 1}. Because $\mathcal{G}_F$ is a composition of continuous functions on a compact domain, and each $f_{i,k}$ converges uniformly to $f_i$, the composed function $\mathcal{G}_{F_k}$ converges uniformly to $\mathcal{G}_F$. Therefore, $\sup_u | \mathcal{G}_{F_k}(x, u) - \mathcal{G}_F(x, u) | \leq \epsilon_k$, where $\epsilon_k \to 0$. Since $P_k$ is a probability measure (total mass 1), $(1)$ is bounded by $\epsilon_k \cdot 1$, which vanishes.
    
    \textbf{Term 2}. This term represents the standard definition of Weak Convergence. The function $\mathcal{G}_F(x, u)$ is a continuous function of $u$ (it is a chain of continuous structural equations). By the definition of weak convergence, if $P_k \to P$ weakly, then for any continuous bounded test function $h(u)$, the integral $\int h dP_k \to \int h dP$. Setting $h(u) = \mathcal{G}_F(x, u)$, Term 2 vanishes as $k \to \infty$.    
    
    Since both terms go to zero, $\Psi(M)$ is a continuous functional on $\mathcal{M}$.
\end{proof}

\subsubsection{Posterior Stability}
\label{sec:posterior_stability_app}

\paragraph{Setting and notation.}
Throughout this section we work under the assumption that the support of the endogenous variables $\V$ has bounded diameter $D = \mathrm{diam}(\Omega) < \infty$. This is a mild assumption -- it is in any case implicit in the bounded-outcome hypothesis $\sup_M |\Psi(M)| \leq K$ used in the corollary below -- and it lets us bridge between the two metrics on probability measures that appear in the paper. Specifically, under bounded support,
\begin{align}
    \label{eq:W1_TV}
    d_{\1W}(P, Q) \leq D \cdot d_{TV}(P, Q),
\end{align}
for all probability measures $P, Q$ on $\Omega$. We use \Cref{eq:W1_TV} to lift bounds that arise naturally in $W_1$ (because the Gibbs likelihood is defined via the Wasserstein distance in \Cref{sec:bayesian}) to the $d_{TV}$-based stability statements that match the framework of \Cref{thm:ci_stability_bound}. All multiplicative constants in the bounds below absorb $D$.

We prove the result via a sequence of lemmas. The first translates a $d_{TV}$-bound on the posterior measures into a bound on the posterior mean of any bounded functional.

\begin{lemma}[Expectation Difference Bound]
    \label{lemma:exp_diff_bound}
    Let $K \geq 0$ be such that $\sup_M |\Psi(M)| \leq K$. Then
    \begin{align}
        \Big| \ \mathbb{E}_{\Pi_P}[\Psi] - \mathbb{E}_{\Pi_Q}[\Psi] \ \Big| \leq 2K \cdot d_{TV}(\Pi_P, \Pi_Q).
    \end{align}
\end{lemma}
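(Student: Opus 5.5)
The plan is to use the variational (coupling-free) characterization of total variation. Write the difference of posterior means as a single integral against the signed measure $\Pi_P - \Pi_Q$:
\begin{align}
    \mathbb{E}_{\Pi_P}[\Psi] - \mathbb{E}_{\Pi_Q}[\Psi] = \int_{\3M} \Psi(M)\, d(\Pi_P - \Pi_Q)(M).
\end{align}
This is well defined because $\Psi$ is Borel measurable on $\3M$ (indeed continuous by \Cref{prop:continuity_causal_functional}) and bounded by $K$. Hence both expectations are finite.

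The key step is a Hahn--Jordan decomposition. Let $\mu$ be any measure dominating both posteriors, for instance $\mu = \Pi_P + \Pi_Q$; since both are absolutely continuous with respect to $\Pi_0$, one could also take $\Pi_0$. Let $\pi_P, \pi_Q$ be the densities with respect to $\mu$. Then
\begin{align}
    \Big|\int \Psi\,(\pi_P - \pi_Q)\,d\mu\Big| \leq \int |\Psi|\,|\pi_P - \pi_Q|\,d\mu \leq K \int |\pi_P - \pi_Q|\,d\mu = 2K\, d_{TV}(\Pi_P,\Pi_Q).
\end{align}
The last equality uses the convention $d_{TV}(\Pi_P,\Pi_Q) = \sup_B |\Pi_P(B)-\Pi_Q(B)| = \tfrac12 \int|\pi_P-\pi_Q|\,d\mu$. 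This is the same convention used in the proof of \Cref{thm:ci_stability_bound}, where $d_{TV} = \tfrac12\int|p-q|$.

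A sharper constant $K$ is in fact available, because $\int (\pi_P - \pi_Q)\,d\mu = 0$. One can therefore replace $\Psi$ by $\Psi - c$ with $c$ the midpoint of the range, which gives oscillation $\leq 2K$ and thus the bound $\tfrac12\,\mathrm{osc}(\Psi)\cdot 2\,d_{TV} \leq 2K\,d_{TV}$. The stated bound with $2K$ follows either way, so I would present the crude version and note the refinement in passing.

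There is no real obstacle. The only points needing care are (i) measurability and integrability of $\Psi$, which are supplied by \Cref{prop:continuity_causal_functional} and the boundedness hypothesis, and (ii) being explicit about the factor-of-two convention for $d_{TV}$, so that the constant $2K$ matches its later use in \Cref{cor:bound_posterior_mean} together with \Cref{thm:posterior_stability}.
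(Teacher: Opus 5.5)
Your proof is correct and is essentially the paper's: the paper also takes densities (with respect to the prior $\Pi_0$), applies H\"older with $p=\infty$, $q=1$, and uses $d_{TV}=\tfrac12\int|\pi_P-\pi_Q|\,d\Pi_0$ to get $2K\,d_{TV}(\Pi_P,\Pi_Q)$. One caution on your aside: centering gives the bound $\mathrm{osc}(\Psi)\,d_{TV}(\Pi_P,\Pi_Q)$, which improves on $2K$ only when the range of $\Psi$ is narrower than $[-K,K]$, and a uniform constant $K$ is not available, since point masses on two models with $\Psi=\pm K$ attain $2K\,d_{TV}$.
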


\begin{proof}
    Let $\pi_P$ and $\pi_Q$ denote the Radon--Nikodym derivatives of $\Pi_P$ and $\Pi_Q$ with respect to the prior $\Pi_0$. Then
    \begin{align}
        \Big| \ \mathbb{E}_{\Pi_P}[\Psi] - \mathbb{E}_{\Pi_Q}[\Psi] \ \Big| 
        = \left| \int_{\3M} \Psi(M) (\pi_P(M) - \pi_Q(M))\, d\Pi_0(M) \right|.
    \end{align}
    By H\"older's inequality with $p = \infty, q = 1$,
    \begin{align}
        \Big| \ \mathbb{E}_{\Pi_P}[\Psi] - \mathbb{E}_{\Pi_Q}[\Psi] \ \Big| 
        \leq \sup_{M} |\Psi(M)| \cdot \int_{\3M} | \pi_P(M) - \pi_Q(M) |\, d\Pi_0(M).
    \end{align}
    By the definition of total variation distance, $d_{TV}(\Pi_P, \Pi_Q) = \tfrac{1}{2} \int |\pi_P - \pi_Q|\, d\Pi_0$, and so
    \begin{align}
        \Big| \ \mathbb{E}_{\Pi_P}[\Psi] - \mathbb{E}_{\Pi_Q}[\Psi] \ \Big| 
        \leq K \cdot 2\, d_{TV}(\Pi_P, \Pi_Q) = 2K \cdot d_{TV}(\Pi_P, \Pi_Q).
    \end{align}
\end{proof}

\begin{lemma}[Metric Sensitivity of the Likelihood]
    \label{lemma:met_sens_lik}
    For any $M \in \3M$, if $d_{TV}(P, Q) \leq \epsilon$, then $|\ln L_P(M) - \ln L_Q(M)| \leq n D \epsilon$.
\end{lemma}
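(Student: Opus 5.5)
The plan is to work directly from the Gibbs form $\ln L_P(M) = -n\, d_{\1W}(P, P_M)$, so that
\begin{align*}
    \bigl|\ln L_P(M) - \ln L_Q(M)\bigr| = n\,\bigl|d_{\1W}(P, P_M) - d_{\1W}(Q, P_M)\bigr| .
\end{align*}
The $P_M$ dependence then cancels through the metric structure, and the claim becomes a pointwise-in-$M$ bound that is uniform over $\3M$.

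\textbf{Step 1.} Since $d_{\1W}$ (the $W_1$ distance) is a metric on probability measures on the bounded support $\Omega$, the reverse triangle inequality gives, for every $M$,
\begin{align*}
    \bigl|d_{\1W}(P, P_M) - d_{\1W}(Q, P_M)\bigr| \leq d_{\1W}(P, Q).
\end{align*}
This step requires no assumption on $M$, which is exactly what makes the bound uniform.

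\textbf{Step 2.} Apply \Cref{eq:W1_TV}, namely $d_{\1W}(P,Q) \leq D\, d_{TV}(P,Q) \leq D\epsilon$. Multiplying by $n$ then yields $|\ln L_P(M) - \ln L_Q(M)| \leq nD\epsilon$.

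\textbf{Main obstacle.} The only nontrivial ingredient is justifying \Cref{eq:W1_TV} itself, in case one does not simply take it as given from the setting paragraph. For this I would use Kantorovich--Rubinstein duality. Write $d_{\1W}(P,Q) = \sup_{\mathrm{Lip}(h)\leq 1} \int h\, d(P-Q)$. A $1$-Lipschitz $h$ on a set of diameter $D$ can be recentred, since constants integrate to zero against $P-Q$, so that $\|h - c\|_\infty \leq D/2$. Then $\int h\, d(P-Q) \leq \|h-c\|_\infty \|P-Q\|_{1} \leq (D/2)\cdot 2\, d_{TV}(P,Q) = D\, d_{TV}(P,Q)$.

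An equivalent route is the coupling argument. Take a maximal coupling of $P$ and $Q$ under which $X \neq Y$ with probability $d_{TV}(P,Q)$, and bound the transport cost by $D$ on that event.

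I would also note that if the loss were instead a squared or other non-metric discrepancy, Step 1 would fail. The argument therefore leans essentially on $\ell_n$ being a genuine metric.
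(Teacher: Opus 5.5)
Your proposal is correct and matches the paper's proof: you apply the reverse triangle inequality for the metric $d_{\mathcal{W}}$, then use the bounded-diameter comparison $d_{\mathcal{W}}(P,Q) \leq D\, d_{TV}(P,Q)$. The only addition is your justification of that comparison, via Kantorovich--Rubinstein duality or a maximal coupling. The paper simply asserts it in its setting paragraph, and both of your arguments for it are valid.
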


\begin{proof}
    The Gibbs log-likelihood is $\ln L_P(M) = -n\, d_{\1W}(P, P_M)$, so
    \begin{align}
        |\ln L_P(M) - \ln L_Q(M)| = n \cdot |d_{\1W}(Q, P_M) - d_{\1W}(P, P_M)|.
    \end{align}
    By the reverse triangle inequality applied to the metric $d_{\1W}$,
    \begin{align}
        d_{\1W}(Q, P_M) &\leq d_{\1W}(P, Q) + d_{\1W}(P, P_M),\\
        d_{\1W}(P, P_M) &\leq d_{\1W}(P, Q) + d_{\1W}(Q, P_M),
    \end{align}
    hence $|d_{\1W}(P, P_M) - d_{\1W}(Q, P_M)| \leq d_{\1W}(P, Q)$. Combining with \Cref{eq:W1_TV} and the hypothesis,
    \begin{align}
        d_{\1W}(P, Q) \leq D \cdot d_{TV}(P, Q) \leq D \epsilon,
    \end{align}
    and therefore $|\ln L_P(M) - \ln L_Q(M)| \leq n D \epsilon$.
\end{proof}
    
\begin{lemma}[Stability of the Evidence Ratio]
    \label{lemma:stab_ev_ratio}
    $e^{-n D \epsilon} \leq \frac{Z_Q}{Z_P} \leq e^{n D \epsilon}$.
\end{lemma}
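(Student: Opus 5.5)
The plan is to write the evidence ratio as a prior-average of likelihood ratios and bound that ratio pointwise using \Cref{lemma:met_sens_lik}. By definition,
\begin{align}
    Z_Q = \int_{\3M} L_Q(M)\, d\Pi_0(M) = \int_{\3M} L_P(M)\, \exp\{\ln L_Q(M) - \ln L_P(M)\}\, d\Pi_0(M).
\end{align}
\Cref{lemma:met_sens_lik} gives, for every $M \in \3M$, the uniform bound $|\ln L_Q(M) - \ln L_P(M)| \leq nD\epsilon$. Hence the exponential factor lies in $[e^{-nD\epsilon}, e^{nD\epsilon}]$ for every $M$.

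Because $L_P(M) \geq 0$ and $\Pi_0$ is a positive measure, this pointwise two-sided bound integrates directly to
\begin{align}
    e^{-nD\epsilon}\, Z_P \leq Z_Q \leq e^{nD\epsilon}\, Z_P .
\end{align}
Dividing by $Z_P$ gives the claim.

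The only point needing care is that $Z_P$ is finite and strictly positive, so that the division is legitimate. Finiteness is immediate, since the Gibbs likelihood satisfies $0 < L_P(M) = e^{-n d_{\1W}(P,P_M)} \leq 1$. Positivity follows because $L_P > 0$ everywhere and $\Pi_0$ is a probability measure, so $Z_P \geq \int e^{-nD}\, d\Pi_0 > 0$, using $d_{\1W} \leq D$ under bounded support. Measurability of $M \mapsto L_P(M)$ comes from the continuity of the observational map established above.

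No step is a real obstacle; the argument is a one-line sandwich once \Cref{lemma:met_sens_lik} is in hand. The lemma's role is to feed the subsequent bound on $d_{TV}(\Pi_P,\Pi_Q)$: the posterior densities $\pi_P = L_P/Z_P$ then have ratio $\pi_Q/\pi_P$ in $[e^{-2nD\epsilon}, e^{2nD\epsilon}]$.
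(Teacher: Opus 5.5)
Your proposal is correct and is essentially the paper's argument. The paper writes $Z_Q/Z_P = \mathbb{E}_{\Pi_P}[L_Q/L_P]$ and bounds the integrand with \Cref{lemma:met_sens_lik}; you integrate the same pointwise bound against $L_P\,d\Pi_0$ and divide afterwards. Your added check that $0 < Z_P < \infty$ is a small piece of rigor the paper leaves implicit.
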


\begin{proof}
    Write $Z_Q / Z_P$ as an expectation over $\Pi_P$:
    \begin{align}
        \frac{Z_Q}{Z_P} = \int \frac{L_Q(M)}{L_P(M)} \cdot \frac{L_P(M)}{Z_P}\, d\Pi_0(M) = \mathbb{E}_{\Pi_P} \left[ \frac{L_Q(M)}{L_P(M)} \right].
    \end{align}
    By \Cref{lemma:met_sens_lik}, $\exp(-n D \epsilon) \leq L_Q(M)/L_P(M) \leq \exp(n D \epsilon)$, and the expectation of a random variable lying in $[a, b]$ also lies in $[a, b]$:
    \begin{align}
        e^{-n D \epsilon} \leq \mathbb{E}_{\Pi_P} \left[ \frac{L_Q(M)}{L_P(M)} \right] \leq e^{n D \epsilon}.
    \end{align}
\end{proof}

\begin{lemma}[Score Variance]
    \label{lemma:variance_score}
    Let $S(M) = \ln L_P(M) - \ln L_Q(M) = n[d_{\1W}(Q, P_M) - d_{\1W}(P, P_M)]$ be the score function. Under the hypotheses of \Cref{lemma:met_sens_lik}, the following two bounds hold.
    \begin{enumerate}
        \item[(i)] ( worst-case bound.)
        \begin{align}
            \label{eq:score_bound_rigorous}
            |S(M)| \leq nD\epsilon \quad \text{for $\Pi_P$-almost every } M, 
            \qquad \mathrm{Var}_{\Pi_P}(S(M)) \leq (nD\epsilon)^2.
        \end{align}
        \item[(ii)] (Leading-order representation.) Assume the posterior is supported in a $\delta_n$-neighborhood of the identifiability set $\{M : P_M = P\}$, parametrize $P_M = P + \delta_n \eta_M$ with $\|\eta_M\| = 1$, and set $\sigma_n^2 := \mathrm{Var}_{\Pi_P}(\langle\Delta, \eta_M\rangle)$ where $\Delta = (Q-P)/\|Q-P\|$. Then, to leading order in $\epsilon$,
        \begin{align}
            \label{eq:score_var_leading}
            \mathrm{Var}_{\Pi_P}(S(M)) = n^2 D^2 \epsilon^2 \, \sigma_n^2 + o(n^2 \epsilon^2).
        \end{align}
        Under the additional regularity conditions that imply the posterior contraction rate $\delta_n$ and the score variance $\sigma_n$ are of the same order, we get that $\mathrm{Var}_{\Pi_P}(S(M)) = O(n^2 D^2 \epsilon^2 \delta_n^2)$.
    \end{enumerate}
\end{lemma}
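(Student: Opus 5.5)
The statement to prove is \Cref{lemma:variance_score}, which has two parts. For part (i) I will apply \Cref{lemma:met_sens_lik} directly. For every $M$ it gives $|\ln L_P(M)-\ln L_Q(M)|\le nD\epsilon$, hence $|S(M)|\le nD\epsilon$ pointwise, and in particular $\Pi_P$-almost surely. The variance bound then follows from
\[
\mathrm{Var}_{\Pi_P}(S)\le \3E_{\Pi_P}[S^2]\le (nD\epsilon)^2 .
\]
If a sharper constant is wanted, Popoviciu's inequality for a variable supported in an interval of length $2nD\epsilon$ gives the same order.

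For part (ii) the plan is a first-order expansion of the Wasserstein distance in the direction of the perturbation. Write $Q=P+\|Q-P\|\,\Delta$, with $\|Q-P\|\le D\epsilon$ in the norm that dominates $d_{\1W}$ through \Cref{eq:W1_TV}. The score is $S(M)=n[d_{\1W}(Q,P_M)-d_{\1W}(P,P_M)]$.

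\begin{itemize}
\item Under the parametrization $P_M=P+\delta_n\eta_M$, I will use the Kantorovich dual $d_{\1W}(R,P_M)=\sup_{\|h\|_{\mathrm{Lip}}\le1}\int h\,d(R-P_M)$.
\item By a Danskin/envelope argument, differentiating in the direction $\Delta$ gives $d_{\1W}(Q,P_M)-d_{\1W}(P,P_M)=\|Q-P\|\,\langle \Delta, h_M^*\rangle+o(\epsilon)$. Here $h_M^*$ is the optimal potential for the pair $(P,P_M)$, which depends only on the direction $\eta_M$. I will identify this first-order term with the pairing $\langle\Delta,\eta_M\rangle$; this is exactly how $\sigma_n^2$ is defined in the statement.
\item It follows that $S(M)=nD\epsilon\,\langle\Delta,\eta_M\rangle+o(n\epsilon)$. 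Taking the variance under $\Pi_P$, the deterministic prefactor $nD\epsilon$ pulls out squared, giving $n^2D^2\epsilon^2\sigma_n^2$. The cross terms with the remainder are $o(n^2\epsilon^2)$, because $\langle\Delta,\eta_M\rangle$ is bounded by 1 since $\|\eta_M\|=1$.
\end{itemize}

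For the final claim I will invoke the stated regularity assumption that $\sigma_n\asymp\delta_n$, which gives $\mathrm{Var}_{\Pi_P}(S)=O(n^2D^2\epsilon^2\delta_n^2)$.

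The main obstacle is making the directional derivative of $d_{\1W}$ rigorous. The $W_1$ distance is not differentiable where the optimal dual potential is non-unique, and near $P_M=P$ the potential is determined only through the direction $\eta_M$. I will first try to assume, as part of the regularity conditions, that the optimal potential is unique and depends continuously on $\eta_M$. That makes the envelope theorem applicable and the remainder uniform over the posterior support, and uniformity is needed so that the $o(\epsilon)$ term survives integration against $\Pi_P$. If uniqueness fails, I will fall back on the one-sided directional derivative (a max over optimal potentials), which still yields the upper bound $n^2D^2\epsilon^2\sigma_n^2+o(n^2\epsilon^2)$.
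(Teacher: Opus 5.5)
Part (i) matches the paper's argument exactly. For part (ii) you take a genuinely different route. The paper expands the \emph{squared} distance using the Hilbert structure of the $W_2$ tangent space at $P$ (via Ambrosio--Gigli--Savar\'e). It writes $d_{\1W}(Q,P_M)^2=\delta_n^2-2\epsilon'\delta_n\langle\Delta,\eta_M\rangle+O(\epsilon'^2)$ with $\epsilon'=D\epsilon$, then takes a square root using $\epsilon'\ll\delta_n$, obtaining $S(M)=-n\epsilon'\langle\Delta,\eta_M\rangle+o(n\epsilon')$. You instead stay in $W_1$ and differentiate the unsquared distance directly, via Kantorovich--Rubinstein duality and Danskin's theorem.

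Your route buys internal consistency. The comparison $d_{\1W}\le D\,d_{TV}$ used to set $\epsilon'=D\epsilon$ is a $W_1$ fact; for $W_2$ one only has $W_2\le D\sqrt{d_{TV}}$. You also correctly flag that the $o(\epsilon)$ remainder must be uniform over the posterior support, which the paper passes over.

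The price is that the Kantorovich--Rubinstein norm is not induced by an inner product. Your first-order coefficient $\int h_M^*\,d\Delta$ is therefore not literally $\langle\Delta,\eta_M\rangle$. It is the directional derivative of that norm at $-\eta_M$ in direction $\Delta$, i.e.\ the pairing of $\Delta$ with a norming functional of $-\eta_M$. This also gives the opposite sign to what you wrote, which is harmless for the variance. Calling this ``exactly how $\sigma_n^2$ is defined'' is a reading of the statement's unspecified bracket, not a derivation. The paper gets an honest inner product precisely because, in a Hilbert geometry, the norming functional of $-\eta_M$ is $-\eta_M$ itself. You should state your reading explicitly.

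You should also note that, by homogeneity, $\|t\Delta-\delta_n\eta_M\|=\delta_n\|(t/\delta_n)\Delta-\eta_M\|$. Your remainder is therefore $o(\epsilon)$ only in the regime $\epsilon\ll\delta_n$, the same regime the paper needs.

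Your fallback for non-unique potentials does not give an upper bound as claimed. Replacing a random variable by a pointwise larger one (the max over optimal potentials) does not control its variance. If you define the bracket as that one-sided derivative, you recover the same equality; otherwise drop the claim.

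Finally, like the paper, replacing $\|Q-P\|$ by $D\epsilon$ turns the stated equality into an inequality unless $\|Q-P\|=D\epsilon$.
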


\begin{proof}
    \emph{Part (i).} The pointwise bound $|S(M)| \leq nD\epsilon$ is immediate from \Cref{lemma:met_sens_lik}. The variance bound then follows by writing $\mathrm{Var}(X) = \mathbb{E}[X^2] - (\mathbb{E}X)^2 \leq \mathbb{E}[X^2] \leq \|X\|_\infty^2$.

    \emph{Part (ii).} On the support of $\Pi_P$ we write $P_M = P + \delta_n \eta_M$ with $\|\eta_M\|=1$, and take $d_{\1W}(P,P_M)=\delta_n$.
    By \Cref{eq:W1_TV}, for  a perturbation with total variation $\epsilon$, we have that $d_{\1W}(P,Q)\le D\epsilon$; define $\epsilon':=D\epsilon$ and $\Delta:=(Q-P)/\|Q-P\|$, so $\|\Delta\|=1$ and $Q=P+\epsilon'\Delta$.

    For perturbations of this form, combining \citep[Prop.~8.5.6]{ambrosio2005gradient} with the Hilbert structure of the $W_2$ tangent space at $P$ \citep[Def.~8.4.1]{ambrosio2005gradient}, the leading term of $d_{\1W}(Q,P_M)^2$ is $\|\epsilon'\Delta-\delta_n\eta_M\|^2$. With some algebra,
    \begin{align}
        d_{\1W}(Q,P_M)^2
        = \|\epsilon'\Delta-\delta_n\eta_M\|^2 + o\bigl((\epsilon')^2+\delta_n^2\bigr)
        = \delta_n^2 - 2\epsilon'\delta_n\langle\Delta,\eta_M\rangle + O\bigl((\epsilon')^2\bigr),
        \label{eq:W2_expand}
    \end{align}
    Since $d_{\1W}(P,P_M)=\delta_n$ and $\epsilon'\ll\delta_n$, Taylor's formula $\sqrt{1+x}=1+\tfrac{x}{2}+O(x^2)$ applied to \eqref{eq:W2_expand} yields
    \begin{align}
        d_{\1W}(Q,P_M) = \delta_n - \epsilon'\langle\Delta,\eta_M\rangle + O\!\bigl((\epsilon')^2/\delta_n\bigr).
    \end{align}
    Hence
    \begin{align}
        S(M) = n\bigl[d_{\1W}(Q,P_M)-d_{\1W}(P,P_M)\bigr]
        = -n\epsilon'\langle\Delta,\eta_M\rangle + o(n\epsilon').
    \end{align}
    Taking variance over the posterior,
    \begin{align}
        \mathrm{Var}_{\Pi_P}(S(M)) = n^2(\epsilon')^2\, \mathrm{Var}_{\Pi_P}(\langle\Delta, \eta_M\rangle) + o(n^2(\epsilon')^2) = n^2 D^2 \epsilon^2 \, \sigma_n^2 + o(n^2\epsilon^2),
    \end{align}
    which is \Cref{eq:score_var_leading}. Under the regularity assumption stated, $\sigma_n = O(\delta_n)$, giving the claimed scaling.
\end{proof}

\begin{remark}
    \label{rem:score_var_regimes}
    The two parts of \Cref{lemma:variance_score} should be read as a hierarchy of bounds. Part (i) is rigorous and uniform in $\epsilon$ and $n$ but discards the contraction of the posterior. Part (ii), combined with $\sigma_n = O(\delta_n)$, captures this contraction and yields the asymptotically sharper bound used in \Cref{thm:posterior_stability}. $\delta_n$ captures how far $P_M$ is from $P$ on average, while $\sigma_n$ is angular sensitivity width (how much $\langle\Delta,\eta_M\rangle$ varies under $\Pi_P$). The additional regularity condition that $\sigma_n = O(\delta_n)$ is stated as a regularity condition for the contracting regime, but may not hold generally. 
\end{remark}

\begin{lemma}[Posterior Bound]
\label{lemma:pos_tv_bound}
    Under the hypotheses of \Cref{lemma:met_sens_lik}, in the small-perturbation regime $nD\epsilon \to 0$,
    \begin{align}
        d_{TV}(\Pi_P, \Pi_Q) \leq C \cdot n \cdot \sigma_n \cdot \epsilon \cdot \bigl(1 + O(nD\epsilon)\bigr) + o(n\epsilon),
    \end{align}
    where $\sigma_n$ is the posterior-geometry constant of \Cref{lemma:variance_score}(ii) and $C$ is an absolute constant. In the regular setting where $\sigma_n = O(\delta_n)$ (\Cref{rem:score_var_regimes}), this yields
    \begin{align}
        d_{TV}(\Pi_P, \Pi_Q) \leq C \cdot n \cdot \delta_n \cdot \epsilon \cdot \bigl(1 + O(nD\epsilon)\bigr) + o(n\epsilon).
    \end{align}
\end{lemma}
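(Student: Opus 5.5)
We bound the total variation through the Radon--Nikodym derivatives with respect to $\Pi_0$. Write $\pi_P = L_P/Z_P$ and $\pi_Q = L_Q/Z_Q$. Then
\begin{align}
    \frac{\pi_Q(M)}{\pi_P(M)} = \frac{Z_P}{Z_Q}\, e^{S(M)}\cdot e^{-2S(M)}\cdot e^{S(M)}\Big|_{\text{rewritten}} = \frac{e^{-S(M)}}{\mathbb{E}_{\Pi_P}[e^{-S}]},
\end{align}
where $S(M)=\ln L_P(M)-\ln L_Q(M)$ is the score of \Cref{lemma:variance_score}. The identity $Z_Q/Z_P=\mathbb{E}_{\Pi_P}[e^{-S}]$ is the computation already made in \Cref{lemma:stab_ev_ratio}. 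Consequently
\begin{align}
    d_{TV}(\Pi_P,\Pi_Q)=\tfrac12\,\mathbb{E}_{\Pi_P}\Bigl|\,\frac{e^{-S}}{\mathbb{E}_{\Pi_P}[e^{-S}]}-1\Bigr|,
\end{align}
so the entire problem reduces to controlling the fluctuation of $e^{-S}$ under $\Pi_P$.

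Next we linearize. By \Cref{lemma:variance_score}(i) we have $|S|\le nD\epsilon\to 0$ uniformly. This gives $e^{-S}=1-S+R$ with $|R|\le S^2 e^{nD\epsilon}/2=O((nD\epsilon)^2)$, and likewise $\mathbb{E}_{\Pi_P}[e^{-S}]=1-\bar S+O((nD\epsilon)^2)$, where $\bar S=\mathbb{E}_{\Pi_P}S$. Dividing and expanding gives
\begin{align}
    \frac{e^{-S}}{\mathbb{E}[e^{-S}]}-1=-(S-\bar S)\bigl(1+O(nD\epsilon)\bigr)+O((nD\epsilon)^2).
\end{align}
The error is uniform in $M$, so it can be pulled out of the expectation. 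Cauchy--Schwarz then yields
\begin{align}
    d_{TV}\le \tfrac12\sqrt{\mathrm{Var}_{\Pi_P}(S)}\,\bigl(1+O(nD\epsilon)\bigr)+O((nD\epsilon)^2).
\end{align}
Since $(nD\epsilon)^2 = n\epsilon\cdot O(nD^2\epsilon)$, the last term is $o(n\epsilon)$ in the regime $nD\epsilon\to0$.

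Finally we insert \Cref{lemma:variance_score}(ii), which gives $\mathrm{Var}_{\Pi_P}(S)=n^2D^2\epsilon^2\sigma_n^2+o(n^2\epsilon^2)$. Using $\sqrt{a+b}\le\sqrt a+\sqrt b$, we get $\sqrt{\mathrm{Var}}\le nD\epsilon\,\sigma_n+o(n\epsilon)$. This is the first display, with $C=D/2$ and $D$ absorbed into the constant as in the setting paragraph. The second display follows by substituting $\sigma_n=O(\delta_n)$ under the regularity assumption of \Cref{rem:score_var_regimes}.

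The main obstacle is the bookkeeping of the remainders. The $o(n^2\epsilon^2)$ term inside the variance must turn into $o(n\epsilon)$ after taking the square root, and the quadratic Taylor remainder must be $o(n\epsilon)$ rather than only $O(n^2\epsilon^2)$. I would handle both by keeping the uniform pointwise bound from part (i) as the control, and invoking the linearization of part (ii) only inside the variance. If the $o(\cdot)$ in part (ii) is not uniform enough, a fallback is available: part (i) alone gives the cruder bound $d_{TV}\le \tfrac12 nD\epsilon(1+O(nD\epsilon))$, which still has the stated form with $\sigma_n$ replaced by $1$.
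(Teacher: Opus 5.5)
Your argument is correct, and it reaches the same leading constant as the paper, $d_{TV}(\Pi_P,\Pi_Q)\le\tfrac12\sqrt{\mathrm{Var}_{\Pi_P}(S)}\,(1+O(nD\epsilon))$ up to an $o(n\epsilon)$ remainder, but by a different route.

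\textbf{The paper's route.} The paper never works with the $L^1$ distance directly. It writes $d_{KL}(\Pi_P,\Pi_Q)=\mathbb{E}_{\Pi_P}[S]+\ln\mathbb{E}_{\Pi_P}[e^{-S}]$, which is the centered cumulant generating function of the score at $t=-1$. It bounds this by Bennett's inequality as $\tfrac12\mathrm{Var}_{\Pi_P}(S)\,(1+O(nD\epsilon))$ and then applies Pinsker.

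\textbf{Your route.} You write $d_{TV}=\tfrac12\mathbb{E}_{\Pi_P}\bigl|e^{-S}/\mathbb{E}_{\Pi_P}[e^{-S}]-1\bigr|$. You then Taylor-expand using the uniform bound $|S|\le nD\epsilon$ and finish with Cauchy--Schwarz.

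\textbf{What each buys.} Your route is more elementary, and it passes through the sharper quantity $\tfrac12\mathbb{E}_{\Pi_P}|S-\bar S|$ before Cauchy--Schwarz. It also avoids a subtlety in the paper's Bennett step. At the negative argument $t=-1$, Bennett has to be applied to $-S$ with the centered bound $|S-\bar S|\le 2nD\epsilon$, not with $nD\epsilon$ as written. Only the $O(nD\epsilon)$ correction is affected.

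Your route's cost is an additive Taylor remainder $O((nD\epsilon)^2)$. You correctly absorb it into $o(n\epsilon)$, since $(nD\epsilon)^2/(n\epsilon)=D\cdot nD\epsilon\to0$. If you wanted a purely multiplicative bound, you could also remove it. Since $R=r(S)$ with $|r'|=O(nD\epsilon)$ and $r''$ bounded on $[-nD\epsilon,nD\epsilon]$, and $\mathrm{Var}_{\Pi_P}(S)\le (nD\epsilon)^2$, one gets $\mathbb{E}_{\Pi_P}|R-\bar R|=O(nD\epsilon)\sqrt{\mathrm{Var}_{\Pi_P}(S)}$.

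The paper's route, in exchange, yields a KL bound as a by-product. That KL bound is multiplicative in $\mathrm{Var}_{\Pi_P}(S)$ before the variance is substituted.

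\textbf{One cosmetic fix.} The middle expression in your first display simplifies to $Z_P/Z_Q$, so as written it is wrong. The step should read $\pi_Q/\pi_P=(L_Q/L_P)(Z_P/Z_Q)=e^{-S}\,Z_P/Z_Q$, together with $Z_Q/Z_P=\mathbb{E}_{\Pi_P}[e^{-S}]$.
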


\begin{proof}
    We use Pinsker's inequality $d_{TV}(\Pi_P, \Pi_Q) \leq \sqrt{\tfrac{1}{2} d_{KL}(\Pi_P, \Pi_Q)}$ and bound the KL divergence via the cumulant generating function of the score.

    Let $\pi_P = L_P/Z_P$, $\pi_Q = L_Q/Z_Q$ be the Radon--Nikodym derivatives of the posteriors with respect to $\Pi_0$, and $S(M) = \ln L_P(M) - \ln L_Q(M)$. Substituting the definitions,
    \begin{align}
        d_{KL}(\Pi_P, \Pi_Q) 
        &= \int \pi_P(M) \left[ \ln \tfrac{L_P(M)}{L_Q(M)} + \ln \tfrac{Z_Q}{Z_P} \right] d\Pi_0(M)\\
        &= \mathbb{E}_{\Pi_P}[S(M)] + \ln \tfrac{Z_Q}{Z_P}.
    \end{align}
    By \Cref{lemma:stab_ev_ratio}, $Z_Q/Z_P = \mathbb{E}_{\Pi_P}[L_Q/L_P] = \mathbb{E}_{\Pi_P}[e^{-S(M)}]$, so
    \begin{align}
        \label{eq:KL_score}
        d_{KL}(\Pi_P, \Pi_Q) = \mathbb{E}_{\Pi_P}[S(M)] + \ln \mathbb{E}_{\Pi_P}\bigl[e^{-S(M)}\bigr] = R(-1),
    \end{align}
    where $K(t) := \ln \mathbb{E}_{\Pi_P}[e^{tS(M)}]$ is the cumulant generating function of $S$ under $\Pi_P$ and $R(t) := K(t) - t\, \mathbb{E}_{\Pi_P}[S]$ is the centered cumulant remainder.

    By \Cref{lemma:variance_score}(i), $|S(M)| \leq nD\epsilon =: B$ a.s.\ under $\Pi_P$, so $K(t)$ is finite and analytic on $\mathbb{R}$. Bennett's inequality for bounded random variables gives, for all $t \in \mathbb{R}$,
    \begin{align}
        R(t) \leq \frac{\mathrm{Var}_{\Pi_P}(S)}{B^2} \bigl( e^{tB} - tB - 1 \bigr).
    \end{align}
    Evaluating at $t = -1$ and using $e^{-B} + B - 1 = B^2/2 + O(B^3)$ as $B \to 0$,
    \begin{align}
        d_{KL}(\Pi_P, \Pi_Q) = R(-1) \leq \tfrac{1}{2}\,\mathrm{Var}_{\Pi_P}(S) \cdot \bigl( 1 + O(B) \bigr).
    \end{align}
    Substituting the leading-order representation $\mathrm{Var}_{\Pi_P}(S) = n^2 D^2 \epsilon^2 \sigma_n^2 + o(n^2\epsilon^2)$ from \Cref{eq:score_var_leading},
    \begin{align}
        d_{KL}(\Pi_P, \Pi_Q) \leq \tfrac{1}{2}\, n^2 D^2 \epsilon^2 \sigma_n^2 \bigl(1 + O(nD\epsilon)\bigr) + o(n^2 \epsilon^2).
    \end{align}
    Pinsker's inequality then yields
    \begin{align}
        d_{TV}(\Pi_P, \Pi_Q) \leq \tfrac{1}{2}\, nD\sigma_n \epsilon \cdot \bigl(1 + O(nD\epsilon)\bigr) + o(n\epsilon).
    \end{align}
    Absorbing $D/2$ into the constant $C$ gives the first claim. The second follows from $\sigma_n = O(\delta_n)$ in regular settings.
\end{proof}
    
\textbf{\Cref{thm:posterior_stability} restated.} Let $P, Q$ be two distributions on the data such that $d_{TV}(P, Q) \leq \epsilon$, and let $n$ be the number of samples. Then, for a well-defined prior $\Pi_0$ and in the small-perturbation regime $n\epsilon \to 0$,
\begin{align}
    d_{TV}(\Pi_P, \Pi_Q) \leq C \cdot n \cdot \delta_n \cdot \epsilon,
\end{align}
$\delta_n$ is the posterior contraction rate, and $C$ is an absolute constant.

\begin{proof}
    Direct from \Cref{lemma:pos_tv_bound}, using $\sigma_n = O(\delta_n)$ in regular settings (\Cref{rem:score_var_regimes}) and absorbing factors of $D$ as well as the $1 + O(nD\epsilon)$ correction into $C$.
\end{proof}

\begin{lemma}[Pushforward stability]
    \label{lemma:pushforward_stability}
    Under the conditions of \Cref{thm:posterior_stability}, if $d_{TV}(P, Q) \leq \epsilon$ with $nD\epsilon \to 0$, then
    \begin{align}
        d_{TV}(\Psi_\# \Pi_P, \Psi_\# \Pi_Q) \leq d_{TV}(\Pi_P, \Pi_Q) \leq C \cdot n \cdot \delta_n \cdot \epsilon.
    \end{align}
\end{lemma}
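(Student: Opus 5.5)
The inequality splits into two parts. The first is a data-processing inequality for total variation under a measurable map. The second is the bound already proved in \Cref{thm:posterior_stability}.

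For the first part, measurability comes from \Cref{prop:continuity_causal_functional}. There $\Psi:\3M\to\3R$ is shown to be continuous, hence Borel measurable, so $\Psi^{-1}(B)\in\1B(\3M)$ for every $B\in\1B(\3R)$. By \Cref{def:causal_posterior}, $\Psi_\#\Pi(B)=\Pi(\Psi^{-1}(B))$. I will use the variational characterization $d_{TV}(\mu,\nu)=\sup_{A}|\mu(A)-\nu(A)|$ over measurable sets $A$. This gives
\begin{align}
    d_{TV}(\Psi_\#\Pi_P,\Psi_\#\Pi_Q)=\sup_{B\in\1B(\3R)}\bigl|\Pi_P(\Psi^{-1}(B))-\Pi_Q(\Psi^{-1}(B))\bigr|\leq \sup_{A\in\1B(\3M)}\bigl|\Pi_P(A)-\Pi_Q(A)\bigr| = d_{TV}(\Pi_P,\Pi_Q).
\end{align}
The inequality holds because the preimages $\{\Psi^{-1}(B)\}$ form a sub-family of $\1B(\3M)$, and a supremum over a smaller family cannot exceed the supremum over the larger one.

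\Cref{lemma:exp_diff_bound} instead writes total variation in its density form, $d_{TV}=\tfrac12\int|\pi_P-\pi_Q|\,d\Pi_0$. I will note that the two forms agree: the supremum is attained at $A=\{\pi_P>\pi_Q\}$. This keeps the conventions consistent.

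For the second part, the hypotheses $d_{TV}(P,Q)\le\epsilon$ and $nD\epsilon\to0$ are those of \Cref{thm:posterior_stability}, or equivalently of \Cref{lemma:pos_tv_bound}, where the $(1+O(nD\epsilon))$ factor and the $D$ dependence are absorbed into $C$. Invoking that result gives $d_{TV}(\Pi_P,\Pi_Q)\le C\,n\,\delta_n\,\epsilon$. Chaining it with the first part completes the proof.

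The only step needing care is measurability of $\Psi$, which guarantees that the pushforward is a well-defined Borel measure; continuity from \Cref{prop:continuity_causal_functional} settles it. No step here is a genuine obstacle, since all the analytic work already sits in \Cref{thm:posterior_stability}.
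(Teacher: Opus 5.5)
Your proposal is correct and matches the paper's proof essentially step for step. The first inequality is the data-processing inequality for $d_{TV}$: the preimages $\{\Psi^{-1}(B)\}$ form a sub-$\sigma$-algebra of $\1B(\3M)$, with Borel measurability supplied by \Cref{prop:continuity_causal_functional}, and the second inequality is a direct appeal to \Cref{thm:posterior_stability}. Your extra remark reconciling the supremum and density forms of $d_{TV}$ is correct and does no harm.
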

\begin{proof}
    By the data processing inequality applied to $\Psi : \3M \to \3R$, which is Borel measurable since $\Psi$ is continuous (\Cref{prop:continuity_causal_functional}),
    \begin{align}
        d_{TV}(\Psi_\# \Pi_P, \Psi_\# \Pi_Q) \leq d_{TV}(\Pi_P, \Pi_Q).
    \end{align}
    Indeed, $\Psi_\# \Pi_P(B) = \Pi_P(\Psi^{-1}(B))$ for $B \in \1B(\3R)$, and the family $\{\Psi^{-1}(B) : B \in \1B(\3R)\}$ is a sub-$\sigma$-algebra of the measurable sets in $\3M$, so the supremum defining $d_{TV}$ on $\3R$ is bounded by the supremum defining $d_{TV}$ on $\3M$.
    The second inequality is \Cref{thm:posterior_stability}.
\end{proof}

\textbf{\Cref{prop:stable_summaries} restated.}
Let $T$ be effect-aligned in the sense of \Cref{def:effect_aligned}, and assume $|\Psi(M)| \leq K$ for all $M \in \3M$.
Under the conditions of \Cref{thm:posterior_stability}, if $d_{TV}(P, Q) \leq \epsilon$ with $nD\epsilon \to 0$, then $|T(\Pi_P) - T(\Pi_Q)| \to 0$ as $\epsilon \to 0$.

\begin{proof}
    Write $T(\Pi) = E(\Psi_\#\Pi)$ as in \Cref{def:effect_aligned}. By \Cref{thm:posterior_stability} and \Cref{lemma:pushforward_stability}, $d_{TV}(\Psi_\# \Pi_P, \Psi_\# \Pi_Q) \to 0$ as $\epsilon \to 0$.
    On a compact interval containing $\mathrm{supp}(\Psi_\# \Pi_P) \cup \mathrm{supp}(\Psi_\# \Pi_Q)$, total-variation convergence implies weak convergence of $\Psi_\# \Pi_P$ and $\Psi_\# \Pi_Q$.
    Weak continuity of $E$ then yields $T(\Pi_P) = E(\Psi_\# \Pi_P) \to E(\Psi_\# \Pi_Q) = T(\Pi_Q)$.
\end{proof}

\subsection{Proofs of statements in \Cref{sec:mode}}
\label{app:frequentist_scm}
In this section we prove that IPW, outcome regression, and doubly robust estimators can be interpreted as first selecting an SCM via maximum likelihood (or equivalent criteria) and then evaluating the causal effect under that SCM.

\begin{proposition}[IPW as SCM selection]
\label{prop:ipw_scm}
Let $\mathcal{E}$ be a parametric class of propensity models $e_\theta(x|z)$. The IPW estimator
\begin{align}
    \hat{\Psi}_{\mathrm{IPW}} = \frac{1}{n}\sum_{i=1}^n \frac{Y_i \I\{X_i=x\}}{\hat{e}(X_i\mid Z_i)}, \quad \text{where } \hat{e} \in \arg\max_{e \in \mathcal{E}} L_P(e),
\end{align}
equals $\Psi_P(M_{\hat{e}})$ for any SCM $M_{\hat{e}} \in \3M$ whose treatment mechanism is $\hat{e}$ and whose observational distribution $P_{M_{\hat{e}}}$ equals $P$ on the support of the data.
\end{proposition}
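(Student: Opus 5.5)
The plan is to compute $\Psi(M_{\hat e})$ for an SCM built explicitly from $P$ and $\hat e$, and to show it reduces to the IPW formula. \textbf{Step 1 (construction).} Work in the setting of \Cref{fig:examples:a} with no latent confounding. I would take the structural mechanisms $f_Z$ and $f_Y$ with exogenous laws $U_Z, U_Y$ independent, chosen so that $Z$ has marginal $P(z)$ and $Y$ has conditional $P(y\mid x,z)$; noise outsourcing, i.e.\ the inverse-CDF representation, gives these. For $X$ I would take a mechanism $f_X(z,U_X)$ whose induced conditional is $\hat e(x\mid z)$. The model is then Markov to $\G_1$, so the backdoor adjustment \Cref{eq:backdoor_bound}/\Cref{thm:ci_stability_bound} setup applies and gives
\[
\Psi(M_{\hat e})=\sum_z P_{M_{\hat e}}(z)\,\3E_{P_{M_{\hat e}}}[Y\mid x,z].
\]

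\textbf{Step 2 (rewrite as a weighted moment).} I would use $P_{M_{\hat e}}(z)=P(z)$ and $\3E_{P_{M_{\hat e}}}[Y\mid x,z]=\3E_P[Y\mid x,z]$. Multiplying and dividing by the model's treatment mechanism then yields
\[
\Psi(M_{\hat e})=\3E_{P_{M_{\hat e}}}\!\left[\frac{Y\,\I\{X=x\}}{\hat e(X\mid Z)}\right].
\]
The same identity holds for any other $M\in\3M$ with treatment mechanism $\hat e$ that is Markov to $\G_1$ and agrees with $P$ on the $(Z,Y\mid X,Z)$ factors. This shows the value does not depend on which completion is chosen, which handles the ``for any SCM'' clause.

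\textbf{Step 3 (match the empirical formula).} Next I would invoke the hypothesis $P_{M_{\hat e}}=P$ on the data support. This forces $\hat e(x\mid z)=P(x\mid z)$ there, so the expectation can be taken under $P$. With $P=P_n$ the empirical distribution, that expectation is exactly $\frac1n\sum_i Y_i\I\{X_i=x\}/\hat e(X_i\mid Z_i)$.

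\textbf{Main obstacle.} The difficulty is the compatibility requirement in Step 3. The statement asks both that the treatment mechanism equal the ML fit $\hat e$ and that $P_{M_{\hat e}}=P$. Both can hold only when $\hat e$ reproduces $P(x\mid z)$ on the support, which requires a correctly specified $\mathcal E$ or a saturated, empirical class. My first attempt would be to read ``$P_{M_{\hat e}}$ equals $P$'' as matching only the $Z$ and $Y\mid X,Z$ factors. Under that reading the weighted-moment identity in Step 2 is evaluated under the mixed measure $P(z)\hat e(x\mid z)P(y\mid x,z)$, and it equals the IPW sum only up to reweighting by $P(x\mid z)/\hat e(x\mid z)$.

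I would therefore state the result under well-specification, or as an exact identity at the population level, where the ML solution satisfies $\hat e=P(x\mid z)$. A separate point is whether $1/\hat e$ is finite at every sample point. That requires $\hat e>0$ on the support, and I would assume it as part of the setting; without it, imperfect overlap would make the weights undefined.
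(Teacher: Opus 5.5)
Your proof is correct and is essentially the paper's argument. You apply the inverse-weighting form of the back-door formula under $M_{\hat e}$, use $P_{M_{\hat e}}=P$ to move the expectation to $P$, and identify it with the sample mean; your extra observations are valid refinements that the paper's proof leaves implicit. Namely, the two hypotheses together force $\hat e(x\mid z)=P(x\mid z)$ on the support (so the statement is non-vacuous only for a well-specified or saturated class), exact equality with the finite sum requires reading $P$ as $P_n$, and $\hat e>0$ is needed.
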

\begin{proof}
The propensity $\hat{e}$ is selected by maximum likelihood: $\hat{e} \in \arg\max_{e} \prod_i e(X_i|Z_i)$. Consider an SCM $M$ with treatment mechanism $f_X(Z, U_X)$ inducing $P_M(X|Z) = \hat{e}(X|Z)$, and any outcome mechanism compatible with $P$. By the second form of the back-door formula:
\begin{align}
    \Psi(M) = \3E_M\left[\frac{Y \I\{X=x\}}{\hat{e}(X\mid Z)}\right] = \3E_P\left[\frac{Y \I\{X=x\}}{\hat{e}(X\mid Z)}\right] = \hat{\Psi}_{\mathrm{IPW}},
\end{align}
where the second equality holds because $P_M = P$ on observables under consistency. The sample mean is the empirical counterpart of $\3E_P[\cdot]$.
\end{proof}

\begin{proposition}[Outcome regression as SCM selection]
\label{prop:regression_scm}
Let $\mathcal{M}_\mu$ be a parametric class of outcome models $\mu_\theta(x,z)$. The regression estimator
\begin{align}
    \hat{\Psi}_{\mathrm{reg}} = \frac{1}{n}\sum_{i=1}^n \hat{\mu}(x, Z_i), \quad \text{where } \hat{\mu} \in \arg\max_{\mu \in \mathcal{M}_\mu} L_P(\mu),
\end{align}
equals $\Psi_P(M_{\hat{\mu}})$ for any SCM $M_{\hat{\mu}} \in \3M$ whose outcome mechanism satisfies $\3E_M[Y|X,Z] = \hat{\mu}(X,Z)$ and whose observational distribution matches $P$.
\end{proposition}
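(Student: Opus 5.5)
We will mirror the proof of \Cref{prop:ipw_scm}, with the back-door adjustment formula replacing the inverse-weighted form. The claim has two parts. First, the selection step: $\hat\mu$ is chosen by maximizing a criterion $L_P(\mu)$ over $\mathcal{M}_\mu$, such as Gaussian likelihood or, equivalently, minimum mean squared error. Second, the evaluation step: for any SCM $M$ that completes $\hat\mu$ and is compatible with $P$, $\Psi(M)$ coincides with the regression estimator. The selection step is a definition. The work lies in the evaluation step.

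Fix an SCM $M_{\hat\mu}\in\3M$ consistent with the diagram $\G_1$ in \Cref{fig:examples:a}. Its outcome mechanism $f_Y(X,Z,U_Y)$ is chosen so that $\3E_M[Y\mid X,Z]=\hat\mu(X,Z)$. Its mechanisms for $Z$ and $X$ are chosen so that $P_M(z)=P(z)$ and $P_M(x\mid z)=P(x\mid z)$.

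\begin{enumerate}
\item Apply the back-door adjustment to $M$. Because $Z$ satisfies the back-door criterion for $(X,Y)$ in $\G_1$, we have $\Psi(M)=\3E_{P_M}[Y_x]=\sum_z \3E_M[Y\mid x,z]\,P_M(z)$.
\item Substitute the two properties of $M$. The outcome regression gives $\3E_M[Y\mid x,z]=\hat\mu(x,z)$, and the marginal gives $P_M(z)=P(z)$. Hence $\Psi(M)=\3E_P[\hat\mu(x,Z)]$.
\item Take the empirical version. When $P$ is replaced by the empirical distribution $P_n$, this expectation becomes $\frac1n\sum_i\hat\mu(x,Z_i)=\hat\Psi_{\mathrm{reg}}$.
\end{enumerate}

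The value does not depend on which completion $M$ is chosen. Only $\3E_M[Y\mid x,\cdot]$ and $P_M(z)$ enter the formula. Any residual structure, such as the noise law of $U_Y$ or the mechanism of $X$, is irrelevant as long as positivity holds at $x$.

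The main obstacle is existence, together with the phrase ``whose observational distribution matches $P$.'' Outside a well-specified class, $\hat\mu$ need not equal $\3E_P[Y\mid X,Z]$. In that case no SCM can have both $\3E_M[Y\mid X,Z]=\hat\mu$ and $P_M=P$ on the full joint law.

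We would handle this as in \Cref{prop:ipw_scm}, by requiring agreement only on the parts of $P$ that enter the functional. We set $f_Y(x,z,u)=\hat\mu(x,z)+g(x,z,u)$, where the noise term $g$ has conditional mean zero. This matches the $(Z,X)$-marginal of $P$, and of $P_n$ on the data support. It matches the joint law of $(X,Z,Y)$ only up to the regression residual.

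Under correct specification we can take $g$ so that the conditional law of $Y$ is reproduced exactly. For a continuous variable this can be done via a conditional quantile transform of a uniform $U_Y$. With that choice the SCM lies in $\3M$, which requires continuous mechanisms, so we need $\hat\mu$ continuous and the quantile map continuous.

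We would state this compatibility requirement explicitly as the interpretation of ``matches $P$.'' Once it is in place, the identity in step 1 holds for every admissible completion, and the proposition follows.
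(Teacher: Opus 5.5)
Your proof is correct and is essentially the paper's argument. You apply the back-door adjustment in $M_{\hat\mu}$, substitute $\3E_M[Y\mid x,z]=\hat\mu(x,z)$ and $P_M(z)=P(z)$ to get $\3E_P[\hat\mu(x,Z)]$, and read $\frac{1}{n}\sum_i\hat\mu(x,Z_i)$ as its empirical counterpart; your further observation that under misspecification no SCM satisfies both $\3E_M[Y\mid X,Z]=\hat\mu$ and $P_M=P$ on the full joint, so only agreement on the $Z$-marginal is actually used, is a valid point the paper leaves implicit.
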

\begin{proof}
The outcome model $\hat{\mu}$ is selected by maximum likelihood (e.g., Gaussian OLS or logistic regression). By the first form of the back-door formula:
\begin{align}
    \Psi(M) = \int \hat{\mu}(x,z) P(z) dz = \3E_P[\hat{\mu}(x,Z)] = \hat{\Psi}_{\mathrm{reg}},
\end{align}
using $P_M = P$ for the marginal of $Z$. The sample mean over $Z_i$ is the empirical estimate of $\3E_P[\hat{\mu}(x,Z)]$.
\end{proof}

\begin{proposition}[Doubly robust as SCM selection]
\label{prop:dr_scm}
The doubly robust estimator, which uses both $\hat{e}$ and $\hat{\mu}$ (each from MLE), can be written as $\hat{\Psi}_{\mathrm{DR}} = n^{-1}\sum_i \phi(Y_i,X_i,Z_i;\hat{e},\hat{\mu})$ where
$\phi(y,x,z;e,\mu) = \mu(x,z) + \frac{(y-\mu(x,z))\I\{X=x\}}{e(x\mid z)}$. This equals $\Psi_P(M)$ for an SCM $M$ whose mechanisms are $(\hat{e}, \hat{\mu})$, since $\3E[\phi(Y,X,Z;\hat{e},\hat{\mu})] = \Psi_P(M)$ under either correct propensity or correct outcome model, and both are selected by $\arg\max$ over their respective spaces.
\end{proposition}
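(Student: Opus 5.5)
The statement to prove is \Cref{prop:dr_scm}: the doubly robust estimator $\hat\Psi_{\mathrm{DR}}$ equals $\Psi(M)$ for an SCM $M$ built from the selected nuisances $(\hat e,\hat\mu)$. I would mirror the proofs of \Cref{prop:ipw_scm,prop:regression_scm}. The plan is to construct $M$ explicitly from the two argmax selections and then show that the population version of the DR moment equals the back-door functional of $M$.

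\textbf{Step 1: construct the SCM.} Take $Z \leftarrow f_Z(U_Z)$ with $P_M(z)=P(z)$. Choose a treatment mechanism $f_X(Z,U_X)$ inducing $P_M(x\mid z)=\hat e(x\mid z)$. Choose an outcome mechanism $f_Y(X,Z,U_Y)=\hat\mu(X,Z)+\tilde f(X,Z,U_Y)$, where $\tilde f$ has conditional mean zero, for example via a quantile transform of the residual law. Then $\3E_M[Y\mid x,z]=\hat\mu(x,z)$. By the first form of the back-door formula applied in $M$, as in \Cref{prop:regression_scm},
\[
\Psi(M)=\3E_P[\hat\mu(x,Z)].
\]
Here $M$ is obtained by $\arg\max$ selection of both nuisances, so $T=\Psi\circ S$ is model-selective in the sense of \Cref{def:model_selective}.

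\textbf{Step 2: compute the expectation of $\phi$.} Iterate expectations, conditioning on $(X,Z)$:
\[
\3E_P\bigl[\phi(Y,X,Z;\hat e,\hat\mu)\bigr]=\3E_P[\hat\mu(x,Z)]+\3E_P\!\left[\frac{P(x\mid Z)}{\hat e(x\mid Z)}\bigl(\mu_P(x,Z)-\hat\mu(x,Z)\bigr)\right].
\]
The correction term vanishes in either of two cases. If $\hat\mu=\mu_P$, each factor $\mu_P-\hat\mu$ is zero. If $\hat e=P(x\mid\cdot)$, the ratio is one, and the expression reduces to $\3E_P[\mu_P(x,Z)]$, which is the back-door functional. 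When the data are generated by $M$, so that $P(x\mid z)=\hat e$ and $\mu_P=\hat\mu$ hold by construction, both cases apply at once. Hence $\3E[\phi]=\Psi(M)$. The sample average is the empirical counterpart, exactly as in the earlier two propositions.

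\textbf{Main obstacle.} The difficulty is reconciling $M$'s mechanisms with the requirement $P_M=P$. If $(\hat e,\hat\mu)$ are misspecified, no single SCM can carry both nuisances and also reproduce $P$. I would handle this by splitting into the two double-robustness cases.
\begin{itemize}
\item If the outcome model is correct, take $M=M_{\hat\mu}$ from \Cref{prop:regression_scm}, which matches $P$ and has $\Psi(M)=\3E[\phi]$.
\item If the propensity model is correct, take $M=M_{\hat e}$ from \Cref{prop:ipw_scm}. Here the identity $\3E[\phi]=\Psi(M)$ follows from the computation in Step 2.
\end{itemize}
In either case the selected model is determined by $\arg\max$ criteria, which places $\hat\Psi_{\mathrm{DR}}$ in the model-selective class.
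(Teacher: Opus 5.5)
Your argument is correct in substance, but it takes a different and more explicit route than the paper. The paper's proof is essentially an appeal to known facts. It invokes double robustness, asserts that $\3E[\phi]=\Psi$ holds for any SCM with mechanisms $(e,\mu)$ whose observational law is compatible with $P$, and reads $\hat\Psi_{\mathrm{DR}}$ as ``select $(\hat e,\hat\mu)$ by separate MLEs, then evaluate.'' Read literally, that compatibility requirement forces $\hat e=P(x\mid\cdot)$ and $\hat\mu=\mu_P$, so the paper's argument only covers the case where both nuisances are correct.

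You instead build $M$ explicitly and derive the bias identity
\[
\3E_P[\phi]=\3E_P[\hat\mu(x,Z)]+\3E_P\Bigl[\tfrac{P(x\mid Z)}{\hat e(x\mid Z)}\bigl(\mu_P(x,Z)-\hat\mu(x,Z)\bigr)\Bigr].
\]
You then notice the incompatibility the paper leaves implicit, and recover the either/or claim by a case split that reuses \Cref{prop:ipw_scm,prop:regression_scm}. What this buys is an honest version of the statement. What it costs is that your $M$ carries only the correctly specified nuisance, not both $(\hat e,\hat\mu)$.

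Three things to tighten.

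First, your sentence that the correction term vanishes in either case is wrong in the propensity case. With $\hat e=P(x\mid\cdot)$ the correction equals $\3E_P[\mu_P(x,Z)-\hat\mu(x,Z)]$, which is generally nonzero. So $\3E[\phi]=\3E_P[\mu_P(x,Z)]$, while your Step~1 model has $\Psi(M)=\3E_P[\hat\mu(x,Z)]$. For $M$ Markov to $\G_1$ with $Z$-marginal $P(z)$ and outcome regression $\hat\mu$, no choice of treatment mechanism fixes this. The literal phrase ``mechanisms $(\hat e,\hat\mu)$'' therefore fails in this case, and you should state that your $M_{\hat e}$ necessarily carries $\mu_P$ rather than $\hat\mu$.

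Second, in the outcome-correct case the switch to $M_{\hat\mu}$ is unnecessary. $\Psi(M)$ does not depend on the treatment mechanism, so your Step~1 model with both $(\hat e,\hat\mu)$ already gives $\Psi(M)=\3E[\phi]$; it merely fails to reproduce $P(x\mid z)$.

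Third, which of your two models is ``selected'' depends on which nuisance is correctly specified, and the $\arg\max$ steps do not reveal that. Membership in the class of \Cref{def:model_selective} therefore remains as heuristic in your version as in the paper's.
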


\begin{proof}
The DR estimator is consistent if either $\hat{e}$ or $\hat{\mu}$ is consistent. The identifying expression $\3E[\phi] = \Psi$ holds for any SCM with mechanisms $(e,\mu)$ such that the observed data distribution is compatible. The selection step picks $(\hat{e}, \hat{\mu})$ via separate MLEs; the transformation step evaluates $\Psi(M_{(\hat{e},\hat{\mu})})$ via the DR formula. Thus the two-step interpretation holds.
\end{proof}

\begin{lemma}[Fragility of causal identification]
\label{lem:fragility}
Let $\G$ be a causal diagram with $X$ an ancestor of $Y$. Let $P$ be any observational distribution over $\V$ with density $p$. Assume imperfect overlap: there exists a set $Z_0 \subseteq \Omega_Z$ with $P(Z \in Z_0) > 0$ such that $p(x,z)$ is arbitrarily small on $Z_0$. Then for any $\delta > 0$ and $\Delta > 0$, there exist SCMs $M_1, M_2 \in \3M$ Markov to $\G$ such that (i) $d_{\mathrm{TV}}(P_{M_1}, P_{M_2}) < \delta$, and (ii) $|\Psi(M_1) - \Psi(M_2)| \geq \Delta$.
\end{lemma}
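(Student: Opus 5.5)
The plan is to make \Cref{ex:map_instability} rigorous. We take $M_1$ to be any SCM Markov to $\G$ with $P_{M_1}=P$, and build $M_2$ by perturbing only the outcome mechanism on the poor-overlap region. To keep the argument concrete, we work in the back-door setting of $\G_1$ with adjustment variable $Z$, so that $\Psi(M)=\int \3E_M[Y\mid x,z]\,p(z)\,dz$. The general case reduces to this form through the identification formula used in the proof of \Cref{thm:ci_stability_bound}, where the relevant weight is $w_P$. The key observation is that the observational law only sees $Y$ through the joint density at $X=x$. The interventional law, by contrast, weights each $z$ by $p(z)$ regardless of how rarely $X=x$ occurs there.

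\textbf{Step 1 (localizing the region).} Using imperfect overlap, fix $\eta>0$ and set $Z_\eta:=\{z\in Z_0: p(x,z)\le \eta\}$. Write $\alpha:=P(Z\in Z_\eta)$. We need $\alpha$ bounded below by some $\alpha_0>0$ uniformly as $\eta\to 0$; we read the hypothesis that $p(x,z)$ is arbitrarily small on a set of positive mass as giving exactly this. If it only gives small positive mass, we instead let $\alpha\to0$ and compensate through $A$ in Step 2. Then choose a continuous bump $g\ge 0$ supported in a neighborhood of $\{x\}\times Z_\eta$ with $g\equiv 1$ on $\{x\}\times Z_\eta$. Continuity is needed so that $M_2$ stays in $\3M$, whose mechanisms are continuous functions. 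For continuous $X$, where $\{X=x\}$ is null, we replace $\{x\}$ by a small interval around $x$.

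\textbf{Step 2 (construction and effect gap).} Define $M_2$ from $M_1$ by replacing $f_Y(\mathrm{pa}_Y,u)$ with $f_Y(\mathrm{pa}_Y,u)+A\,g(x,z)$ and leaving all other mechanisms and $P(\U)$ unchanged. Then $M_2$ is still Markov to $\G$. Since $P(z)$ is unchanged in the intervened model, $\Psi(M_2)-\Psi(M_1)=A\int g(x,z)\,p(z)\,dz\ge A\alpha$. Choosing $A\ge \Delta/\alpha$ gives (ii). This uses the nonparametric richness of $\3M$, namely that no bound on the mechanism is imposed; if $Y$ must be bounded by $K$, the attainable $\Delta$ is capped at $K\alpha$, and the statement should be read in that sense.

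\textbf{Step 3 (TV closeness).} The joint densities of $M_1$ and $M_2$ differ only on the set where $g>0$. Shifting $Y$ changes the conditional law of $Y$ there by at most total variation $1$. Hence $d_{TV}(P_{M_1},P_{M_2})\le P_{M_1}(g(X,Z)>0)\le \int_{\mathrm{supp}\,g} p(x',z)\,dx'\,dz$. This can be made at most $2\eta\cdot\mathrm{vol}$, or just smaller than $\delta$, by taking $\eta$ small and the support of $g$ thin. The crucial point is that this bound does not depend on $A$, so Steps 2 and 3 decouple.

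\textbf{Main obstacle.} The delicate part is the order of the quantifiers. Shrinking $\eta$ to get TV below $\delta$ can shrink $\alpha$, and in the continuous case the bump must be thin in $x'$ while keeping $\int g(x,z)p(z)\,dz$ bounded below, which requires $g(x,\cdot)=1$ exactly at the intervention value. We resolve this by choosing $\eta$ and the support first, to achieve TV $<\delta$, and only then choosing $A=\Delta/\alpha(\eta)$, which is legitimate because TV is insensitive to $A$. The general-graph case needs only that the ratio $P(y_x\text{-relevant region})/P(\text{observed region})$ is unbounded. This follows from the blow-up of $w_P$ exhibited in \Cref{thm:ci_stability_bound}, with the perturbation again placed in the outcome mechanism, or in a non-$\C$ mechanism as in \Cref{rem:clean_lb:Y_in_C}.
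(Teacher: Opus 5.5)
Your proposal is correct and follows essentially the same construction as the paper. Both add $A\,g$ to the outcome mechanism with $g$ concentrated where $p(x,z)$ is small, read off $\Psi(M_2)-\Psi(M_1)=A\int g(x,z)\,p(z)\,dz$ from the back-door formula, and handle general graphs only by reduction to the back-door case. The one real difference is how $d_{TV}$ is controlled: the paper takes $g_k=\exp\{-k\,p(x,z)\}$ and gets $d_{TV}\to 0$ by dominated convergence at fixed $A$, and its step $g_k\to 1$ on $Z_0$ tacitly needs $p(x,\cdot)=0$ there. Your compactly supported bump instead gives $d_{TV}\le P(g(X,Z)>0)$ independently of $A$, and needs only $P(Z\in Z_\eta)>0$. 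To keep continuity and thin support compatible, impose $g\equiv 1$ on a compact positive-mass subset of $Z_\eta$ rather than on all of $Z_\eta$.
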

\begin{proof}
Without loss of generality, assume the graph $Z \to X$, $\{X,Z\} \to Y$ (any diagram with $X$ an ancestor of $Y$ admits a back-door through some set $Z$). Let $p(x,z)$ denote the marginal density of $(X,Z)$ under $P$, and let $\Psi = \3E[Y \mid do(X=x)]$ be the target causal effect.

\textbf{Construction.} Fix $f \colon \Omega_X \times \Omega_Z \to \3R$ bounded, and $\sigma^2 > 0$. Define two SCMs with the same treatment and confounder mechanisms as $P$, and outcome equations:
\begin{align}
    M_1: \quad Y &= f(X, Z) + \epsilon, \quad \epsilon \sim \mathcal{N}(0, \sigma^2), \\
    M_2: \quad Y &= f(X, Z) + A \cdot g_k(X, Z) + \epsilon, \quad \epsilon \sim \mathcal{N}(0, \sigma^2),
\end{align}
where $g_k(x,z) = \exp(-k \cdot p(x,z))$ for $k > 0$ and $A \in \3R$ is chosen below.

\textbf{Observational closeness.} The observational density under $M_i$ factors as $p_{M_i}(x,z,y) = p(x,z) \cdot p_{M_i}(y \mid x,z)$. The conditional under $M_1$ is $\mathcal{N}(f(x,z), \sigma^2)$; under $M_2$ it is $\mathcal{N}(f(x,z) + A g_k(x,z), \sigma^2)$. On the set $\{(x,z) : p(x,z) \geq \eta\}$ for any $\eta > 0$, we have $g_k(x,z) \leq \exp(-k\eta) \to 0$ as $k \to \infty$. Thus $p_{M_2}(y \mid x,z) \to p_{M_1}(y \mid x,z)$ pointwise on a set of measure arbitrarily close to 1. By dominated convergence, $d_{\mathrm{TV}}(P_{M_1}, P_{M_2}) \to 0$ as $k \to \infty$. Hence for any $\delta > 0$, we can choose $k$ large enough so that $d_{\mathrm{TV}}(P_{M_1}, P_{M_2}) < \delta$.

\textbf{Causal divergence.} By the back-door formula, $\Psi(M) = \int \3E_M[Y \mid X=x, Z=z] \, p(z) dz$. Thus
\begin{align}
    \Psi(M_2) - \Psi(M_1) = A \int g_k(x,z) \, p(z) dz =: A \cdot \alpha_k.
\end{align}
By imperfect overlap, there exists $Z_0$ with $\alpha := P(Z \in Z_0) > 0$ and $p(x,z)$ arbitrarily small on $Z_0$. On $Z_0$, $g_k(x,z) \to 1$ as $k \to \infty$; on the complement, $g_k \leq 1$. By bounded convergence, $\alpha_k \to \int_{Z_0} p(z) dz = \alpha > 0$ as $k \to \infty$. For $k$ large, $\alpha_k \geq \alpha/2 > 0$. Setting $A = 2\Delta/\alpha$ yields $|\Psi(M_2) - \Psi(M_1)| \geq \Delta$.
\end{proof}

\textbf{\Cref{thm:model_selective_discontinuity} restated.}
Let $T$ be model-selective in the sense of \Cref{def:model_selective}.
    Let $\3M$ be a nonparametric hypothesis class of SCMs over continuous variables with $X\in An_{\G}(Y)$.
    Under imperfect overlap, there exist observational distributions $P$ and sequences $Q_n \to P$ in $d_{TV}$ such that
    \[
        \limsup_{n\to\infty} \ \bigl| \ T(\Pi_{Q_n}) - T(\Pi_P) \ \bigr| \geq \Delta
    \]
    for some $\Delta > 0$.
    In particular, the map $P \mapsto T(\Pi_P)$ is not continuous at such $P$.

\begin{proof}
We exhibit, for any $P$ and any $\Delta > 0$, a sequence of distributions $Q_n \to P$ (in $d_{\mathrm{TV}}$) such that $|\Psi(M_n^*) - \Psi(M^*)| \geq \Delta$ for some choices $M_n^* \in \arg\max_{M} L_{Q_n}(M)\Pi_0(M)$ and $M^* \in \arg\max_{M} L_P(M)\Pi_0(M)$.

By \Cref{lem:fragility}, fix $\delta > 0$ arbitrarily small and choose $M_1, M_2 \in \3M$ such that $d_{\mathrm{TV}}(P_{M_1}, P_{M_2}) < \delta$ and $|\Psi(M_1) - \Psi(M_2)| \geq \Delta$. Set $P = P_{M_1}$ and $Q = P_{M_2}$. Then $d_{\mathrm{TV}}(P, Q) < \delta$.

Under $P = P_{M_1}$, the likelihood $L_P(M_1)$ is maximal (the true model). Under $Q = P_{M_2}$, the likelihood $L_Q(M_2)$ is maximal. Assume the prior $\Pi_0$ assigns positive mass to both $M_1$ and $M_2$ (generic for non-parametric classes). Then:
\begin{itemize}
    \item For the selection under $P$: $M^* \in \arg\max L_P(M)\Pi_0(M)$ includes $M_1$, so we may take $M^* = M_1$ and $\Psi(M^*) = \Psi(M_1)$.
    \item For the selection under $Q$: $M^*_Q \in \arg\max L_Q(M)\Pi_0(M)$ includes $M_2$, so we may take $M^*_Q = M_2$ and $\Psi(M^*_Q) = \Psi(M_2)$.
\end{itemize}
Thus $|\Psi(M^*_Q) - \Psi(M^*)| = |\Psi(M_2) - \Psi(M_1)| \geq \Delta$, while $d_{\mathrm{TV}}(P, Q) < \delta$. Taking $\delta \to 0$ yields a sequence $Q_n \to P$ with $|\Psi(M^*_{Q_n}) - \Psi(M^*)| \geq \Delta$, so the map $P \mapsto \hat\Psi(P)$ is discontinuous at $P$.
\end{proof}

\subsection{Proofs of statements in \Cref{sec:mean}}

\textbf{\Cref{cor:bound_posterior_mean} restated.} Let $\Psi : \3M \to \3R$ be defined by $\Psi(M) = \3E_{P_M}[Y_x]$, and assume $\sup_M |\Psi(M)| \leq K$. Under the hypotheses of \Cref{thm:posterior_stability},
\begin{align}
    \Big| \, \mathbb{E}_{\Pi_P}[\Psi] - \mathbb{E}_{\Pi_Q}[\Psi] \, \Big| \leq 2K \cdot C \cdot n \cdot \delta_n \cdot \epsilon.
\end{align}

\begin{proof}
    Combining \Cref{lemma:exp_diff_bound} and \Cref{lemma:pos_tv_bound},
    \begin{align}
        \Big| \, \mathbb{E}_{\Pi_P}[\Psi] - \mathbb{E}_{\Pi_Q}[\Psi] \, \Big| 
        \leq 2K \cdot d_{TV}(\Pi_P, \Pi_Q) 
        \leq 2K \cdot C \cdot n \cdot \delta_n \cdot \epsilon.
    \end{align}
    Equivalently, the map $P \mapsto \mathbb{E}_{\Pi_P}[\Psi]$ is Lipschitz in $\epsilon$ with constant $L_\Psi = O(n\delta_n)$, so the error in the causal estimate vanishes as $\epsilon \to 0$.
\end{proof}

\begin{theorem}[Posterior Variance-Sensitivity Bound]
    \label{thm:posterior_variance_sensitivity_app}
    Let $S(M) = \ln L_P(M) - \ln L_Q(M)$ be the score function and let $\{\Pi_t\}_{t \in [0,1]}$ be the geometric tilting path with $\Pi_0 = \Pi_P$, $\Pi_1 = \Pi_Q$, and Radon--Nikodym derivative
    \begin{align}
        \frac{d\Pi_t}{d\Pi_P}(M) = \frac{e^{-tS(M)}}{Z_t}, \qquad Z_t := \mathbb{E}_{\Pi_P}\bigl[e^{-tS(M)}\bigr].
    \end{align}
    Then the deviation in the posterior mean of the causal effect between $P$ and $Q$ is bounded by
    \begin{align}
        \Big|\3E_{\Pi_P}[\Psi] - \3E_{\Pi_Q}[\Psi]\Big| 
        &\leq \int_0^1 \sqrt{\mathrm{Var}_{\Pi_t}(\Psi(M))}\, \sqrt{\mathrm{Var}_{\Pi_t}(S(M))}\, dt \\
        &\leq \sup_{t \in [0,1]} \sqrt{\mathrm{Var}_{\Pi_t}(\Psi(M))}\, \sqrt{\mathrm{Var}_{\Pi_t}(S(M))}.
    \end{align}
    To leading order in $\epsilon$, the right-hand side equals $\sqrt{\mathrm{Var}_{\Pi_P}(\Psi(M))}\, \sqrt{\mathrm{Var}_{\Pi_P}(S(M))} + o(\epsilon)$.
\end{theorem}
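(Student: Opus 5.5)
The plan is to interpolate between the two posteriors along the geometric tilting path and differentiate the posterior mean in $t$. Define $h(t) := \3E_{\Pi_t}[\Psi]$, so that $h(0) = \3E_{\Pi_P}[\Psi]$ and $h(1) = \3E_{\Pi_Q}[\Psi]$.

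First we check the endpoint $t=1$. Since $e^{-S(M)} = L_Q(M)/L_P(M)$, the density $d\Pi_1/d\Pi_P$ equals $(L_Q/L_P)\cdot (Z_P/Z_Q)$, which by \Cref{lemma:stab_ev_ratio} is exactly $d\Pi_Q/d\Pi_P$. Hence $\Pi_1 = \Pi_Q$.

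Next we verify that differentiation under the integral is legitimate. By \Cref{lemma:variance_score}(i), $|S| \le nD\epsilon$ holds $\Pi_P$-a.s., and $|\Psi| \le K$. So all integrands $e^{-tS}$, $\Psi e^{-tS}$ and $S\Psi e^{-tS}$ are uniformly bounded on $[0,1]$, and dominated convergence applies. Writing $h(t) = \3E_{\Pi_P}[\Psi e^{-tS}]/Z_t$ and using $Z_t'/Z_t = -\3E_{\Pi_t}[S]$, we obtain the exponential-family identity
\begin{align}
    h'(t) = -\3E_{\Pi_t}[\Psi S] + \3E_{\Pi_t}[\Psi]\,\3E_{\Pi_t}[S] = -\mathrm{Cov}_{\Pi_t}(\Psi, S).
\end{align}
The fundamental theorem of calculus gives $h(0)-h(1) = \int_0^1 \mathrm{Cov}_{\Pi_t}(\Psi,S)\,dt$. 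Applying Cauchy--Schwarz to each covariance yields the first (integral) inequality, and bounding the integrand by its supremum over $[0,1]$ gives the second.

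The leading-order claim is the step needing the most care. We show that the integrand at $t$ equals its value at $t=0$ up to $o(\epsilon)$. Since $|tS|\le nD\epsilon$, the density $e^{-tS}/Z_t$ lies in $[e^{-2nD\epsilon}, e^{2nD\epsilon}]$. Consequently, for any bounded $f$,
\[
\bigl|\3E_{\Pi_t}[f]-\3E_{\Pi_P}[f]\bigr| \le \|f\|_\infty\,(e^{2nD\epsilon}-1) = O(n\epsilon)\|f\|_\infty .
\]
Apply this to $\Psi$, $\Psi^2$, $S$ and $S^2$.
\begin{itemize}
\item For $\Psi$ and $\Psi^2$: $\mathrm{Var}_{\Pi_t}(\Psi) = \mathrm{Var}_{\Pi_P}(\Psi) + O(K^2 n\epsilon)$.
\item For $S$ and $S^2$: since $\|S\|_\infty = O(n\epsilon)$, the variance changes by $O(n\epsilon)\cdot O(n^2\epsilon^2)$, i.e. $\mathrm{Var}_{\Pi_t}(S) = \mathrm{Var}_{\Pi_P}(S)\,(1+O(n\epsilon))$ in relative terms.
\end{itemize}
Multiplying the square roots, the product at $t$ equals the product at $0$ plus a correction of order $O(n\epsilon)\cdot\sqrt{\mathrm{Var}_{\Pi_P}(S)} = O(n^2\epsilon^2)$, using $\sqrt{\mathrm{Var}_{\Pi_P}(S)} \le nD\epsilon$. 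In the regime $n\epsilon\to 0$ this correction is $o(\epsilon)$ at fixed $n$.

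The delicate point is the variance of $\Psi$. If $\mathrm{Var}_{\Pi_P}(\Psi)$ is small, the additive $O(K^2 n\epsilon)$ perturbation could dominate under the square root. To avoid this, we do not take square roots term by term. Instead we bound the covariance directly:
\[
\bigl|\mathrm{Cov}_{\Pi_t}(\Psi,S) - \mathrm{Cov}_{\Pi_P}(\Psi,S)\bigr| \le \|\Psi\|_\infty \|S\|_\infty \,O(n\epsilon) = O(n^2\epsilon^2) = o(\epsilon).
\]
We then apply Cauchy--Schwarz only at $t=0$, which gives the stated leading-order form $\sqrt{\mathrm{Var}_{\Pi_P}(\Psi)}\sqrt{\mathrm{Var}_{\Pi_P}(S)} + o(\epsilon)$.
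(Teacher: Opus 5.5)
Your first three steps are exactly the paper's argument, and they correctly give both displayed inequalities: identifying $\Pi_1=\Pi_Q$ through $Z_1=Z_Q/Z_P$, the derivative $h'(t)=-\mathrm{Cov}_{\Pi_t}(\Psi,S)$ justified by $|S|\le nD\epsilon$, and Cauchy--Schwarz inside the integral.

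The gap is in the last step. The final sentence of the statement is a claim about the right-hand side: that $\int_0^1$ and $\sup_t$ of $\sqrt{\mathrm{Var}_{\Pi_t}(\Psi)}\sqrt{\mathrm{Var}_{\Pi_t}(S)}$ equal the $t=0$ product up to $o(\epsilon)$. Your covariance-perturbation argument never controls these quantities. What it establishes is
\[
\bigl|\mathbb{E}_{\Pi_P}[\Psi]-\mathbb{E}_{\Pi_Q}[\Psi]\bigr|\le\sqrt{\mathrm{Var}_{\Pi_P}(\Psi)}\sqrt{\mathrm{Var}_{\Pi_P}(S)}+O(Kn^2D^2\epsilon^2),
\]
which is a bound on the left-hand side. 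That conclusion is correct and useful: it is the main-text form of the theorem, with an explicit remainder. But saying it ``gives the stated leading-order form'' conflates it with the asserted equality. Because you abandoned the direct route, that equality remains unproved.

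The worry that made you abandon it is unfounded, because variances under boundedly equivalent measures are comparable \emph{multiplicatively}. With $B:=nD\epsilon$ you already have $d\Pi_t/d\Pi_P=e^{-tS}/Z_t\in[e^{-2B},e^{2B}]$. So for any bounded $f$,
\[
\mathrm{Var}_{\Pi_t}(f)\le\mathbb{E}_{\Pi_t}\bigl[(f-\mathbb{E}_{\Pi_P}f)^2\bigr]\le e^{2B}\,\mathrm{Var}_{\Pi_P}(f),
\]
and symmetrically $\mathrm{Var}_{\Pi_P}(f)\le e^{2B}\,\mathrm{Var}_{\Pi_t}(f)$. In particular, a small $\mathrm{Var}_{\Pi_P}(\Psi)$ stays small along the path. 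This comparison, not your absolute $O(B^3)$ estimate, is also what justifies the relative form you wrote for $\mathrm{Var}(S)$.

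Hence the product at $t$ lies within a factor $e^{\pm 2B}$ of the product at $t=0$, which is at most $KB$. The two therefore differ by $O(KB^2)=O(Kn^2D^2\epsilon^2)=o(\epsilon)$ at fixed $n$, uniformly in $t$. This gives the equality for both the integral and the supremum.

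Your additive estimates would also have sufficed once combined with $|\sqrt a-\sqrt b|\le\sqrt{|a-b|}$. The $\Psi$-factor moves by $O(K\sqrt B)$, but it multiplies $\sqrt{\mathrm{Var}_{\Pi_t}(S)}\le B$. The $S$-factor moves by $O(B^{3/2})$. So the product moves by $O(KB^{3/2})=o(\epsilon)$ at fixed $n$.

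For comparison, the paper handles this step with a one-line heuristic: an $O(\epsilon)$ tilt gives $O(\epsilon)$ corrections to both variances. Taken literally, that is too coarse, since an $O(\epsilon)$ additive change in $\mathrm{Var}(S)$ could move $\sqrt{\mathrm{Var}(S)}$ by $O(\sqrt{\epsilon})$. The scale-aware comparison above is what actually makes the claim rigorous.
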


\begin{proof}
    \emph{Path connecting the posteriors.} At $t = 0$, $\Pi_0 = \Pi_P$ trivially, and at $t = 1$, $L_Q/L_P = e^{-S}$ together with $Z_1 = Z_Q/Z_P$ (\Cref{lemma:stab_ev_ratio}) gives
    \begin{align}
        \frac{d\Pi_1}{d\Pi_P}(M) = \frac{e^{-S(M)}}{Z_Q/Z_P} = \frac{L_Q(M)/L_P(M)}{Z_Q/Z_P} = \frac{L_Q(M) / Z_Q}{L_P(M) / Z_P} = \frac{d\Pi_Q}{d\Pi_P}(M),
    \end{align}
    so $\Pi_1 = \Pi_Q$.

    \emph{Differentiating along the path.} Set $\bar\Psi(t) := \mathbb{E}_{\Pi_t}[\Psi(M)] = Z_t^{-1}\, \mathbb{E}_{\Pi_P}[\Psi(M) e^{-tS(M)}]$. By the quotient rule and dominated convergence (justified by $|S(M)| \leq nD\epsilon$ a.s., \Cref{lemma:variance_score}(i)),
    \begin{align}
        \frac{d \bar\Psi(t)}{dt} 
        &= \frac{-\mathbb{E}_{\Pi_P}[\Psi(M) S(M) e^{-tS(M)}]}{Z_t} - \frac{\mathbb{E}_{\Pi_P}[\Psi(M) e^{-tS(M)}]}{Z_t} \cdot \frac{-\mathbb{E}_{\Pi_P}[S(M) e^{-tS(M)}]}{Z_t}\\
        &= -\mathbb{E}_{\Pi_t}[\Psi(M) S(M)] + \mathbb{E}_{\Pi_t}[\Psi(M)] \cdot \mathbb{E}_{\Pi_t}[S(M)]\\
        &= -\mathrm{Cov}_{\Pi_t}\bigl(\Psi(M), S(M)\bigr).
    \end{align}

    \emph{Integration and Cauchy--Schwarz.} Integrating along the path,
    \begin{align}
        \mathbb{E}_{\Pi_Q}[\Psi] - \mathbb{E}_{\Pi_P}[\Psi] = \bar\Psi(1) - \bar\Psi(0) = -\int_0^1 \mathrm{Cov}_{\Pi_t}\bigl(\Psi(M), S(M)\bigr)\, dt.
    \end{align}
    By Cauchy--Schwarz, $|\mathrm{Cov}_{\Pi_t}(\Psi, S)| \leq \sqrt{\mathrm{Var}_{\Pi_t}(\Psi)}\sqrt{\mathrm{Var}_{\Pi_t}(S)}$, and bounding the integral by its supremum gives the claimed bound.

    \emph{Leading-order behavior.} Along the path, $\Pi_t$ deviates from $\Pi_P$ by an $O(t \cdot nD\epsilon) = O(\epsilon)$ tilt in the relative density, so $\mathrm{Var}_{\Pi_t}(\Psi)$ and $\mathrm{Var}_{\Pi_t}(S)$ deviate from their values at $\Pi_P$ by $O(\epsilon)$ corrections. Hence
    \begin{align}
        \sup_{t \in [0,1]} \sqrt{\mathrm{Var}_{\Pi_t}(\Psi)}\sqrt{\mathrm{Var}_{\Pi_t}(S)} = \sqrt{\mathrm{Var}_{\Pi_P}(\Psi)}\sqrt{\mathrm{Var}_{\Pi_P}(S)} + o(\epsilon),
    \end{align}
    proving the leading-order claim.
\end{proof}

\begin{remark}
    Combining \Cref{thm:posterior_variance_sensitivity_app} with \Cref{lemma:variance_score}(ii) and the Lipschitz continuity of $\Psi$ on $\3M$ (\Cref{prop:continuity_causal_functional}, which gives $\sqrt{\mathrm{Var}_{\Pi_P}(\Psi(M))} = O(\delta_n)$ under posterior contraction at rate $\delta_n$), one recovers the heuristic scalings $\sqrt{\mathrm{Var}_{\Pi_P}(\Psi(M))} \sim \delta_n$ and $\sqrt{\mathrm{Var}_{\Pi_P}(S(M))} \sim n D \delta_n \epsilon$ stated in \Cref{sec:mean}, and hence the bound
    \begin{align}
        \Big|\3E_{\Pi_P}[\Psi] - \3E_{\Pi_Q}[\Psi]\Big| \leq C \cdot n \delta_n^2 \epsilon + o(n \delta_n^2 \epsilon),
    \end{align}
    for an absolute constant $C$. This is sharper than \Cref{cor:bound_posterior_mean} by a factor of $\delta_n$ in the regime where the posterior contracts.
\end{remark}


\end{document}